\theoremstyle{plain}
\newtheorem{theorem}{Theorem}[section]
\newtheorem{proposition}[theorem]{Proposition}
\theoremstyle{definition}
\theoremstyle{remark}
\newtheorem{remark}[theorem]{Remark}
\icmltitlerunning{Improving Adversarial Robustness Through the Contrastive-Guided Diffusion Process}
\begin{document}

\twocolumn[
% \icmltitle{TSVGD: Tabular Stein Variational Gradient Descent \\for Failure Case Simulation}
\icmltitle{Improving Adversarial Robustness Through the \\ Contrastive-Guided Diffusion Process}

% It is OKAY to include author information, even for blind
% submissions: the style file will automatically remove it for you
% unless you've provided the [accepted] option to the icml2023
% package.

% List of affiliations: The first argument should be a (short)
% identifier you will use later to specify author affiliations
% Academic affiliations should list Department, University, City, Region, Country
% Industry affiliations should list Company, City, Region, Country

% You can specify symbols, otherwise they are numbered in order.
% Ideally, you should not use this facility. Affiliations will be numbered
% in order of appearance and this is the preferred way.
\icmlsetsymbol{equal}{*}

\begin{icmlauthorlist}
\icmlauthor{Yidong Ouyang}{y}
\icmlauthor{Liyan Xie}{y}
\icmlauthor{Guang Cheng}{sch}
% % \icmlauthor{Firstname3 Lastname3}{comp}
% % \icmlauthor{Firstname4 Lastname4}{sch}
% % \icmlauthor{Firstname5 Lastname5}{yyy}
% % \icmlauthor{Firstname6 Lastname6}{sch,yyy,comp}
% % \icmlauthor{Firstname7 Lastname7}{comp}
% %\icmlauthor{}{sch}
% % \icmlauthor{Firstname8 Lastname8}{sch}
% % \icmlauthor{Firstname8 Lastname8}{yyy,comp}
% %\icmlauthor{}{sch}
% %\icmlauthor{}{sch}
\end{icmlauthorlist}

\icmlaffiliation{y}{School of Data Science, The Chinese University of Hong Kong, Shenzhen, China}
\icmlaffiliation{sch}{Department of Statistics, University of California, Los Angeles, USA}
% \icmlaffiliation{comp}{Company Name, Location, Country}
% \icmlaffiliation{sch}{School of ZZZ, Institute of WWW, Location, Country}

% \icmlcorrespondingauthor{Yidong Ouyang}{yidongouyang@link.cuhk.edu.cn}
\icmlcorrespondingauthor{Liyan Xie}{xieliyan@cuhk.edu.cn}
% \icmlcorrespondingauthor{Guang Cheng}{guangcheng@ucla.edu}

% You may provide any keywords that you
% find helpful for describing your paper; these are used to populate
% the "keywords" metadata in the PDF but will not be shown in the document
\icmlkeywords{Machine Learning, ICML}

\vskip 0.3in
]

% this must go after the closing bracket ] following \twocolumn[ ...

% This command actually creates the footnote in the first column
% listing the affiliations and the copyright notice.
% The command takes one argument, which is text to display at the start of the footnote.
% The \icmlEqualContribution command is standard text for equal contribution.
% Remove it (just {}) if you do not need this facility.

%\printAffiliationsAndNotice{} % leave blank if no need to mention equal contribution
\printAffiliationsAndNotice{} % otherwise use the standard text.

\begin{abstract}
Synthetic data generation has become an emerging tool to help improve the adversarial robustness in classification tasks, since robust learning requires a significantly larger amount of training samples compared with standard classification. Among various deep generative models, the diffusion model has been shown to produce high-quality synthetic images and has achieved good performance in improving the adversarial robustness. However, diffusion-type methods are generally slower in data generation as compared with other generative models. Although different acceleration techniques have been proposed recently, it is also of great importance to study how to improve the sample efficiency of synthetic data for the downstream task. In this paper, we first analyze the optimality condition of synthetic distribution for achieving improved robust accuracy. We show that enhancing the distinguishability among the generated data is critical for improving adversarial robustness. Thus, we propose the Contrastive-Guided Diffusion Process (Contrastive-DP), which incorporates the contrastive loss to guide the diffusion model in data generation. We validate our theoretical results using simulations and demonstrate the good performance of Contrastive-DP on image datasets.
\end{abstract}

\section{Introduction}\label{sec:intro}

The success of most deep learning methods relies heavily on a massive amount of training data, which can be expensive to acquire in practice. For example, in applications like autonomous driving \citep{OKelly2018ScalableEA} and medical diagnosis \citep{Das2022ConditionalSD}, the number of rare scenes is usually very limited in the training dataset. Moreover, the number of labeled data for supervised learning could also be limited in some applications since it may be expensive to label the data. These challenges call for methods that can produce additional data that are easy to generate and can help improve downstream task performance. Synthetic data generation based on deep generative models has shown promising performance recently to tackle these challenges \citep{sehwag2021improving,gowal2021improving,Das2022ConditionalSD}.

In synthetic data generation, one aims to learn a {\it synthetic distribution} (from which we generate synthetic data) that is close to the true date-generating distribution and, most importantly, can help improve the downstream task performance. Synthetic data generation is highly related to generative models. Among various kinds of generative models, the score-based model and diffusion type models have gained much success in image generation recently \citep{song2019generative, Song2021ScoreBasedGM,song2020sliced,song2020improved,sohl2015deep,nichol2021improved,Bao2022AnalyticDPMAA,rombach2022high,Nie2022DiffusionMF,Sun2022PointDPDP}. As validated in image datasets, the prototype of diffusion models, the Denoising Diffusion Probabilistic Model (DDPM) \citep{Ho2020DDPM}, and many variants can generate high-quality images as compared with classical generative models such as generative adversarial networks \citep{Dhariwal2021DiffusionMB}. 

This paper mainly focuses on the adversarial robust classification of image data, which typically requires more training data than standard classification tasks \cite{Carmon2019UnlabeledDI}. In \cite{gowal2021improving}, 100M high-quality synthetic images are generated by DDPM and achieve the state-of-the-art performance on adversarial robustness on the CIFAR-10 dataset, which demonstrates the effectiveness of diffusion models in improving adversarial robustness. However, a major drawback of diffusion-type methods is the slow computational speed. More specifically, DDPM is usually 1000 times slower than GAN \citep{Song2021DenoisingDI}, and this drawback is more serious when generating a large number of samples, e.g., it takes more than 99 GPU days \footnote{Running on a RTX 4x2080Ti GPU cluster.} for generating 100M image data according to \cite{gowal2021improving}. Moreover, the computational cost will increase dramatically when the resolution of images increases, which inspires a plentiful of works studying how to accelerate the diffusion models \citep{Song2021DenoisingDI,Watson2022LearningFS,ma2022accelerating, Salimans2022ProgressiveDF, Bao2022AnalyticDPMAA,cao2022survey,yang2022diffusion}. In this paper, we aim to study the aforementioned problem from a different perspective -- ``how to generate effective synthetic data that are most helpful for the downstream task?'' We analyze the {\it optimal synthetic distribution} for the downstream tasks to improve the sample efficiency of the generative model. 

We first study the theoretical insights for finding the optimal synthetic distributions for achieving adversarial robustness. Following the setting considered in \cite{Carmon2019UnlabeledDI}, we introduce a family of synthetic distributions controlled by the distinguishability of the representation from different classes. Our theoretical results show that the more distinguishable the representation is for the synthetic data, the higher the classification accuracy we will get. Motivated by the theoretical insights, we propose the Contrastive-Guided Diffusion Process (Contrastive-DP) for efficient synthetic data generation, incorporating the gradient of the contrastive learning loss \citep{Oord2018RepresentationLW, Chuang2020DebiasedCL, Robinson2021ContrastiveLW} into the diffusion process. We conduct comprehensive simulations and experiments on real image datasets to demonstrate the effectiveness of the proposed Contrastive-DP method.

The remainder of the paper is organized as follows. Section~\ref{sec:formulation} presents the problem formulation and preliminaries on diffusion models. Section~\ref{sec:theory} contains the theoretical insights of optimal synthetic distribution under the Gaussian setting. Motivated by the theoretical insights, Section~\ref{sec:contr} proposes a new type of synthetic data generation procedure that combines contrastive learning with diffusion models. Finally, Section~\ref{sec:numerical} conducts extensive numerical experiments to validate the good performance of the proposed generation method on simulation and image datasets. 

\section{Problem Formulation and Preliminaries} \label{sec:formulation}

We first give a brief overview of adversarial robust classification, which is our main focus in this work. It is worth mentioning that the whole framework can be applied to other downstream tasks in general. Denote the feature space as $\mathcal{X}$, the corresponding label space as $\mathcal{Y}$, and the true (joint) data distribution as $\mathcal{D}=\mathcal{D}_{\mathcal{X}\times\mathcal{Y}}$. 

Assume we have labeled training data $\mathcal{D}_{\text {train}}:=\{(\boldsymbol{x}_i,y_i)\}_{i=1}^n$ sampled from $\mathcal{D}$. We aim to learn a robust classifier $f_{\boldsymbol{\theta}}: \mathcal{X} \mapsto \mathcal{Y}$, parameterized by a learnable $\boldsymbol\theta$, that can achieve the minimum adversarial loss:
\begin{equation}\label{eq:robust_classifier}
\min_{\boldsymbol{\theta}} \mathcal L_{adv}(\boldsymbol{\theta}):= \mathbb{E}_{(\boldsymbol{x},y)\sim \mathcal{D}}
\Big(\max_{\boldsymbol{\delta}\in\Delta}\ell(\boldsymbol{x}+\boldsymbol{\delta},y,\boldsymbol{\theta})
\Big),
\end{equation}
where $\ell(\boldsymbol{x},y,\boldsymbol{\theta}) = \mathbf{1}\{y \neq f_\theta(\boldsymbol{x})\}$ is the 0-1 loss function, $\mathbf{1}\{\cdot\}$ is the indicator function, and $\Delta=\{\boldsymbol{\delta}:\left\Vert \boldsymbol{\delta} \right\Vert_\infty\leq\epsilon\}$ is the adversarial set defined using $\ell_\infty$-norm. Intuitively, the solution to problem \eqref{eq:robust_classifier} is a robust classifier that minimizes the worst-case loss within an $\epsilon$-neighborhood of the input features.

In the canonical form of adversarial training, we train the robust classifier $f_{\boldsymbol\theta}$ on the training set $\mathcal{D}_{\text {train}}:=\{(\boldsymbol{x}_i,y_i)\}_{i=1}^n$ by solving the following sample average approximation of the population loss in \eqref{eq:robust_classifier}:
\begin{equation}\label{eq:adv_train}
 \min_{\boldsymbol{\theta}} \widehat{ \mathcal L}_{adv}(\boldsymbol{\theta}) :=\frac1n \sum_{i=1}^n \max_{\boldsymbol{\delta}_i\in\Delta} \ell(\boldsymbol{x}_i+\boldsymbol{\delta}_i,y_i,\boldsymbol{\theta}).
\end{equation}

\subsection{Adversarial Training Using Synthetic Data}\label{sec:2.1}

As shown in \cite{Carmon2019UnlabeledDI}, adversarial training requires more training data in order to achieve the desired accuracy. 
Synthetic data generation has been used as a method to artificially increase the size of the training set by generating a sufficient amount of additional data, thus helping improve the learning algorithm's performance \citep{gowal2021improving}. 

The mainstream generation procedures can be categorized into two types: {\it unconditional} and {\it conditional} generation. In the unconditional generation, we first generate the features ($\boldsymbol{x}$) and then assign pseudo labels to them. In the conditional generation, we generate the features conditioned on the desired label. %The former pipeline requires using the self-training paradigm, which is harder to analyze.
Our analysis is mainly based on the former paradigm, which can be easily generalized to the conditional generation procedure, and our proposed algorithm is also flexible enough for both pipelines.

Denote the distribution of the generated features as $\widetilde{\mathcal{D}}_{\mathcal{X}}$ and the generated synthetic data as $\mathcal{D}_{\text {syn}}:=\{(\tilde{\boldsymbol{x}}_i,\tilde{y}_i)\}_{i=1}^{\tilde{n}}$. Here the feature values $\tilde{\boldsymbol{x}}_i$ are generated from the synthetic distribution $\widetilde{\mathcal{D}}_{\mathcal{X}}$, and $\tilde{y}_i$ are pseudo labels assigned by a classifier learned on the training data $\mathcal{D}_{\text {train}}$. Combining the synthetic and real data, we will learn the robust classifier using a larger training set $\mathcal{D}_{\text{all}}:=\mathcal{D}_{\text {train}}\cup\mathcal{D}_{\text {syn}}$ which now contains $n+\tilde{n}$ samples:
\begin{equation}\label{eq:adv_train_syn}
\begin{split}
 \min_{\boldsymbol{\theta}}\left\{ \right.&\eta \left(\frac1n \sum_{i=1}^n \max_{\boldsymbol{\delta}_i\in\Delta} \ell(\boldsymbol{x}_i+\boldsymbol{\delta}_i,y_i,\boldsymbol{\theta})\right) + \\
 &\left.(1-\eta) \left(\frac{1}{ \tilde n} \sum_{i=1}^n \max_{\boldsymbol{\delta}_i\in\Delta} \ell(\tilde{\boldsymbol{x}}_i+\boldsymbol{\delta}_i,\tilde y_i,\boldsymbol{\theta})\right) \right\},
 \end{split}
\end{equation}
where $\eta\in(0,1)$ is a parameter controlling the weights of synthetic data.

\subsection{Diffusion Model for Data Generation}\label{sec:2.2}

We build our proposed generation procedure based on the Denoising Diffusion Probabilistic Model (DDPM) \citep{Ho2020DDPM} and its accelerated variant Denoising Diffusion Implicit Model (DDIM) \citep{Song2021DenoisingDI}. In the following, we briefly review the key components of DDPM. 
%The can accelerate the generating speed of DDPM by 10 to 100 times, with only a slight deterioration of the sample quality. Therefore, we adopt DDIM as the backbone to generate synthetic data. 

The core of DDPM is a forward Markov chain with Gaussian transitions distributions $q(\boldsymbol{x}_t|\boldsymbol{x}_{t-1})$ to inject Gaussian noise to the original data distribution $q(\boldsymbol{x}_0)$. More specifically, \cite{Ho2020DDPM} model the forward Gaussian transition as:
\[
q\left(\boldsymbol{x}_{t}| \boldsymbol{x}_{t-1}\right):=\mathcal{N}\left(\sqrt{\alpha_{t}} \boldsymbol{x}_{t-1},\left(1-\alpha_{t}\right) \mathbb{I}\right),
\]
where $\alpha_t, t = 1,2,\ldots,T$ is a decreasing sequence to control the variance of injected noise, and $\mathbb{I}$ is the identity covariance matrix. The joint likelihood for the above Markov chain can be written as $q\left(\boldsymbol{x}_{0:T}\right)=q\left(\boldsymbol{x}_{0}\right) \prod_{t=1}^T q\left(\boldsymbol{x}_{t} | \boldsymbol{x}_{t-1}\right)$. 
DDPM then propose to use $p_\theta\left(\boldsymbol{x}_{0:T}\right)=p_\theta\left(\boldsymbol{x}_{T}\right) \prod_{t=1}^T p_\theta\left(\boldsymbol{x}_{t-1}| \boldsymbol{x}_{t}\right)$
to model the reverse process, where $p_\theta(\boldsymbol{x}_{t-1}|\boldsymbol{x}_t)$ is parameterized using a neural network. 
The training objective is to minimize the Kullback–Leibler (KL) divergence between the forward and reverse process: $\operatorname{D_{KL}}(q\left(\boldsymbol{x}_{0:T}\right), p_\theta\left(\boldsymbol{x}_{0:T}\right))$, which can be simplified as:
$$
\min_\theta \mathbb{E}_{t, \boldsymbol{x}_{0}, \epsilon}\left[\left\|\boldsymbol{\epsilon}-\boldsymbol{\epsilon}_{\theta}\left(\sqrt{\bar{\alpha}_{t}} \boldsymbol{x}_{0}+\sqrt{1-\bar{\alpha}_{t}} \boldsymbol{\epsilon}, t\right)\right\|^{2}\right],
$$
where the expectation is taken with respect to $\boldsymbol{x}_{0}\sim q(\boldsymbol{x}_{0})$, $\boldsymbol{\epsilon} \sim \mathcal{N}(\boldsymbol{0},\mathbb{I})$, and $t$ uniformly distributed in $\{1,\ldots,T\}$. Here $\bar{\alpha}_{t}=\prod_{s=1}^{t} \alpha_{s}$ and $\boldsymbol{\epsilon}_{{\theta}}(\boldsymbol{x},t)$ denotes the neural network parameterized by ${\theta}$. We refer to \cite{Ho2020DDPM} for the detailed derivation and learning algorithms.

After learning the time-reversed process parameterized by $\theta$, the original generation process in \cite{Ho2020DDPM} is a time-reversed Markov chain as follows:
$$
\begin{aligned}
  \boldsymbol{x}_{t-1}=\frac{1}{\sqrt{\alpha_{t}}} &\left(\boldsymbol{x}_{t}-\frac{1-\alpha_{t}}{\sqrt{1-\bar{\alpha}_{t}}} \boldsymbol{\epsilon}_{{\theta}}\left(\boldsymbol{x}_{t}, t\right)\right)+\sigma_{t} \boldsymbol{z}_t,  
\end{aligned}
$$
starting from $\boldsymbol{x}_T \sim \mathcal{N}(\boldsymbol{0},\mathbb{I})$ and calculating for $t=T,T-1,\ldots,1$. The output value $\boldsymbol{x}_0$ is the generated synthetic data. Here $\boldsymbol{z}_t\sim \mathcal{N}(\boldsymbol{0},\mathbb{I})$ if $t>1$ and $\boldsymbol{z}_t=\boldsymbol{0}$ if $t=1$. 
DDIM \citep{Song2021DenoisingDI} speeds up the above procedure by generalizing the diffusion process to a non-Markovian process, leading to a sampling trajectory much shorter than $T$. DDIM carefully designs the forward transition such that $q\left(\boldsymbol{x}_{t}| \boldsymbol{x}_0\right)=\mathcal{N}\left(\sqrt{\alpha_{t}} \boldsymbol{x}_0,\left(1-\alpha_{t}\right) \mathbb{I}\right)$ for all $t= 1, \ldots,T$. The great advantage of DDIM is that it admits the same training objective as DDPM, which means we can adapt the pre-trained model of DDPM and accelerate the sampling process without additional cost. The key sample-generating step in DDIM is as follows:
\begin{equation}\label{eq:ddpm}
\begin{split}
\boldsymbol{x}_{t-1}=&\sqrt{\alpha_{t-1}} \underbrace{\left(\frac{\boldsymbol{x}_{t}-\sqrt{1-\alpha_{t}} \boldsymbol{\epsilon}_{\theta}\left(\boldsymbol{x}_{t}, t\right)}{\sqrt{\alpha_{t}}}\right)}_{\text {predicted } \boldsymbol{x}_0}\\
&+\underbrace{\sqrt{1-\alpha_{t-1}} \cdot \boldsymbol{\epsilon}_{\theta}\left(\boldsymbol{x}_{t},t\right)}_{\text {pointing to } \boldsymbol{x}_t},
\end{split}
\end{equation}
in which we can generate $\boldsymbol{x}_{t-1}$ using $\boldsymbol{x}_{t}$ and $\boldsymbol{x}_{0}$. Also, the generating process becomes deterministic.

% \begin{algorithm}[!ht]
% 	\caption{Training}
% 	\begin{algorithmic}[1]
% 	\REPEAT
% 		\STATE$\boldsymbol{x}_{0} \sim q\left(\boldsymbol{x}_{0}\right)$ \\
% \STATE $t \sim$ Uniform $(\{1, \ldots, T\})$ \\
% \STATE $\boldsymbol{\epsilon} \sim \mathcal{N}(\mathbf{0}, \mathbb{I})$ \\
% \STATE Take gradient descent step on $\nabla_{\boldsymbol{\theta}}\left\|\boldsymbol{\epsilon}-\boldsymbol{\epsilon}_{\boldsymbol{\theta}}\left(\sqrt{\bar{\alpha}_{t}} \boldsymbol{x}_{0}+\sqrt{1-\bar{\alpha}_{t}} \boldsymbol{\epsilon}, t\right)\right\|^{2}$ \\
%   \UNTIL{converged}.
% 	\end{algorithmic}
% 	\label{alg1}
% \end{algorithm}

% 	\caption{Sampling}
% 	\begin{algorithmic}[1]
% 		\STATE $\boldsymbol{x}_{T} \sim \mathcal{N}(\mathbf{0}, \mathbb{I})$
% 		\STATE $t=T$
%     \WHILE{$t\neq1$}
%       \STATE $\mathbf{z} \sim \mathcal{N}(\mathbf{0}, \mathbb{I})$
%       \STATE $\boldsymbol{x}_{t-1}=\frac{1}{\sqrt{\alpha_{t}}}\left(\boldsymbol{x}_{t}-\frac{1-\alpha_{t}}{\sqrt{1-\bar{\alpha}_{t}}} \boldsymbol{\epsilon}_{\boldsymbol{\theta}}\left(\boldsymbol{x}_{t}, t\right)\right)+\sigma_{t} \mathbf{z}$
%       \STATE $t=t-1$
%     \ENDWHILE
% \RETURN $\boldsymbol{x}_{0}$
% 	\end{algorithmic}
% 	\label{alg1}
% \end{algorithm}

\section{Theoretical Insights: Optimal Synthetic Distribution}\label{sec:theory}

In this section, we consider a concrete distributional model as used in \cite{Carmon2019UnlabeledDI,Schmidt2018AdversariallyRG}, and demonstrate the advantage of refining the synthetic data generation process -- using the optimal distribution for synthetic data generation can help reduce the sample complexity needed for robust classification. This provides theoretical insights and motivates the proposed Contrastive-DP method to be introduced in Section~\ref{sec:contr}.

% \vspace{-0.1in}
\subsection{Theoretical Setup}\label{sec:theo_setup}
%\paragraph{True data generating process.}
%\paragraph{Learning a linear classifier.}
%\paragraph{Self-learning .}\label{self-learning}

Consider a binary classification task where $\mathcal{X}=\mathbb{R}^d, \mathcal{Y}=\{-1,1\}$. The true data distribution $\mathcal{D}$ is specified as follows. The marginal distribution for label $y$ is uniform in $\mathcal{Y}$, and the conditional distribution of features is $\boldsymbol{x}|y\sim \mathcal{N}(y\boldsymbol{\mu},\sigma^2 \mathbb{I}_d)$, where $\boldsymbol{\mu}\in \mathbb{R}^d$ is non-zero, and $\mathbb{I}_d$ is the $d$ dimensional identity covariance matrix. 
Thus the marginal feature distribution $\mathcal{D}_{\mathcal{X}}$ is a Gaussian mixture, for convenience we denote as $0.5\mathcal{N}({\boldsymbol{\mu}},\sigma^2 \mathbb{I})+0.5 \mathcal{N}(-{\boldsymbol{\mu}},\sigma^2 \mathbb{I})$. Suppose we also generate a set of synthetic data from another synthetic distribution $\widetilde{\mathcal{D}}$ which could be different from $\mathcal D$.

We focus on learning a robust linear classifier under the above setting. The family of linear classifiers is represented as $f_{\boldsymbol{\theta}}(\boldsymbol{x})=\operatorname{sign}(\boldsymbol{\theta}^\top\boldsymbol{x})$.
%and the optimal classifier that minimizes the classification error is \citep{Carmon2019UnlabeledDI}
% \begin{equation}
% \boldsymbol{\theta}^*=\mathbb{E}_{(\boldsymbol{x},y) \sim \mathcal{D}} [y\boldsymbol{x}].\label{eq-population}
% \end{equation}
%We would like to analyze the optimal choice of $\widetilde{\mathcal{D}}$ for learning a robust linear classifier. 
Recall that we first generate features and then assign pseudo labels to the features. Therefore, a self-learning paradigm is adopted here \citep{wei2020theoretical}. Given a set of unlabeled synthetic features $\{\tilde{\boldsymbol{x}}_1, \tilde{\boldsymbol{x}}_2, \ldots, \tilde{\boldsymbol{x}}_{\tilde{n}}\}$, we apply an intermediate linear classifier parameterized by $\hat{\boldsymbol{\theta}}_{\text{inter}}=\frac{1}{n}\sum_{i=1}^{n} y_i\boldsymbol{x}_i$, learned from real data $\mathcal{D}_{\text {train}}$, to assign the pseudo-label. 
Then, the synthetic data $\mathcal{D}_{\text {syn}}=\{(\tilde{\boldsymbol{x}}_1,\tilde{y}_1),\ldots, (\tilde{\boldsymbol{x}}_{\tilde{n}},\tilde{y}_{\tilde{n}})\}$, where $\tilde{y}_i=\operatorname{sign}(\hat{\boldsymbol{\theta}}_{\text{inter}}^\top\boldsymbol{x}_i)$, $i = 1,\ldots,\tilde{n}$. We combine the real data and synthetic data $\mathcal{D}_{\text{all}}:=\mathcal{D}_{\text {train}}\cup\mathcal{D}_{\text {syn}} = \{\{(\boldsymbol{x}_i,y_i)\}_{i=1}^n,\{(\tilde{\boldsymbol{x}}_i,\tilde{y}_i)\}_{i=1}^{\tilde{n}} \}$ to obtain an approximate optimal solution $\hat{\boldsymbol{\theta}}_{\text{final}}$ as \cite{Carmon2019UnlabeledDI}:
\begin{equation}\label{eq-empirical-all}
\hat{\boldsymbol{\theta}}_{\text{final}} = \frac{1}{n+\tilde{n}}(\sum_{i=1}^n y_i\boldsymbol{x}_i + \sum_{j=1}^{\tilde{n}} \tilde{y}_j\tilde{\boldsymbol{x}}_j ).
\end{equation} 
%Given $n$ labeled data $\mathcal{D}_{\text {n}}=\{(\boldsymbol{x}_1,y_1),\dots, (\boldsymbol{x}_n,y_n)\}$, we can approximate the optimal classifier using the estimated optimal classifier as follows \begin{equation}\label{eq-empirical}
% \hat{\boldsymbol{\theta}}= \frac{1}{n}\sum_{i=1}^n y_i\boldsymbol{x}_i.
%\end{equation} 
%
Note that the final linear classifier $\hat{\boldsymbol{\theta}}_{\text{final}}$ depends on the synthetic data generated from $\widetilde{\mathcal{D}}$. We aim to study which synthetic distribution $\widetilde{\mathcal{D}}$ can help reduce the adversarial classification error (also called robust error) 
$$
\mathrm{err}_{\text {robust }}(f_{\hat{\boldsymbol{\theta}}_{\text{final}}}):=\mathbb{P}_{(\boldsymbol{x}, y) \sim \mathcal{D}} (\exists \boldsymbol{\delta}\in\Delta,f_{\hat{\boldsymbol{\theta}}_{\text{final}}}\left(\boldsymbol{x}+\boldsymbol{\delta}\right) \neq y),
$$ 
where $\Delta=\{\boldsymbol{\delta}:\left\Vert \boldsymbol{\delta} \right\Vert_\infty\leq\epsilon\}$. 
And we similarly define the standard error as $\mathrm{err}_{\text {standard }}(f_{\hat{\boldsymbol{\theta}}_{\text{final}}}):=\mathbb{P}_{(\boldsymbol{x}, y) \sim \mathcal{D}}(f_{\hat{\boldsymbol{\theta}}_{\text{final}}}(\boldsymbol{x}) \neq y)$ which will be used later.

\begin{remark}[Comparison with existing literature]
In \cite{Carmon2019UnlabeledDI,Deng2021ImprovingAR}, sample complexity results are analyzed based on the same Gaussian setting. The major difference is that they all assume the learned linear classifier $\hat{\boldsymbol{\theta}}_{\text{final}}$ is only learned from synthetic data $\mathcal{D}_{\text {syn}}$ rather than the combination of the real and synthetic data $\mathcal{D}_{\text {all}}$. In general, our theoretical setup matches well with the practical algorithms.
\end{remark}

\subsection{Theoretical Insights for Optimal Synthetic Distribution}\label{sec:insights}

We first study the desired properties of the synthetic distribution $\widetilde{\mathcal{D}}$ that can lead to a better adversarial classification accuracy when the additional synthetic sample $\mathcal{D}_{\text{syn}}$ is used in the training stage. In \cite{Carmon2019UnlabeledDI}, the standard case $\widetilde{\mathcal{D}}=\mathcal{D}$ is studied, i.e., they consider the case that additional unlabeled data from the true distribution $\mathcal{D}$ is available, and they characterize the usefulness of those additional training data. Compared with \cite{Carmon2019UnlabeledDI}, we consider general distributions $\widetilde{\mathcal{D}}$ which does not necessarily equal to $\mathcal{D}$.

%and delve into which kinds of synthetic distribution $\widetilde{\mathcal{D}}$ can lead to a better solution or smaller standard and robust errors. %, even different from the true distribution $\mathcal{D}$,

First note that by the Bayes rule, the optimal decision boundary for the true data distribution is given by $\boldsymbol{\mu}^\top\boldsymbol{x}=0$. Therefore, we restrict our attention to synthetic data distributions that satisfy: (i) the marginal distribution of the label $\tilde{y}$ is also uniform in $\mathcal{Y}$, same as $\mathcal{D}$; (ii) the conditional probability densities $p(\boldsymbol{\tilde{x}}|\tilde{y}=1)$ and $p(\boldsymbol{\tilde{x}}|\tilde{y}=-1)$ of the synthetic data distribution are symmetric around the true optimal decision boundary $\boldsymbol{\mu}^\top\boldsymbol{x}=0$. 
More specifically, we start with a special case of the synthetic data distribution $\widetilde{\mathcal{D}}_\mathcal{X}=0.5\mathcal{N}(\tilde{\boldsymbol{\mu}},\sigma^2 \mathbb{I})+0.5 \mathcal{N}(-\tilde{\boldsymbol{\mu}},\sigma^2 \mathbb{I})$ (note that when $\tilde{\boldsymbol{\mu}}=c\boldsymbol{\mu}$ for some constant $c$, the above two conditions are all satisfied).

In the following proposition, we present several representative scenarios of synthetic distributions in terms of how they may contribute to the downstream classification task. Figure \ref{theorem3_demo} gives a pictorial demonstration for different cases.

\begin{proposition}\label{theorem3}
Consider a special form of synthetic distributions $\widetilde{\mathcal{D}}_\mathcal{X}=0.5\mathcal{N}(\tilde{\boldsymbol{\mu}},\sigma^2 \mathbb{I})+0.5 \mathcal{N}(-\tilde{\boldsymbol{\mu}},\sigma^2 \mathbb{I})$ and assume $\{\tilde{\boldsymbol{x}}_1, \ldots, \tilde{\boldsymbol{x}}_{\tilde{n}}\}$ are samples from $\widetilde{\mathcal{D}}_\mathcal{X}$. 
We follow the self-learning paradigm described in Section \ref{sec:theo_setup} to learn the classifier $f_{\hat{\boldsymbol{\theta}}_{\text{final}}}$, when $\tilde{n}$ is sufficiently large we have:
\begin{enumerate}[start=1,label={ Case \arabic*:},wide=\parindent]
 \item Inefficient $\widetilde{\mathcal{D}}_\mathcal{X}$. When $\langle \tilde{\boldsymbol{\mu}},\boldsymbol{\mu} \rangle$ = 0, the standard error $\mathrm{err}_{\text {standard }}(f_{\hat{\boldsymbol{\theta}}_{\text{final}}})$ achieves the maximum and when $\langle \tilde{\boldsymbol{\mu}},\boldsymbol{\mu}-\varepsilon \mathbf{1}_d \rangle$ = 0, the robust error $\mathrm{err}_{\text {robust }}(f_{\hat{\boldsymbol{\theta}}_{\text{final}}})$ achieves the maximum. 

\item Optimal $\widetilde{\mathcal{D}}_\mathcal{X}$ for clean accuracy. When $\tilde{\boldsymbol{\mu}}=c \boldsymbol{\mu}$ for $c>0$, $\mathrm{err}_{\text {standard }}(f_{\hat{\boldsymbol{\theta}}_{\text{final}}})$ achieves the minimum, and the larger the $c$ is, the smaller the $\mathrm{err}_{\text {standard }}(f_{\hat{\boldsymbol{\theta}}_{\text{final}}})$.

\item Optimal $\widetilde{\mathcal{D}}_\mathcal{X}$ for robust accuracy. When $\tilde{\boldsymbol{\mu}}=c (\boldsymbol{\mu} - \varepsilon \mathbf{1}_d)$ for $c>0$, the robust error $\mathrm{err}_{\text {robust }}(f_{\hat{\boldsymbol{\theta}}_{\text{final}}})$ achieves the minimum, and the larger the $c$ is, the smaller the $\mathrm{err}_{\text {robust }}(f_{\hat{\boldsymbol{\theta}}_{\text{final}}})$.
\end{enumerate}
\end{proposition}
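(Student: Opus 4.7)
The plan is to take the $\tilde{n}\to\infty$ limit of $\hat{\boldsymbol{\theta}}_{\text{final}}$ in closed form and then reduce both error expressions to monotone functions of a cosine similarity, so each case is settled by inspecting how the direction of $\hat{\boldsymbol{\theta}}_{\text{final}}$ responds to the choice of $\tilde{\boldsymbol{\mu}}$. First I would write
\[
\hat{\boldsymbol{\theta}}_{\text{final}} = \frac{n}{n+\tilde{n}}\,\hat{\boldsymbol{\theta}}_{\text{inter}} + \frac{\tilde{n}}{n+\tilde{n}}\cdot\frac{1}{\tilde{n}}\sum_{j=1}^{\tilde{n}}\tilde{y}_j\tilde{\boldsymbol{x}}_j,
\]
and set $\boldsymbol{w}:=\hat{\boldsymbol{\theta}}_{\text{inter}}$. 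As $\tilde{n}\to\infty$ the first summand vanishes while the law of large numbers drives the average to $\mathbb{E}_{\tilde{\boldsymbol{x}}\sim\widetilde{\mathcal{D}}_\mathcal{X}}[\operatorname{sign}(\boldsymbol{w}^\top\tilde{\boldsymbol{x}})\,\tilde{\boldsymbol{x}}]$. Using the $\tilde{\boldsymbol{\mu}}\leftrightarrow -\tilde{\boldsymbol{\mu}}$ symmetry of $\widetilde{\mathcal{D}}_\mathcal{X}$ to collapse the mixture onto a single component and then decomposing the Gaussian noise as $\boldsymbol{z}=u\,\boldsymbol{w}/\|\boldsymbol{w}\|_2+\boldsymbol{z}_\perp$ with $u\sim\mathcal{N}(0,1)$ independent of $\boldsymbol{z}_\perp$, the sign depends only on $u$, the $\boldsymbol{z}_\perp$ contribution vanishes in expectation, and standard truncated-normal identities give
\[
\hat{\boldsymbol{\theta}}_{\text{final}} \;\longrightarrow\; \bigl(2\Phi(c)-1\bigr)\,\tilde{\boldsymbol{\mu}} + 2\sigma\phi(c)\,\frac{\boldsymbol{w}}{\|\boldsymbol{w}\|_2},\qquad c:=\frac{\boldsymbol{w}^\top\tilde{\boldsymbol{\mu}}}{\sigma\|\boldsymbol{w}\|_2}.
\]

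Next I would plug any linear classifier $\boldsymbol{\theta}$ into the two error definitions. A one-line Gaussian computation gives $\mathrm{err}_{\text{standard}}(f_{\boldsymbol{\theta}})=\Phi(-\boldsymbol{\theta}^\top\boldsymbol{\mu}/(\sigma\|\boldsymbol{\theta}\|_2))$, while the worst-case adversary $\boldsymbol{\delta}=-\varepsilon\operatorname{sign}(\boldsymbol{\theta})$ yields $\mathrm{err}_{\text{robust}}(f_{\boldsymbol{\theta}})=\Phi((\varepsilon\|\boldsymbol{\theta}\|_1-\boldsymbol{\theta}^\top\boldsymbol{\mu})/(\sigma\|\boldsymbol{\theta}\|_2))$, which simplifies to $\Phi(-\boldsymbol{\theta}^\top(\boldsymbol{\mu}-\varepsilon\mathbf{1}_d)/(\sigma\|\boldsymbol{\theta}\|_2))$ when $\boldsymbol{\theta}$ has nonnegative coordinates (the generic regime here). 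Because $\Phi$ is strictly increasing, minimizing each error is equivalent to maximizing the cosine $\boldsymbol{\theta}^\top\boldsymbol{v}/\|\boldsymbol{\theta}\|_2$ with $\boldsymbol{v}=\boldsymbol{\mu}$ for the standard error and $\boldsymbol{v}=\boldsymbol{\mu}-\varepsilon\mathbf{1}_d$ for the robust error.

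I would then read off the three cases from the closed form. Driving $\tilde{\boldsymbol{\mu}}$ to infinity along a direction $\boldsymbol{v}$ with $\boldsymbol{w}^\top\boldsymbol{v}>0$ sends $c\to\infty$, so that $2\Phi(c)-1\to 1$ and $\phi(c)\to 0$ exponentially; hence $\hat{\boldsymbol{\theta}}_{\text{final}}/\|\hat{\boldsymbol{\theta}}_{\text{final}}\|_2\to \boldsymbol{v}/\|\boldsymbol{v}\|_2$ and the cosine saturates its Cauchy--Schwarz upper bound $\|\boldsymbol{v}\|_2$. Taking $\boldsymbol{v}=\boldsymbol{\mu}$ gives Case 2 and $\boldsymbol{v}=\boldsymbol{\mu}-\varepsilon\mathbf{1}_d$ gives Case 3, while a direct derivative of the cosine ratio in $c>0$ delivers the claimed monotonicity. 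For Case 1, picking $\tilde{\boldsymbol{\mu}}\perp\boldsymbol{v}$ and $\|\tilde{\boldsymbol{\mu}}\|_2$ large forces the dominant summand of $\hat{\boldsymbol{\theta}}_{\text{final}}$ to be orthogonal to $\boldsymbol{v}$ too, so the cosine tends to $0$ and the error to its maximum value $1/2$.

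The main obstacle I anticipate is the \emph{strict} monotonicity in $c$ inside Cases 2 and 3: the cosine ratio couples the linearly growing term $(2\Phi(c)-1)\tilde{\boldsymbol{\mu}}$ with the exponentially decaying term $2\sigma\phi(c)\boldsymbol{w}/\|\boldsymbol{w}\|_2$, so differentiating in $c$ and handling these opposite tendencies cleanly needs care. A secondary but necessary technical step is to justify the sign pattern of $\hat{\boldsymbol{\theta}}_{\text{final}}$ that converts $\|\hat{\boldsymbol{\theta}}_{\text{final}}\|_1$ into $\mathbf{1}_d^\top\hat{\boldsymbol{\theta}}_{\text{final}}$ in the robust-error calculation, which requires a mild positivity assumption on the coordinates of $\boldsymbol{\mu}$ and $\boldsymbol{\mu}-\varepsilon\mathbf{1}_d$.
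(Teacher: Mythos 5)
You take a genuinely different route from the paper, although the high-level strategy (expand $\hat{\boldsymbol{\theta}}_{\text{final}}$ and reduce both errors to a cosine with a fixed target direction) is the same. The paper works with a finite-sample decomposition
\begin{equation*}
\hat{\boldsymbol{\theta}}_{\text{final}} = \alpha\gamma\tilde{\boldsymbol{\mu}} + (1-\alpha)\boldsymbol{\mu} + \tilde{\boldsymbol{\delta}},
\qquad \alpha=\tfrac{\tilde n}{n+\tilde n},\ \gamma=\tfrac{1}{\tilde n}\textstyle\sum_i(1-2b_i),
\end{equation*}
and then manipulates the squared inverse ratio
\begin{equation*}
\frac{\|\hat{\boldsymbol{\theta}}_{\text{final}}\|^2}{(\boldsymbol{\mu}^\top\hat{\boldsymbol{\theta}}_{\text{final}})^2}
= \frac{1}{\|\boldsymbol{\mu}\|^2}
+ \frac{\|\tilde{\boldsymbol{\delta}}\|^2-\tfrac{1}{\|\boldsymbol{\mu}\|^2}(\boldsymbol{\mu}^\top\tilde{\boldsymbol{\delta}})^2}{\bigl((1-\alpha+c\gamma\alpha)\|\boldsymbol{\mu}\|^2 + \boldsymbol{\mu}^\top\tilde{\boldsymbol{\delta}}\bigr)^2},
\end{equation*}
so that the theoretical optimum $1/\|\boldsymbol{\mu}\|^2$ is isolated, the numerator of the residual is nonnegative by Cauchy--Schwarz, and the $c$-dependence sits only in the squared affine factor in the denominator, whose slope $\gamma\alpha\|\boldsymbol{\mu}\|^2$ is positive. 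That delivers the monotonicity in $c$ in a single line, with no derivative. Your route instead takes the exact $\tilde n\to\infty$ limit via truncated-normal identities, obtaining the clean closed form $(2\Phi(\cdot)-1)\tilde{\boldsymbol{\mu}}+2\sigma\phi(\cdot)\,\boldsymbol{w}/\|\boldsymbol{w}\|$; this is consistent with, and more explicit than, the paper's $\gamma\to 2\Phi(\cdot)-1$ and $\tilde{\boldsymbol{\delta}}\to 2\sigma\phi(\cdot)\,\boldsymbol{w}/\|\boldsymbol{w}\|$. What your approach buys is a precise limit that makes the dependence on the signal-to-noise ratio explicit; what the paper's buys is an algebraic shortcut that sidesteps the calculus.

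The gap you flag at the end is the operative one: driving $\tilde{\boldsymbol{\mu}}\to\infty$ only settles the $c\to\infty$ endpoint, whereas the proposition asserts monotonicity for every $c>0$, and the cosine ratio in your closed form couples a linearly growing term with an exponentially vanishing one, so the derivative argument you sketch is not immediate. The paper's identity above is precisely the tool that discharges this: the residual's numerator is nonnegative and its denominator is a squared increasing affine function of $c$, so the residual is monotone decreasing in $c$ without any derivative bookkeeping. (Your closed form, if you carry it through, also shows that both $\gamma$ and $\|\tilde{\boldsymbol{\delta}}\|$ vary favourably with $c$, which only strengthens the conclusion.) One cosmetic but real issue: you overload $c$ to mean both the proposition's scale factor in $\tilde{\boldsymbol{\mu}}=c\boldsymbol{\mu}$ and the signal-to-noise ratio $\boldsymbol{w}^\top\tilde{\boldsymbol{\mu}}/(\sigma\|\boldsymbol{w}\|)$ inside the truncated-normal formula; keep them notationally distinct to avoid confusing the two roles in the monotonicity step.
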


%\vspace{-0.1in}
\begin{figure}
\centering
  \includegraphics[scale=0.7]{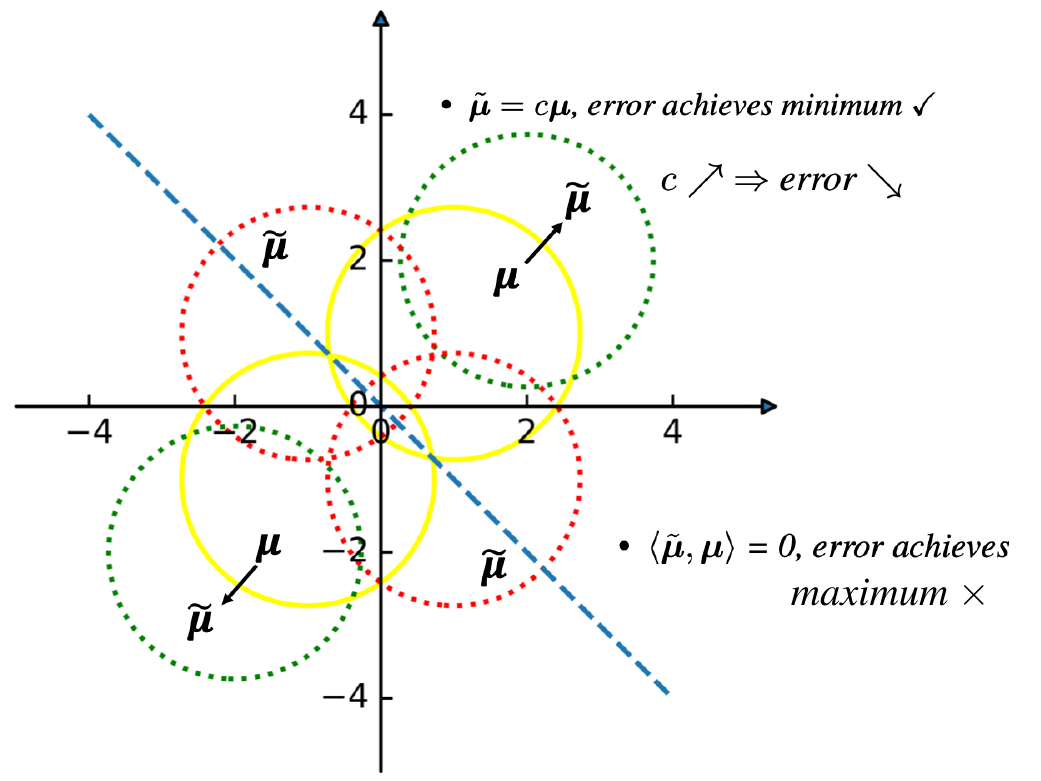}
  \caption{Demonstration of Proposition \ref{theorem3}. We refer to the main text for a detailed explanation.}\label{theorem3_demo}
  % \vspace{-0.1in}
\end{figure}

\begin{remark}[Comparison with the existing characterization of the synthetic distribution]
We briefly comment on the main differences and similarities with \cite{Deng2021ImprovingAR}, in which a similar result was presented in Theorem 4 therein.
In \cite{Deng2021ImprovingAR}, the final solution of $\boldsymbol{\theta}^*$ was given for minimizing robust error $\mathrm{err}_{\text {robust }}(f_{\hat{\boldsymbol{\theta}}_{\text{final}}})$ and they provides a specific unlabeled distribution $\tilde{\boldsymbol{\mu}}=\boldsymbol{\mu}-\varepsilon \mathbf{1}_d$ that achieves better performance under certain condition. In this paper, we propose a general family of optimal distribution controlled by a scalar $c$, which represents the distinguishability of the feature. The final $\boldsymbol{\theta}^*$ used in \cite{Deng2021ImprovingAR} can be viewed as a special case of $c=1$. 
%recovered when $c \to \infty$.
Therefore, our conclusion points out the optimality condition of unlabeled distribution and inspires a line of work to improve the performance of $\hat{\boldsymbol{\theta}}_{\text{final}}$ by making the feature of unlabeled distribution distinguishable.
\end{remark}

% Moreover, we can also quantify the difference between the empirical classifier \ref{eq-empirical-all} and the optimal classifier \ref{eq-population} in the population version using standard concentration inequality:
% $$
% \mathbb{P}(\|\hat{\boldsymbol{\theta}}_{\text{final}}-\boldsymbol{\theta}^*\|>\epsilon)\leq \frac{\operatorname{Var}(\hat{\boldsymbol{\theta}}_{\text{final}})}{\epsilon^2},
% $$
% where $\operatorname{Var}(\hat{\boldsymbol{\theta}}_{\text{final}})$ depends on the synthetic distribution $\tilde{\mathcal{D}}$ chosen. 

% Any class conditional Gaussian distribution, sub-Gaussian distribution, and distribution in the exponential family satisfies the optimality condition. A generalized version of Lemma \ref{theorem1} can be found in Appendix \ref{theorem1-a}.

% comment out tentatively
We also study the sample complexity for the synthetic distributions satisfying the condition in Proposition~\ref{theorem3}. The results below show that for larger $c$, we typically need fewer synthetic samples to achieve the desired robust accuracy.
% Here, we consider the sample complexity needed to achieve non-trivial adversarial robust accuracy rather than guarantee the generalization gap by using the tools like Rademacher Complexity and so on.
\begin{theorem}\label{theorem2}
Under the parameter setting $\epsilon < \frac12, \ \sigma = (nd)^{1/4}, \ \|\boldsymbol{\mu}\|^2 = d$, there exists a universal constant $\tilde C$ such that for $\epsilon^2 \sqrt{d/n} \geq \tilde C$, where $n$ is the number of labeled real data used to construct the intermediate classifier, and additional $\tilde n$ synthetic feature generated with mean vector $\pm c\boldsymbol{\mu}$ and pseudo labels, if 
\[\tilde n \geq \frac{288n}{c} \epsilon^2 \sqrt{\frac{d}{n}},
\]
then
$$
\mathbb{E}_{\hat{\boldsymbol{\theta}}_{\text {final }}} [\text { err }_{\text {robust }}(f_{\hat{\boldsymbol{\theta}}_{\text {final }}}) ]\leq 10^{-3}.
$$
\end{theorem}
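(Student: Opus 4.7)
The plan is to reduce the expected robust error to a Gaussian-tail estimate for a scalar signal-to-noise ratio of $\hat{\boldsymbol{\theta}}_{\text{final}}$, and then exhibit the advertised threshold on $\tilde n$ by balancing the real-data, synthetic-data, and pseudo-label contributions.

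\emph{Closed form for robust error.} Since the worst-case $\ell_\infty$ adversary takes $\boldsymbol{\delta}=-\epsilon y\operatorname{sign}(\boldsymbol{\theta})$, a direct Gaussian computation using $\boldsymbol{x}\mid y\sim\mathcal{N}(y\boldsymbol{\mu},\sigma^2\mathbb{I})$ and the label symmetry of $\mathcal{D}$ yields
\[
\mathrm{err}_{\text{robust}}(f_{\boldsymbol{\theta}})=\Phi\!\left(-\mathrm{SNR}(\boldsymbol{\theta})\right),\quad \mathrm{SNR}(\boldsymbol{\theta}):=\frac{\boldsymbol{\theta}^\top\boldsymbol{\mu}-\epsilon\|\boldsymbol{\theta}\|_1}{\sigma\|\boldsymbol{\theta}\|_2}.
\]
It therefore suffices to exhibit an event of probability at least $1-10^{-3}/2$ on which $\mathrm{SNR}(\hat{\boldsymbol{\theta}}_{\text{final}})\geq\Phi^{-1}(1-10^{-3}/2)$, since the complement contributes at most $10^{-3}/2$ to the expectation.

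\emph{Decomposition and pseudo-label analysis.} I would split
\[
\hat{\boldsymbol{\theta}}_{\text{final}}=\tfrac{n}{n+\tilde n}\,\hat{\boldsymbol{\theta}}_{\text{inter}}+\tfrac{\tilde n}{n+\tilde n}\,\hat{\boldsymbol{\theta}}_{\text{syn}},\quad\hat{\boldsymbol{\theta}}_{\text{syn}}:=\tfrac{1}{\tilde n}\sum_{j=1}^{\tilde n}\tilde y_j\tilde{\boldsymbol{x}}_j.
\]
The real-data term satisfies $\hat{\boldsymbol{\theta}}_{\text{inter}}\sim\mathcal{N}(\boldsymbol{\mu},\sigma^2\mathbb{I}/n)$, handled by standard Gaussian concentration. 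For the synthetic term I would write $\tilde{\boldsymbol{x}}_j=\xi_j c\boldsymbol{\mu}+\sigma\boldsymbol{z}_j$ with $\xi_j$ Rademacher and $\boldsymbol{z}_j\sim\mathcal{N}(0,\mathbb{I})$. Since $\tilde y_j=\operatorname{sign}(\hat{\boldsymbol{\theta}}_{\text{inter}}^\top\tilde{\boldsymbol{x}}_j)$, conditioning on $\hat{\boldsymbol{\theta}}_{\text{inter}}$ and using both the sign-flip symmetry in $\xi_j$ and a decomposition of $\boldsymbol{z}_j$ along $\hat{\boldsymbol{v}}:=\hat{\boldsymbol{\theta}}_{\text{inter}}/\|\hat{\boldsymbol{\theta}}_{\text{inter}}\|_2$ gives
\[
\mathbb{E}\!\left[\tilde y_j\tilde{\boldsymbol{x}}_j\mid\hat{\boldsymbol{\theta}}_{\text{inter}}\right]=(2\Phi(s)-1)\,c\boldsymbol{\mu}+2\sigma\phi(s)\hat{\boldsymbol{v}},\quad s:=c\,\hat{\boldsymbol{v}}^\top\boldsymbol{\mu}/\sigma.
\]
Each coordinate of $\tilde y_j\tilde{\boldsymbol{x}}_j$ is a $\sigma$-Lipschitz function of $\boldsymbol{z}_j$, so Gaussian Lipschitz concentration (applied coordinate-wise with a union bound, or via a vectorial variant) gives $\|\hat{\boldsymbol{\theta}}_{\text{syn}}-\mathbb{E}[\hat{\boldsymbol{\theta}}_{\text{syn}}\mid\hat{\boldsymbol{\theta}}_{\text{inter}}]\|_2\lesssim \sigma\sqrt{d/\tilde n}$ on a high-probability event.

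\emph{Plugging in and the $\tilde n$ threshold, and the main obstacle.} In the regime $\sigma=(nd)^{1/4}$, $\|\boldsymbol{\mu}\|^2=d$, and $\epsilon^2\sqrt{d/n}\geq\tilde C$, Gaussian concentration of $\hat{\boldsymbol{\theta}}_{\text{inter}}$ pins down the scalar $s$, hence the bias $2\Phi(s)-1$, up to an absolute constant on the good event. Combining the two pieces, using $\|\boldsymbol{\mu}\|_1\leq\sqrt{d}\,\|\boldsymbol{\mu}\|_2=d$, and bounding $\|\hat{\boldsymbol{\theta}}_{\text{final}}\|_2$ by a weighted sum of $\sigma\sqrt{d/n}$ and $\sigma\sqrt{d/\tilde n}$, one computes $\mathrm{SNR}(\hat{\boldsymbol{\theta}}_{\text{final}})$ explicitly; the required lower bound then collapses, after tracking the bookkeeping constants from the concentration events and the pseudo-label bias, to $\tilde n\cdot c\geq 288\,n\,\epsilon^2\sqrt{d/n}$. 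The main obstacle is exactly this pseudo-label coupling: because $\tilde y_j$ is correlated with $\hat{\boldsymbol{\theta}}_{\text{inter}}$, $\hat{\boldsymbol{\theta}}_{\text{syn}}$ is not an i.i.d.\ Gaussian average, so the conditional-mean identity has to be combined with a quantitative control of $s$, and the residual term $2\sigma\phi(s)\hat{\boldsymbol{v}}$ must be shown not to inflate the $\epsilon\|\hat{\boldsymbol{\theta}}_{\text{final}}\|_1$ penalty beyond what the signal $c\boldsymbol{\mu}$ can compensate — this balance is what ultimately pins down the numerical constant $288$.
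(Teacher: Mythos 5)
Your overall architecture mirrors the paper's: reduce to a Gaussian-tail bound on a signal-to-noise ratio, split $\hat{\boldsymbol{\theta}}_{\text{final}}$ into a real-data part and a synthetic part, condition on $\hat{\boldsymbol{\theta}}_{\text{inter}}$ to handle the pseudo-label dependence, and concentrate. The one genuinely different ingredient is that you extract the \emph{exact} conditional mean $\mathbb{E}[\tilde y_j\tilde{\boldsymbol{x}}_j\mid\hat{\boldsymbol{\theta}}_{\text{inter}}]=(2\Phi(s)-1)c\boldsymbol{\mu}+2\sigma\phi(s)\hat{\boldsymbol{v}}$, where the paper instead introduces the scalar $\gamma=\tfrac{1}{\tilde n}\sum_i(1-2b_i)$ (with $\mathbb{E}[\gamma\mid\hat{\boldsymbol{\theta}}_{\text{inter}}]=2\Phi(s)-1$) and absorbs the extra $\phi(s)$ bias into a residual $\tilde{\boldsymbol{\delta}}$. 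Your computation is correct and is a tidy closed form that the paper never writes down; it also shows that the bias term $2\sigma\phi(s)\hat{\boldsymbol{v}}$ actually points roughly toward $\boldsymbol{\mu}$, so it helps rather than hurts the SNR (it is not the adversary you worried about).

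The concrete gap is the concentration step. The map $\boldsymbol{z}_j\mapsto\tilde y_j\tilde{\boldsymbol{x}}_j$ is \emph{not} a $\sigma$-Lipschitz function of $\boldsymbol{z}_j$, so Gaussian Lipschitz concentration does not apply as you invoke it. Writing $\boldsymbol{z}_j=w\hat{\boldsymbol{v}}+\boldsymbol{z}_j^\perp$ with $w=\hat{\boldsymbol{v}}^\top\boldsymbol{z}_j$, the $\hat{\boldsymbol{v}}$-aligned coordinate of $\tilde y_j\tilde{\boldsymbol{x}}_j$ equals $|a+\sigma w|$ and is indeed $\sigma$-Lipschitz, but for any orthogonal coordinate $k$ the value is $\operatorname{sign}(a+\sigma w)\cdot(c\boldsymbol{\mu}^{(k)}+\sigma z_j^{\perp(k)})$, which jumps by $2|c\boldsymbol{\mu}^{(k)}+\sigma z_j^{\perp(k)}|$ across the hyperplane $w=-a/\sigma$; in $\ell_2$ the jump of the full vector is of order $\sigma\sqrt{d}$. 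The paper circumvents exactly this by the coordinate-system device: in the basis whose first axis is $\hat{\boldsymbol{v}}$, the event $\{\tilde y_j=+1\}$ depends only on the first coordinate of $\tilde{\boldsymbol{\varepsilon}}_j$, so for $k\ge2$ the products $\tilde y_j\tilde{\boldsymbol{\varepsilon}}_j^{(k)}$ are literally i.i.d.\ $\mathcal{N}(0,\sigma^2)$, yielding a $\chi^2_{d-1}$ tail; the first coordinate is then bounded crudely by Cauchy--Schwarz, giving a $\chi^2_{n+\tilde n}$ contribution. You need that independence structure (not Lipschitz-ness) to get $\|\hat{\boldsymbol{\theta}}_{\text{syn}}-\mathbb{E}[\hat{\boldsymbol{\theta}}_{\text{syn}}\mid\hat{\boldsymbol{\theta}}_{\text{inter}}]\|_2\lesssim\sigma\sqrt{d/\tilde n}$. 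Finally, the constant $288$ is not produced by the bias/penalty balance you describe; in the paper it comes from fixing a concrete threshold on the pseudo-label accuracy (the event $\gamma\ge\tfrac16$, controlled by Hoeffding given $\mathrm{err}_{\text{standard}}(f_{\hat{\boldsymbol{\theta}}_{\text{inter}}})\le\tfrac13$), which enters the SNR denominator as $(\tfrac16 c\alpha)^{-2}=36/(c\alpha)^2$ and combines with the $\chi^2$ bounds to give $72n/(c\tilde n)$, hence $4\cdot72=288$ after matching the target error level. Your plan would need to set an analogous quantitative lower bound on $2\Phi(s)-1$ (with its own concentration over $\hat{\boldsymbol{\theta}}_{\text{inter}}$) to pin down the constant.
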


\begin{table}[ht!]
\centering
\caption{Simulation results validating findings in Proposition \ref{theorem3} and Theorem \ref{theorem2}. We use ``Real'' to denote the real data distribution and $n$ to denote the number of data from the real distribution, and we use ``$c$'' to denote different synthetic distributions with mean $\tilde{\boldsymbol{\mu}}=c \boldsymbol{\mu}$ and use $\tilde{n}$ to denote the number of synthetic data. The average accuracy and the standard deviation in the bracket are obtained from 50 repetitions.}
%\vskip 0.15in
\begin{center}
%\begin{small}
\resizebox{0.48\textwidth}{!}{
  \begin{tabular}{c|lrr}
   \toprule  
     &     & \multicolumn{1}{l}{clean acc} & \multicolumn{1}{l}{rob acc} \\
       \midrule
  \multirow{2}[0]{*}{Real}& $n=10$ &    0.9023 (0.0192) & 0.6843 (0.0359) \\
   & $n=100$ &    {0.9682 (0.0014)} & {0.8239 (0.0028)} \\
  \midrule
  \multirow{2}[0]{*}{$c=0.5$} &  $\tilde{n}=10$ &    0.7562 (0.0564) & 0.4611 {(0.0694)} \\
   & $\tilde{n}=100$ &    0.9505 (0.0047) & 0.7848 {(0.0111)} \\
  \midrule
  \multirow{2}[0]{*}{$c=1$}  & $\tilde{n}=10$ &    0.8866 (0.0273) & 0.6557 {(0.0487)}  \\
     &$\tilde{n}=100$ &    0.9695 (0.0012) & 0.8239 {(0.0031)} \\
     \midrule
  \multirow{2}[0]{*}{$c=1.5$} & $\tilde{n}=10$ &    \textbf{0.9400} (0.0100) &\textbf{ 0.7603} {(0.0233)} \\
       & $\tilde{n}=100$ &   \textbf{0.9743} (0.0008) & \textbf{0.8343} {(0.0011)} \\
  
  \bottomrule
 \end{tabular}
 }%  
%\end{small}
\end{center}
% \vskip -0.1in
\label{tab:simulation}%
\end{table}%

% \vspace{-0.1in}
\paragraph{Simulation Results.} %\label{2d-experiments}
To verify the findings in Proposition \ref{theorem3} and Theorem \ref{theorem2}, we conduct extensive simulation experiments under Gaussian distributions with varying data dimensions, sample sizes, and the mean vector $\tilde{\boldsymbol{\mu}}$. In most cases, we find increasing $c$ for synthetic distribution can lead to better clean and robust accuracy. A detailed description of the experimental setting can be found in Appendix \ref{simu:setting}. In Table \ref{tab:simulation}, we demonstrate the clean and robust accuracy learned on synthetic distribution with the angle between $\boldsymbol{\mu}$ and $\mathbf{\epsilon}\mathbf{1}_d$ equals $0^{\circ}$.  Remarkably, the classifier learned only from the synthetic distribution with $\tilde{\boldsymbol{\mu}}=c\boldsymbol{\mu}$ with $c>1$ achieves better performance even than the iid samples (denoted as ``Real'' in Table). 

The closed form of optimal synthetic distribution is $\tilde{\boldsymbol{\mu}}=\boldsymbol{\mu}-\varepsilon \mathbf{1}_d$ as stated in Proposition \ref{theorem3}. It is interesting to see when $\tilde{\boldsymbol{\mu}}=c \boldsymbol{\mu}$ and $\boldsymbol{\mu}$ is not aligned on  $\mathbf{\epsilon}\mathbf{1}_d$, whether increasing $c$ provides better data distribution for learning a robust classifier. The answer is still true, demonstrated by the experiment results (with $30^{\circ}$, $60^{\circ}$, and $90^{\circ}$) in Table \ref{t1}, \ref{t2}, \ref{t3} and \ref{t4} in Appendix \ref{simu-gaussian}.
%
%which verifies Lemma \ref{theorem3}.

\section{Contrastive-Guided Diffusion Process}\label{sec:contr}

It has been shown in Proposition \ref{theorem3} and Theorem \ref{theorem2} that the synthetic data can help improve the classification task, especially when the representation of different classes is more distinguishable in the synthetic distribution. Meanwhile, the contrastive loss \citep{Oord2018RepresentationLW} can be adopted to explicitly control the distances of the representation of different classes. 
Therefore, we propose a variant of the classical diffusion model, named {\it Contrastive-Guided Diffusion Process (Contrastive-DP)}, to enhance the sample efficiency of the generative model. In this section, we first present the overall algorithm of the proposed Contrastive-DP procedure in Section~\ref{contr-cgdf}, then we describe the detailed design of the contrastive loss in Section~\ref{contr-loss}.

\subsection{Algorithm for Contrastive-Guided DP} \label{contr-cgdf}

The detailed generation procedure of Contrastive-DP is given in Algorithm \ref{alg1}. We highlight below some major differences between the proposed Contrastive-DP and the vanilla DDIM algorithm. In each time step $t$ of the generation procedure, given the current value $\boldsymbol{x}^{(i)}_t$, we add the gradient of the contrastive loss $\ell_{\text{contra}}(\boldsymbol{x}_{t}^{(i)}, \boldsymbol{x}_p^{(i)};\tau)$ with respect to $\boldsymbol{x}_{t}^{(i)}$ to the original diffusion generative process, here $\boldsymbol{x}_p^{(i)}$ is the positive pair of $\boldsymbol{x}_t^{(i)}$ (will be explained in detail later), $\tau$ is the temperature for softmax, and $\lambda$ is the hyperparameter balancing the contrastive loss within the diffusion process. 

This modification ensures that the generated data will be distinguishable among data in the same batch. The construction of the contrastive loss $\ell_{\text{contra}}(\cdot)$ is very flexible -- we can adopt multiple forms of contrastive loss together with different selection strategies of positive and negative pairs, which will be discussed in detail in the following.

% \vspace{-0.1in}
\subsection{Contrastive Loss for Diffusion Process} \label{contr-loss}
\label{subsection-contrastive}

%CDGP
% \vspace{-0.1in}
\begin{algorithm}[tb]
	\caption{Generation in Contrastive-guided Diffusion Process (Contrastive-DP)}
	\begin{algorithmic}[1]
      \REQUIRE Contrastive loss temperature $\tau$, diffusion process hyperparameter $\sigma_t$
		\STATE $\mathbb{X}_T = \{\boldsymbol{x}_{T}^{(i)}\}_{i=1}^m \sim \mathcal{N}(\mathbf{0}, \mathbb{I})$
		\STATE $t=T$
    \WHILE{$t\neq 1$}
      \FOR{$i = 1$ \textbf{to} $m$}
        \STATE Sampling $\boldsymbol{\epsilon}_t \sim \mathcal{N}(\mathbf{0}, \mathbb{I})$
        \STATE Choosing $\boldsymbol{x}_p^{(i)}$ as the positive pair of $\boldsymbol{x}_{t}^{(i)}$
        \STATE $\Delta\boldsymbol{x}_t^{(i)} = \lambda \cdot \nabla_{\boldsymbol{x}_t^{(i)}} \ell_{\text{contra}}(\boldsymbol{x}_{t}^{(i)}, \boldsymbol{x}_p^{(i)};\tau) +\boldsymbol{\epsilon}_{\theta}(\boldsymbol{x}_{t}^{(i)},t)$ \\
        \STATE 
        $\boldsymbol{x}_{t-1}^{(i)}=\sqrt{\alpha_{t-1}}(\frac{\boldsymbol{x}_{t}^{(i)}-\sqrt{1-\alpha_{t} \Delta\boldsymbol{x}_t^{(i)}}}{\sqrt{\alpha_{t}}})+\sqrt{1-\alpha_{t-1}-\sigma_t^2} \cdot \Delta\boldsymbol{x}_t^{(i)}+\sigma_t \boldsymbol{\epsilon}_t$
        \STATE $t=t-1$
      \ENDFOR
    \ENDWHILE
    \STATE Return $\mathbb{X}_0 = \{\boldsymbol{x}_{0}^{(i)}\}_{i=1}^m$
	\end{algorithmic}
	\label{alg1}
\end{algorithm}

Let $\mathbb{X} = \{\boldsymbol{x}_{1},...,\boldsymbol{x}_{m}\}$ be a minibatch of training data. We apply the contrastive loss to the embedding space. Assume $f(\cdot)$ is the feature extractor that maps the input data in $\mathbb{X}$ onto the embedding space.
%where $\boldsymbol{x}_i \in \mathbb{R}^{W \times H}$ is the $i$th image in that minibatch. 
In general, we adopt two forms of the contrastive loss $\ell_{\text{contra}}(\boldsymbol{x}_{t}^{(i)}, \boldsymbol{x}_p^{(i)};\tau)$ which will be used in Algorithm \ref{alg1}.

% \begin{enumerate}[font=\bfseries,align=left,leftmargin=*]
% \item[InfoNCE loss:] 

First is the InfoNCE loss: 
$$
\ell_{\text{InfoNCE}}\left(\boldsymbol{x}_{a}, \boldsymbol{x}_{p} ; \tau \right)=-\log (\frac{g_\tau(\boldsymbol{x}_{a}, \boldsymbol{x}_{p})}{{\sum_{k=1}^{m} \mathbf{1}_{\mathrm{k} \neq \mathrm{a}} g_\tau(\boldsymbol{x}_{a}, \boldsymbol{x}_{k})}}),$$
% $\ell_{\text{InfoNCE}}\left(\boldsymbol{x}_{a}, \boldsymbol{x}_{p} ; \tau \right)=-\log (g_\tau(\boldsymbol{x}_{a}, \boldsymbol{x}_{p}){\sum_{k=1}^{m} \mathbf{1}_{\mathrm{k} \neq \mathrm{a}} g_\tau(\boldsymbol{x}_{a}, \boldsymbol{x}_{k})}),$
where $m$ is the batch size, $\tau$ is the temperature for softmax, $\boldsymbol{x}_{a}$, $\boldsymbol{x}_{p}$ denote the anchor and the positive pair, respectively, $g_\tau(\boldsymbol{x},\boldsymbol{x}') =\exp(f(\boldsymbol{x})^{\top} f(\boldsymbol{x}') / \tau)$, and all images except the anchor $\boldsymbol{x}_{a}$ in the minibatch $\mathbb{X}$ is negative pairs. 
%
%The InfoNCE loss is defined as:
% $$
% \ell_{\text{InfoNCE}}\left(\boldsymbol{x}_{a}, \boldsymbol{x}_{p} ; \tau \right)=-\log \left(\frac{\exp \left(f\left(\boldsymbol{x}_{a}\right)^{\top} f\left(\boldsymbol{x}_{p}\right) / \tau\right)}{\sum_{k=1}^{m} \mathbf{1}_{\mathrm{k} \neq \mathrm{a}} \cdot \exp \left(f\left(\boldsymbol{x}_{a}\right)^{\top} f\left(\boldsymbol{x}_{k}\right) / \tau\right)}\right),
% $$
InfoNCE loss is an unsupervised learning metric and does not explicitly distinguish the representation from different classes, which implicitly regards the representation from the same class as negative pair. 

% \paragraph{Hard negative mining contrastive loss.} 
% $\ell_{\text{HNM}}\left(\boldsymbol{x}_{a}, \boldsymbol{x}_{p} ; \tau\right) = -\log(g_\tau(\boldsymbol{x}_{a}, \boldsymbol{x}_{p})/(g_\tau(\boldsymbol{x}_{a}, \boldsymbol{x}_{p}) + m/\tau^-(\mathbb{E}_{\boldsymbol{x}_{n} \sim q_{\beta}}[(g_\tau(\boldsymbol{x}_{a}, \boldsymbol{x}_{n})]-\tau^{+} \mathbb{E}_{\boldsymbol{v} \sim q_{\beta}^{+}}[(g_\tau(\boldsymbol{x}_{a}, \boldsymbol{v})]))),$

Second is the hard negative mining loss: 
\[
\ell_{\text{HNM}}\left(\boldsymbol{x}_{a}, \boldsymbol{x}_{p} ; \tau\right) = -\log \frac{g_\tau(\boldsymbol{x}_{a}, \boldsymbol{x}_{p})}{ g_\tau(\boldsymbol{x}_{a}, \boldsymbol{x}_{p}) + \frac{m}{\tau^-} h_\tau(\boldsymbol{x}_{a}) } ,
\]
where
\[
h_\tau(\boldsymbol{x}_{a}) = \mathbb{E}_{\boldsymbol{x}_{n} \sim q_{\beta}}[g_\tau(\boldsymbol{x}_{a}, \boldsymbol{x}_{n})]-\tau^{+} \mathbb{E}_{\boldsymbol{v} \sim q_{\beta}^{+}}[g_\tau(\boldsymbol{x}_{a}, \boldsymbol{v})],
\]
and $m$ denotes the batch size, $\tau^{-}=1-\tau^{+}$ denotes the probability of observing any different class with $\boldsymbol{x}_{a}$ and $q_\beta$ is an unnormalized von Mises–Fisher distribution \citep{Jammalamadaka2011DirectionalSI}, with
mean direction $f(\boldsymbol{x})$ and ``concentration parameter'' $\beta$ to control the hardness of negative mining; $q_\beta$ and $q_\beta^{+}$ can be easily approximated by Monte-Carlo importance sampling techniques. We refer to \cite{Chuang2020DebiasedCL,Robinson2021ContrastiveLW} for detailed descriptions of hard negative mining contrastive loss. Compared with the InfoNCE loss that does not consider class/label information, the hard negative mining (HNM) loss enhances the discriminative ability of different classes in the feature space. 

It is worth mentioning that the Contrastive-DP enjoys the plugin-type property -- it does not modify the original training procedure of diffusion processes and can be easily adopted to various kinds of diffusion models. 

Moreover, it has a close relationship with Stein Variational Gradient Descent \citep{Liu2016SteinVG,Liu2017SteinVG}. By utilizing the existing theoretical tools \citep{Shi2022AFC} for analyzing the convergence rate for SVGD, we may also get the convergence guarantee for Contrastive-DP, which is left to future work.

% \begin{small}
% \begin{align}
% &\ell_{\text{HNM}}\left(\boldsymbol{x}_{a}, \boldsymbol{x}_{p} ; \tau\right)= \notag\\&-\log \frac{\exp{\left(\frac{f(\boldsymbol{x}_{a})^{T} f\left(\boldsymbol{x}_{p}\right)}{\tau}\right)}}{\exp{\left(\frac{f(\boldsymbol{x}_{a})^{T} f\left(\boldsymbol{x}_{p}\right)}{\tau}\right)}+\frac{N}{\tau^{-}}\left(\mathbb{E}_{\boldsymbol{x}_{n} \sim q_{\beta}}\left[\exp{\left(\frac{f(\boldsymbol{x}_{a})^{T} f\left(\boldsymbol{x}_{n}\right)}{\tau}\right)}\right]-\tau^{+} \mathbb{E}_{\boldsymbol{v} \sim q_{\beta}^{+}}\left[\exp{\left(\frac{f(\boldsymbol{x}_{a})^{T} f(\boldsymbol{v})}{\tau}\right)}\right]\right)},\notag
% \end{align}
% \end{small}

\begin{remark} [Connection with Stein Variational Gradient Descent]
Recall the updating step in Contrastive-DP is $$\Delta\boldsymbol{x}_t^{(i)} = \underbrace{\boldsymbol{\epsilon}_{\theta}(\boldsymbol{x}_{t}^{(i)},t)}_{\text {score}}+\lambda \underbrace{\cdot \nabla_{\boldsymbol{x}_t^{(i)}} \ell_{\text{contra}}(\boldsymbol{x}_{t}^{(i)}, \boldsymbol{x}_p^{(i)};\tau)}_{\text {repulsive force}}.$$
And the updating step $\Delta\boldsymbol{x}_t^{(i)}$ in SVGD equals \cite{Liu2017SteinVG}
    $$ \frac{1}{n}\sum_{j=1}^n\Big[\underbrace{s_p\big(\boldsymbol{x}_t^{(j)}\big) k\big(\boldsymbol{x}_t^{(j)}, \boldsymbol{x}_t^{(i)}\big)}_{\text {weighted sum of score}}+\underbrace{\nabla_{\boldsymbol{x}_t^{(j)}} k\big(\boldsymbol{x}_t^{(j)}, \boldsymbol{x}_t^{(i)}\big)}_{\text {repulsive force}}\Big],$$
where $\boldsymbol{x}_t^{(j)}$ is other samples in the batch, $s_p(\boldsymbol{x}):= \nabla_{\boldsymbol{x}}\log p(\boldsymbol{x})$ is the score function, and $k(\boldsymbol{x},\boldsymbol{x}')$ is a positive definite kernel. Contrastive-DP is similar to the SVGD with the score that pulls the particles to high-density region and the repulsive force that pulls the particles away from each other, but the score in Contrastive-DP is easier to calculate as it does not require a weighted sum over the current batch.
\end{remark}

% \vspace{-0.1in}
\paragraph{Numerical Validations.} We first demonstrate the effectiveness of Contrastive-DP in Figure \ref{effectiveness-contrastive-2d} using a simulation example. Consider the binary classification problem as in Section \ref{sec:theo_setup}, and the real data for each class are generated from a Gaussian distribution. Figure \ref{effectiveness-contrastive-2d}(a) demonstrates the synthetic data generated by the vanilla diffusion model, which recovers the ground-truth Gaussian distribution well. When using the contrastive-DP procedure with HNM loss, we obtain the generated synthetic data as shown in Figure \ref{effectiveness-contrastive-2d}(b), which is more distinguishable with a much smaller variance. 
%Training a model on that synthetic distribution has smaller sample complexity than under true distributions.

\begin{figure}[th]
\centering
% \vspace{-0.2in}
\subfigure[DDPM]{	\includegraphics[width=.23\textwidth]{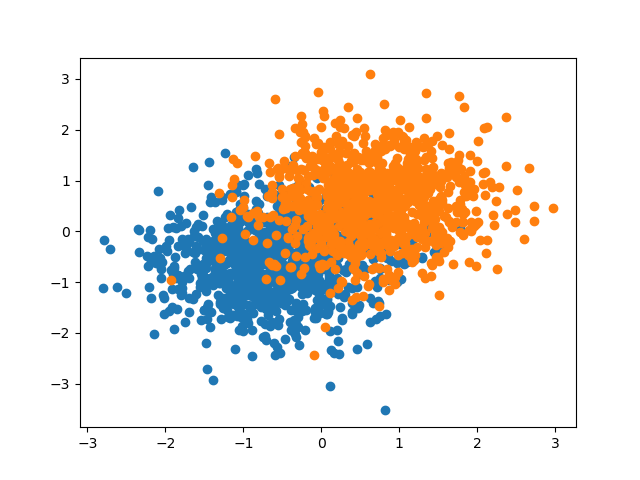}}
\subfigure[ Contrastive-DP]{\includegraphics[width=.23\textwidth]{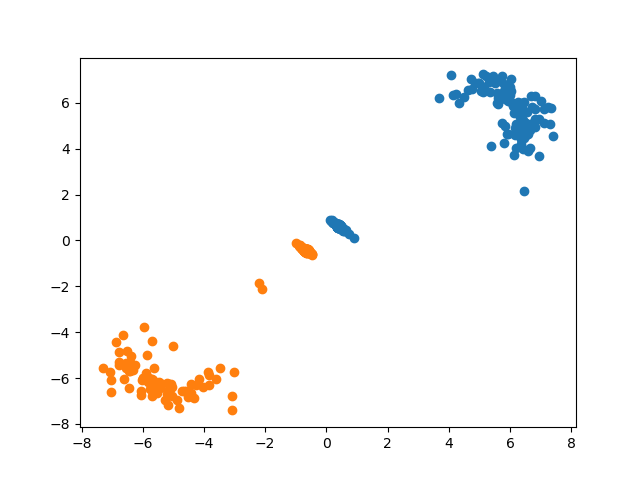}}
% \vspace{-0.1in}
\caption{An illustration of the effectiveness of synthetic distribution guided by contrastive loss. More details can be found in Appendix \ref{2d-simulation}.
%Diffusion guided by conditional contrastive loss has larger $c$ and small variance, which is more data efficient for training a robust model.
}
\label{effectiveness-contrastive-2d}
% \vspace{-0.1in}
\end{figure}

In addition, Figure \ref{contrastive-simulation1} and Figure \ref{contrastive-simulation2} in Appendix \ref{2d-simulation} demonstrate the synthetic data distribution guided by different kinds of contrastive loss mentioned above. It can be shown that InfoNCE loss and hard negative mining method cannot explicitly distinguish the data within the same class and thus form a circle within each class to maximize the distance between samples, while the conditional version of contrastive loss (given the oracle class information) can make two classes more separable.

% \vspace{-0.1in}
\section{Real Data Examples}\label{sec:numerical}

%We first describe the pipeline of synthetic data generation for adversarial robustness and a corresponding setting in Section~\ref{experimental-setup}.
In this section, we demonstrate the effectiveness of the proposed contrastive guided diffusion process for synthetic data generation in adversarial classification tasks. We first compare the performance of Contrastive-DP with the vanilla diffusion models in Section~\ref{sec:exp}. Then, we present a comprehensive ablation study on the performance of Contrastive-DP to shed insights on how to adopt the contrastive loss functions and further data selection methods on the diffusion model in Section~\ref{sec:ablation}. 
%\yd{I comment it}
% In particular, we investigate the performance of seven contrastive loss functions, the effect of the hyperparameter that controls the strength of the guidance of the contrastive loss, and four proposed data selection criteria.

% \vspace{-0.1in}
\subsection{Experimental Results} \label{sec:exp}
We test the contrastive-DP algorithm on three image datasets, the MNIST dataset \citep{LeCun1998GradientbasedLA}, CIFAR-10 dataset \citep{Krizhevsky2009LearningML}, and the Traffic Signs dataset \citep{Houben-IJCNN-2013}.
%\paragraph{Comparison studies.}
% We test the contrastive-DP algorithm on three image datasets, the MNIST dataset \citep{LeCun1998GradientbasedLA}, CIFAR-10 dataset \citep{Krizhevsky2009LearningML} and Traffic Signs dataset \citep{Houben-IJCNN-2013}. MNIST dataset CIFAR-10 dataset contains 50k training images in 10 classes and 10K images for testing, while the Traffic signs dataset contains 39252 training images in 43 classes and 12629 images for testing. For the CIFAR-10 dataset, we generate 50K, 200K, and 1M additional images together with the original training images for adversarial training, while for the Traffic signs dataset, we synthetic 50k images. To demonstrate our Contrastive-DP algorithm is flexible to be adopted to various kinds of diffusion models and make use of the existing pre-trained models, we establish our Contrastive-DP algorithm on the unconditional DDIM for CIFAR-10 datasets and on the conditional DDPM for Traffic signs dataset. 
A detailed description of the pipeline for generating data and the corresponding hyperparameter can be found in Appendix \ref{experimental-setup}.

Figure \ref{mnist} shows the efficacy of synthetic data in terms of improving adversarial robustness on MNIST data. We fix the total number of images for training the classifier as 5K (e.g., when we use 1K real data and 4K synthetic data, to avoid information leakage, we only use these 1K real data to train the diffusion model. The same case for 2K, 3K, and 4K.) \footnote{The case with 5K training images is a special scenario: (i) the accuracy resulted from using all of the 5K real data is the baseline (dashed line shown in Figure \ref{mnist}); (ii) while the accuracy resulted from using 5K synthetic data generated by the diffusion model trained on 5K real data is shown in the scatter plot.} 
Notably, synthetic data improves the clean and robust accuracy even without using any real data, which is consistent with the results in Proposition \ref{theorem3} and Theorem \ref{theorem2}. Moreover, lots of proposed methods \citep{Madry2017TowardsDL, Tsipras2018RobustnessMB, Zhang2019TheoreticallyPT} improves the robust accuracy at the sacrifice of clean accuracy, while adding contrastive guidance increase both the clean and the robust accuracy at the same time under the majority of the settings in Figure \ref{mnist}, which shows the potential of Contrastive-DP in achieving a better trade-off between clean and robust accuracy.

\begin{figure}[h]
\centering
% \vspace{-0.2in}
\subfigure[acc]{	\includegraphics[width=.23\textwidth]{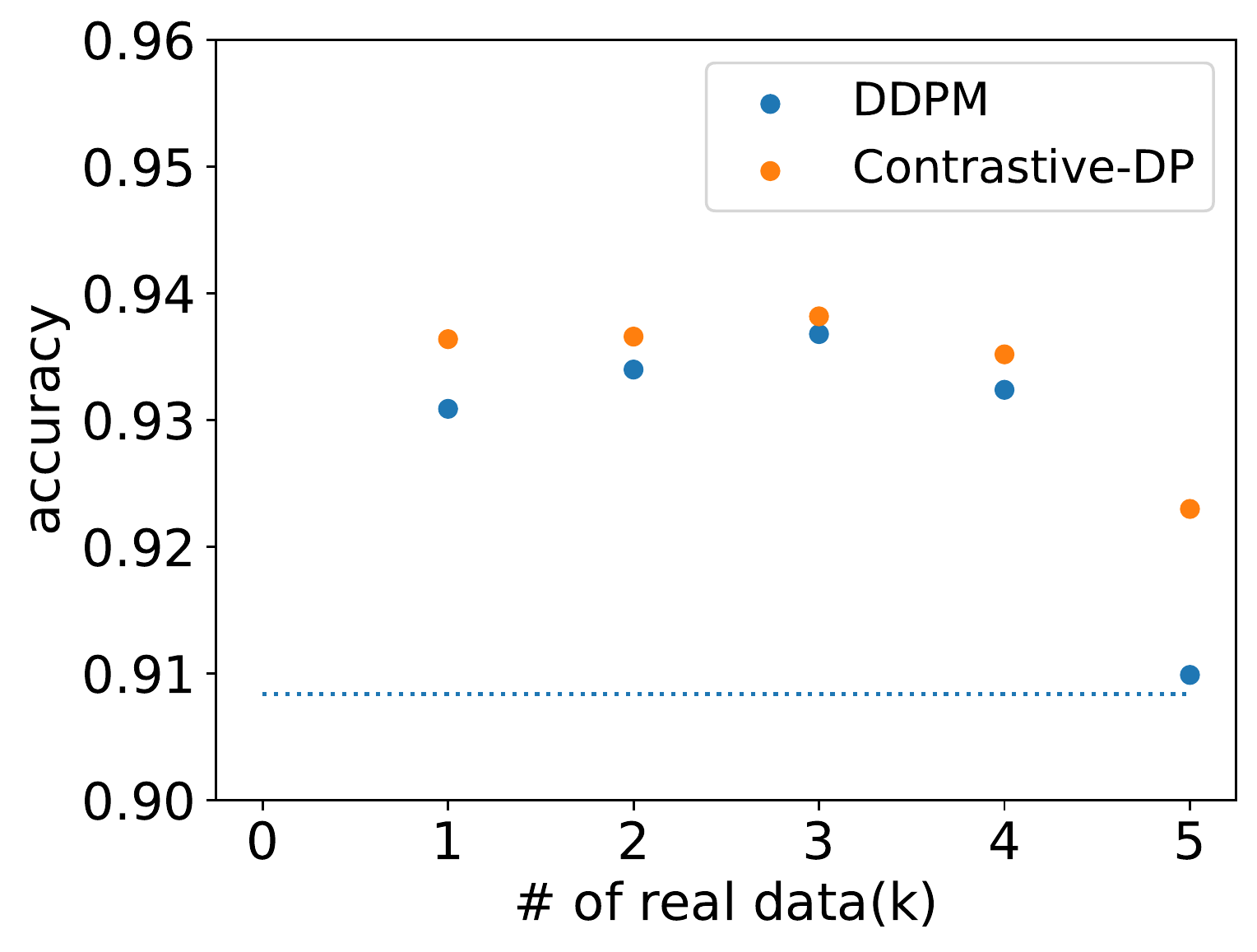}}
\subfigure[ rob acc]{\includegraphics[width=.23\textwidth]{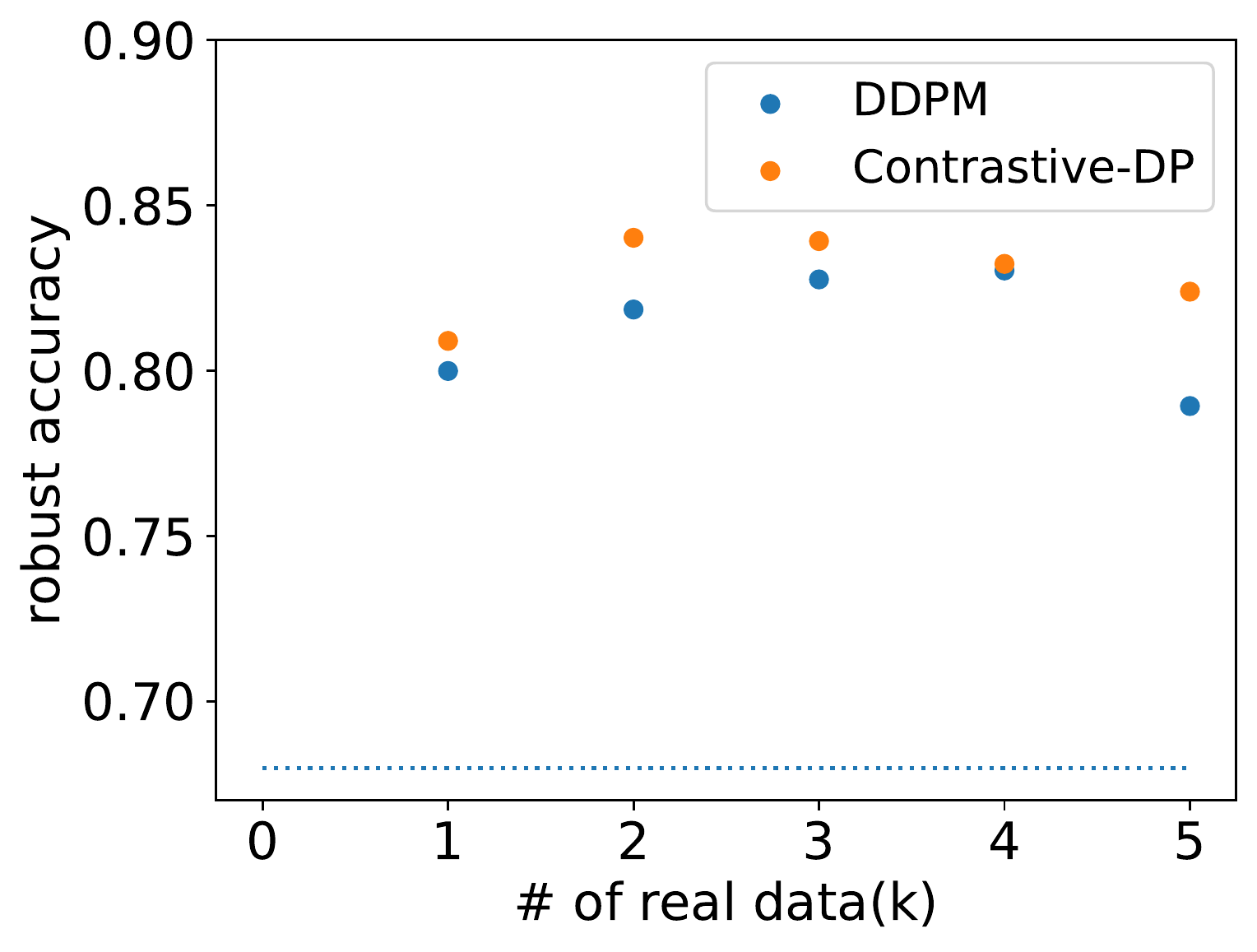}}
% \vspace{-0.1in}
\caption{The efficacy of the synthetic data on MNIST dataset. All the results are trained with 5K data (with different proportions of real and synthetic data). Subfigure (a) and (b) show the clean and adversarial accuracy on the MNIST dataset, respectively. The dashed line is the performance training only on 5K real data.}
\label{mnist}
% \vspace{-0.1in}
\end{figure}

Table \ref{tab:cifar} demonstrates the effectiveness of our contrastive-DP algorithm on the CIFAR-10 dataset \footnote{Since the Pytorch Implementation of \cite{gowal2021improving} is not open source, we utilize the best \href{https://github.com/imrahulr/adversarial_robustness_pytorch}{unofficial implementation} to reconduct all the experiments for a fair comparison.}, which achieves better robust accuracy on all data regimes than the vanilla DDPM and DDIM. All of the results are higher than the baseline result without synthetic data ($81.98\%\pm 0.58\%$ for clean accuracy and $50.42\%\pm0.35\%$ for robust accuracy) by a large margin (+6.18\% in 50K setting and +9.57\% in 1M setting). Table \ref{tab:gtsrb} demonstrates the effectiveness of our contrastive-DP algorithm on the Traffic Signs dataset. Our contrastive-DP achieves better clean and robust accuracy than the vanilla DDPM model and is also higher than the baseline result without synthetic data by a large margin (+10.24\%).

\begin{table*}[htbp]
 \centering
 \caption{The clean and adversarial accuracy on CIFAR-10 dataset. The robust accuracy is reported by AUTOATTACK \citep{Croce2020ReliableEO} with $\epsilon_\infty=8/255$ and WRN-28-10. 50k, 200k, and 1M denote the number of synthetic used for adversarial training. The results and the standard deviation in the bracket are obtained from 3 repetitions. }
% \vskip 0.15in
  \resizebox{\textwidth}{!}{\begin{tabular}{l|rr|rr|rr}
\toprule 
      & \multicolumn{2}{c|}{50K} & \multicolumn{2}{c|}{200K} & \multicolumn{2}{c}{1M}  \\
      & \multicolumn{1}{l}{clean acc} & \multicolumn{1}{l|}{rob acc} & \multicolumn{1}{l}{clean acc} & \multicolumn{1}{l|}{rob acc} & \multicolumn{1}{l}{clean acc} & \multicolumn{1}{l}{rob acc}\\
\midrule
  DDIM  & \textbf{84.05\%(0.06\%)} & 56.29\%(0.15\%) & 84.86\%(0.43\%) & 57.83\%(0.28\%)& 85.73\%(0.51\%) & 59.85\%(0.26\%)\\
  Contrastive-DP &83.66\%(0.21\%) & \textbf{56.60\%}(0.17\%) & \textbf{85.71\%(0.18\%)} & \textbf{58.24\%}(0.20\%) & \textbf{86.30\%}(0.09\%) & \textbf{59.99\%}(0.23\%) \\
  \midrule
  DDPM  & \textbf{84.84\%(0.37\%)} & \textbf{56.30\%(0.06\%)} & 85.23\%(0.25\%) & 58.28\%(0.09\%)& \textbf{86.86\%(0.04\%)} & 59.03\%(0.16\%) \\
  Contrastive-DP &84.70\%(0.43\%) & 56.18\%(0.10\%) & \textbf{85.61\%(0.14\%)} & \textbf{58.62\%}(0.12\%) & 86.30\%(0.10\%) & \textbf{59.74\%}(0.26\%)  \\
   
    \bottomrule
  \end{tabular}}%
 \label{tab:cifar}%
\end{table*}%

\begin{table}[htbp]
 \centering
 \caption{The clean and adversarial accuracy on the Traffic Signs dataset. The results and the standard deviation in the bracket are obtained from 3 repetitions.}  
\resizebox{0.48\textwidth}{!}{
    \begin{tabular}{lcc}
    \toprule  & \multicolumn{1}{l}{clean acc} & \multicolumn{1}{l}{rob acc} \\
     \midrule
  No additional data & 78.52\% (0.16\%)	& 46.03\% (0.85\%) \\
  DDPM & 86.79\% (0.12\%) & 56.01\% (0.14\%) \\
  Contrastive-DP & \textbf{86.94\% (0.32\%)}	& \textbf{56.27\% (0.25\%)} \\
  \bottomrule
  \end{tabular}
  }%
 \label{tab:gtsrb}%
\end{table}%

To visualize the effectiveness of the guidance, we use the same initialization to generate images by DDIM and Contrastive-DP. We find the guidance of the contrastive loss changes the category of the synthetic images or makes the synthetic images realistic (colorful). 

% \subsection{Comparison of the image generated by Contrastive-DP with the vanilla DDIM}
% In this subsection, we visualize the image generated by Contrastive-DP and the vanilla DDIM on the CIFAR-10 dataset. We find the guidance of the contrastive loss changes the category of the synthetic images or makes the synthetic images realistic (colorful). 
%\yd{I try to demonstrate 1.Contrastive-DP algorithm change the category of the images 2. the images generated by Contrastive-DP is more distinguish---how to show the second advantage}

\begin{figure}
\centering
  \subfigure[DDIM]{	\includegraphics[width=.23\textwidth]{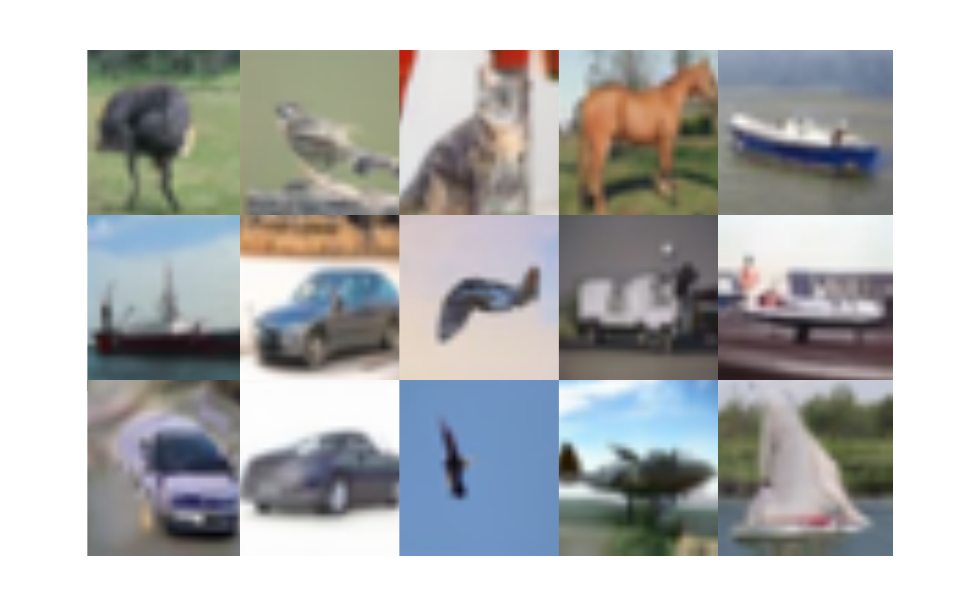}}
\subfigure[Contrasrive-DP]{\includegraphics[width=.23\textwidth]{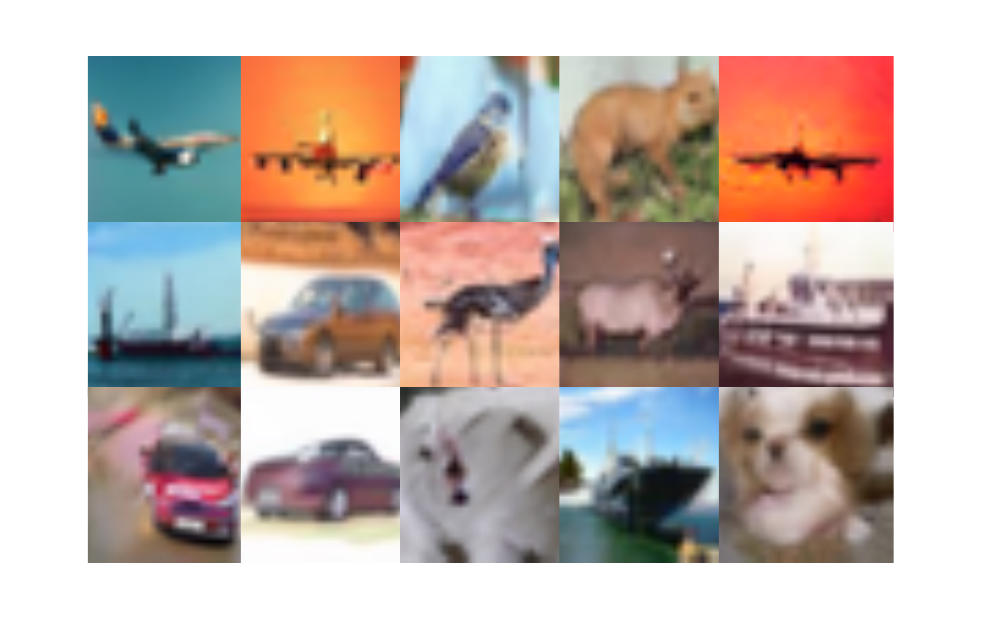}}
  \caption{Comparison of the Contrastive-DP with the vanilla DDIM. The image in the same position on subfigures (a) and (b) has the same initialization. 
  % With the guidance of the contrastive loss, the category of the synthetic images changes, or the synthetic images become more realistic (colorful), which demonstrates the effectiveness of our Contrastive-DP algorithm.
  %\yd{I comment them}
  }
  \label{visualization}
\end{figure}

Moreover, we visualize the t-SNE of the finial classifier learned on different synthetic data. We find with the guidance of the contrastive loss, the final classifier learns a better representation that makes the feature of the images from different classes more separable than the final classifier learned on the images generated by the vanilla DDIM.

% \begin{figure}[h]
% 	\centering
% 	\subfigure[DDIM]{	\includegraphics[width=.45\textwidth]{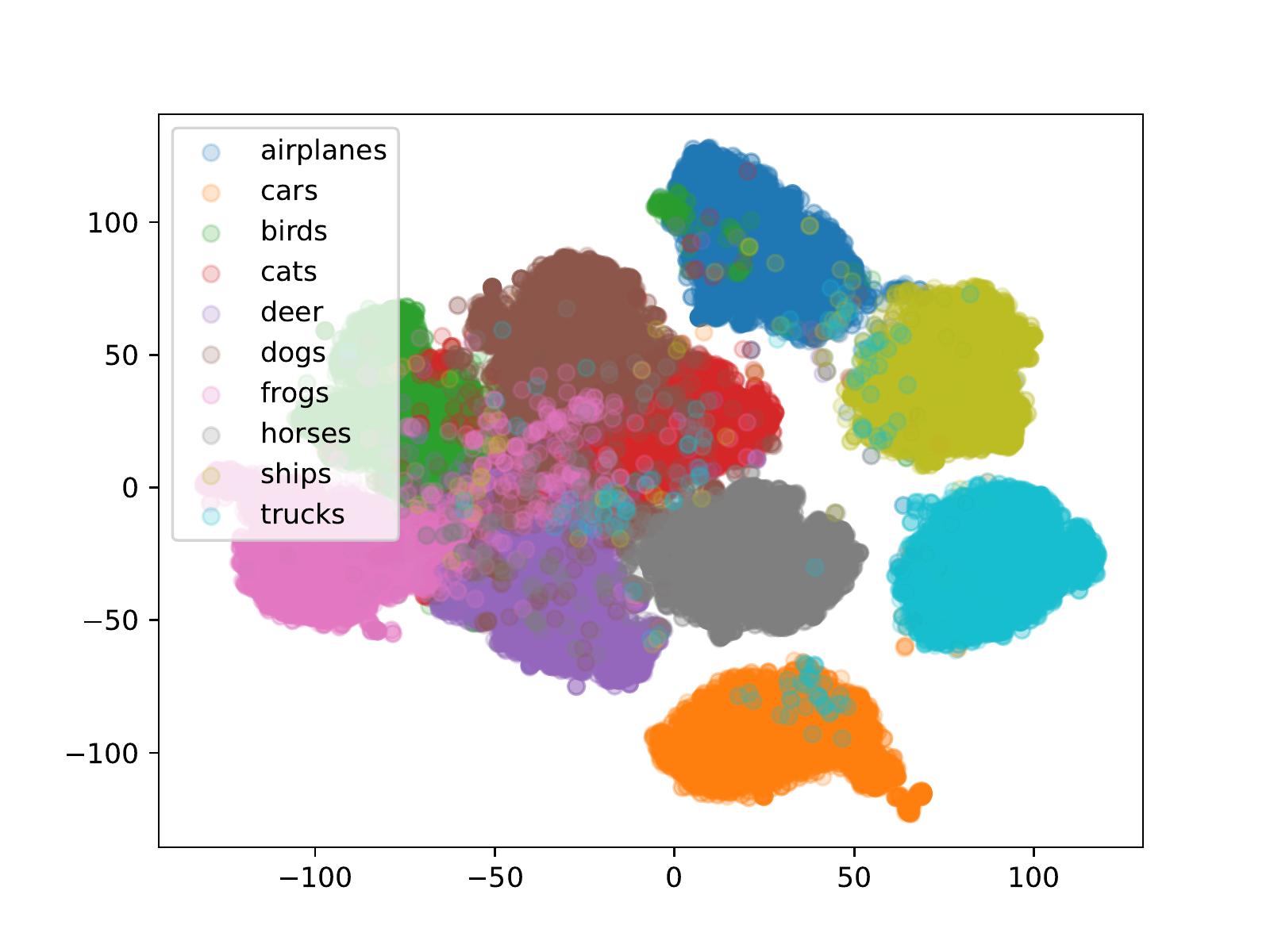}}
% \subfigure[Contrasrive-DP]{\includegraphics[width=.45\textwidth]{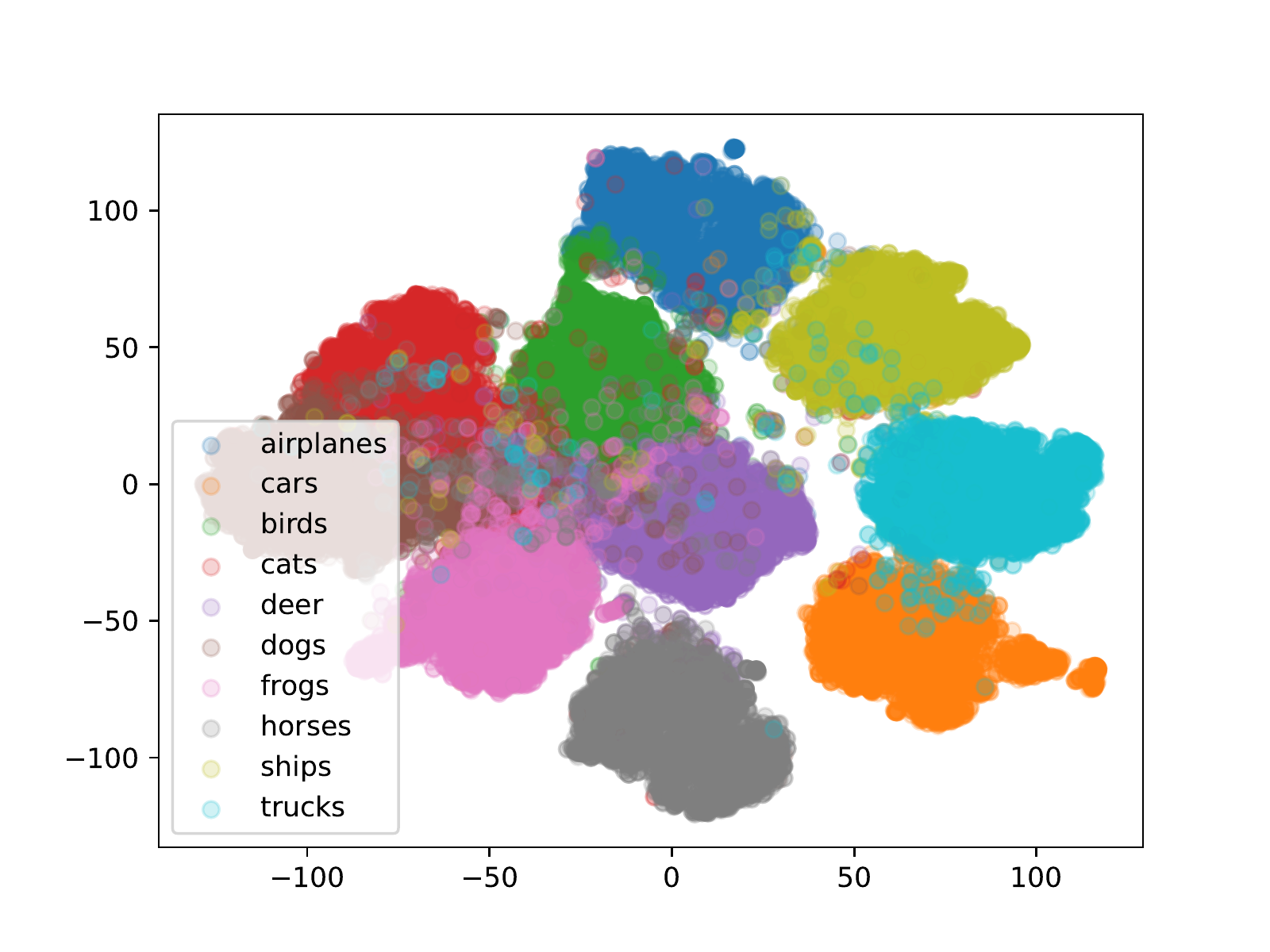}}

% 	\caption{A comparison of the t-SNE of different synthetic data. }
% 	\label{contrastive-tsne}
% \end{figure}

% \yd{I will regenerate these figures --- changing the transparency}
\begin{figure}[h]
	\centering
% 	\subfigure[CIFAR (DDIM)]{	\includegraphics[width=.45\textwidth]{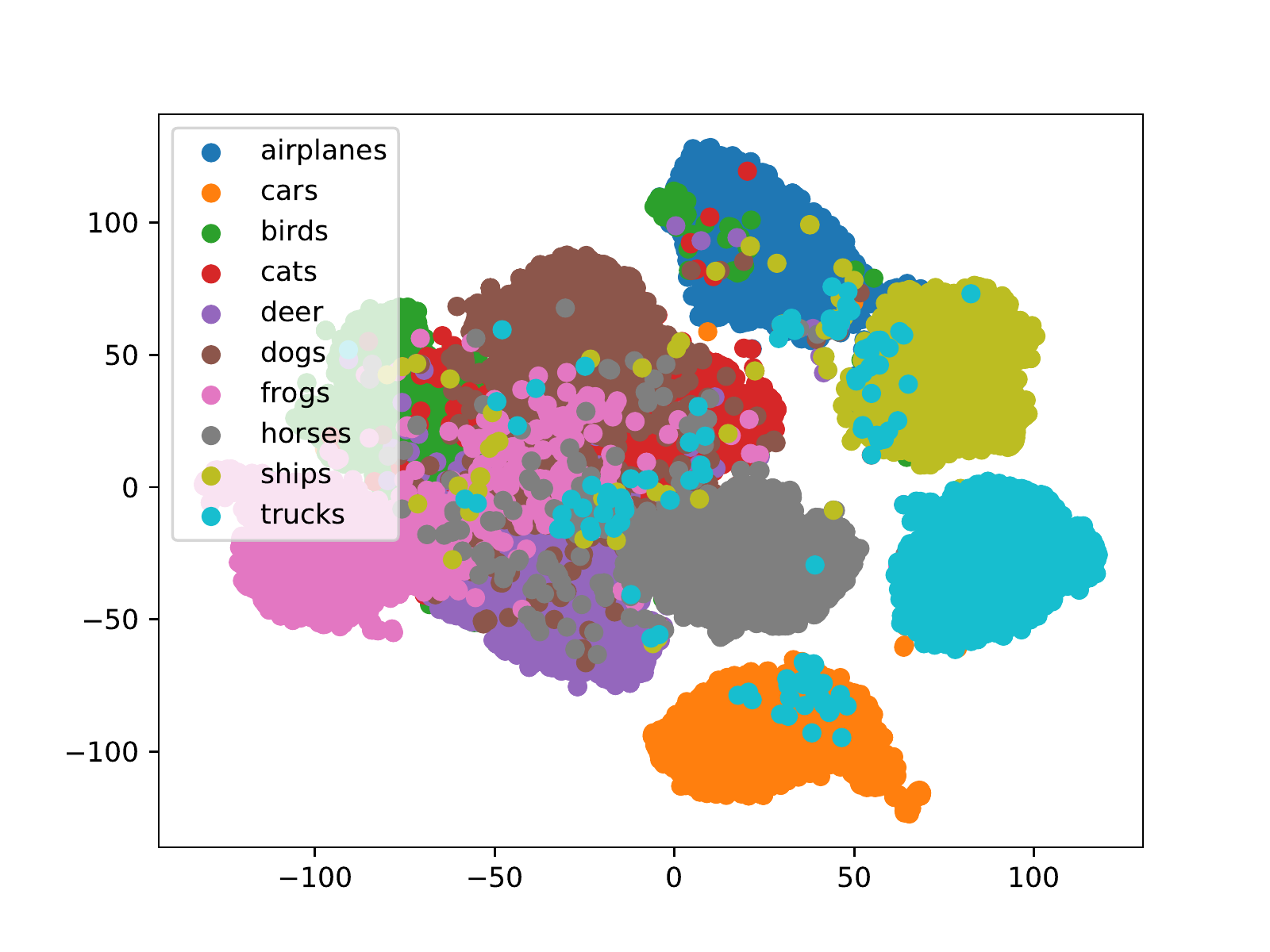}}
% \subfigure[DDIM]{\includegraphics[width=.45\textwidth]{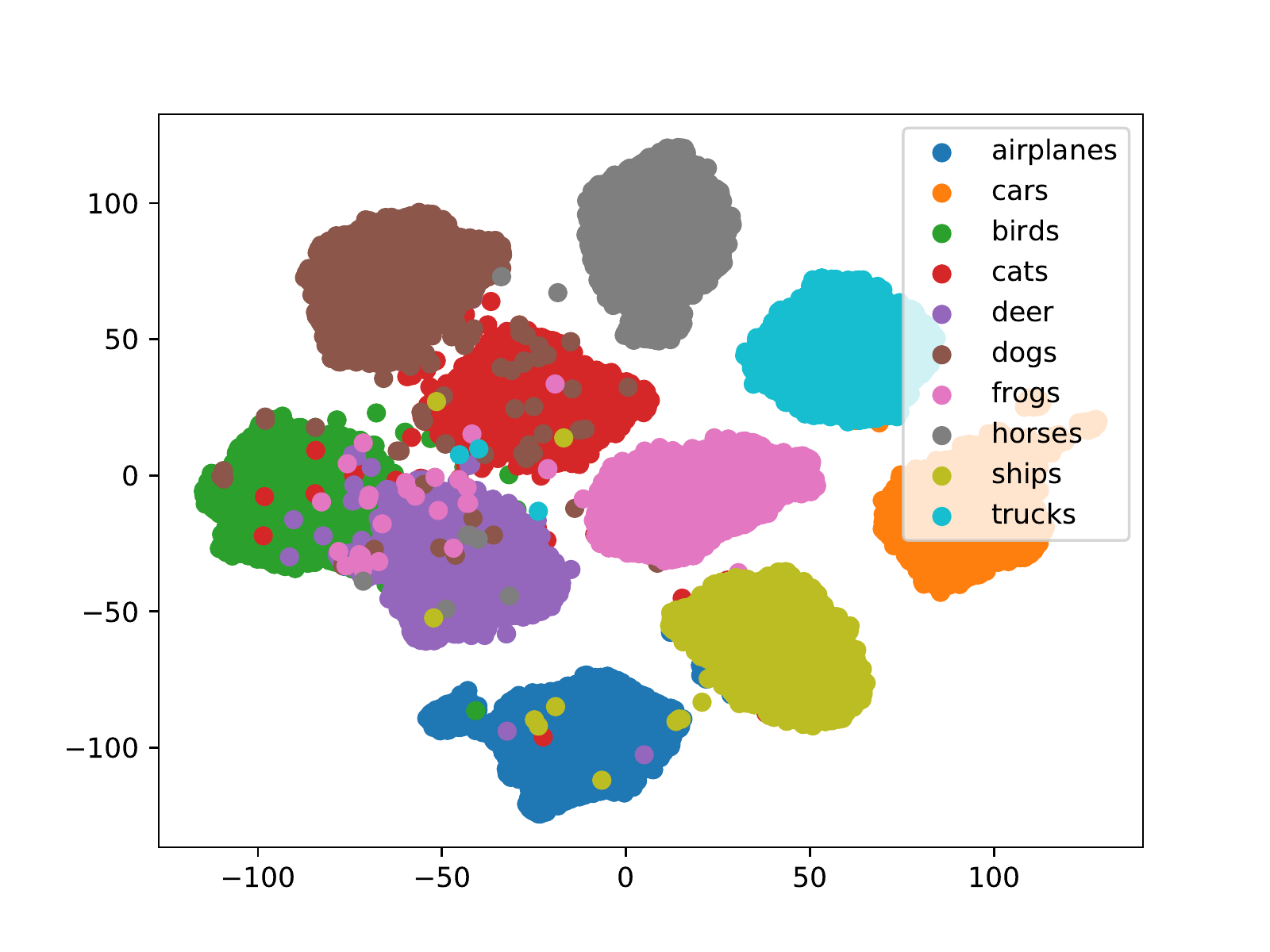}}
% \\
% \subfigure[DDIM + InfoNCE (Our model)]{\includegraphics[width=.45\textwidth]{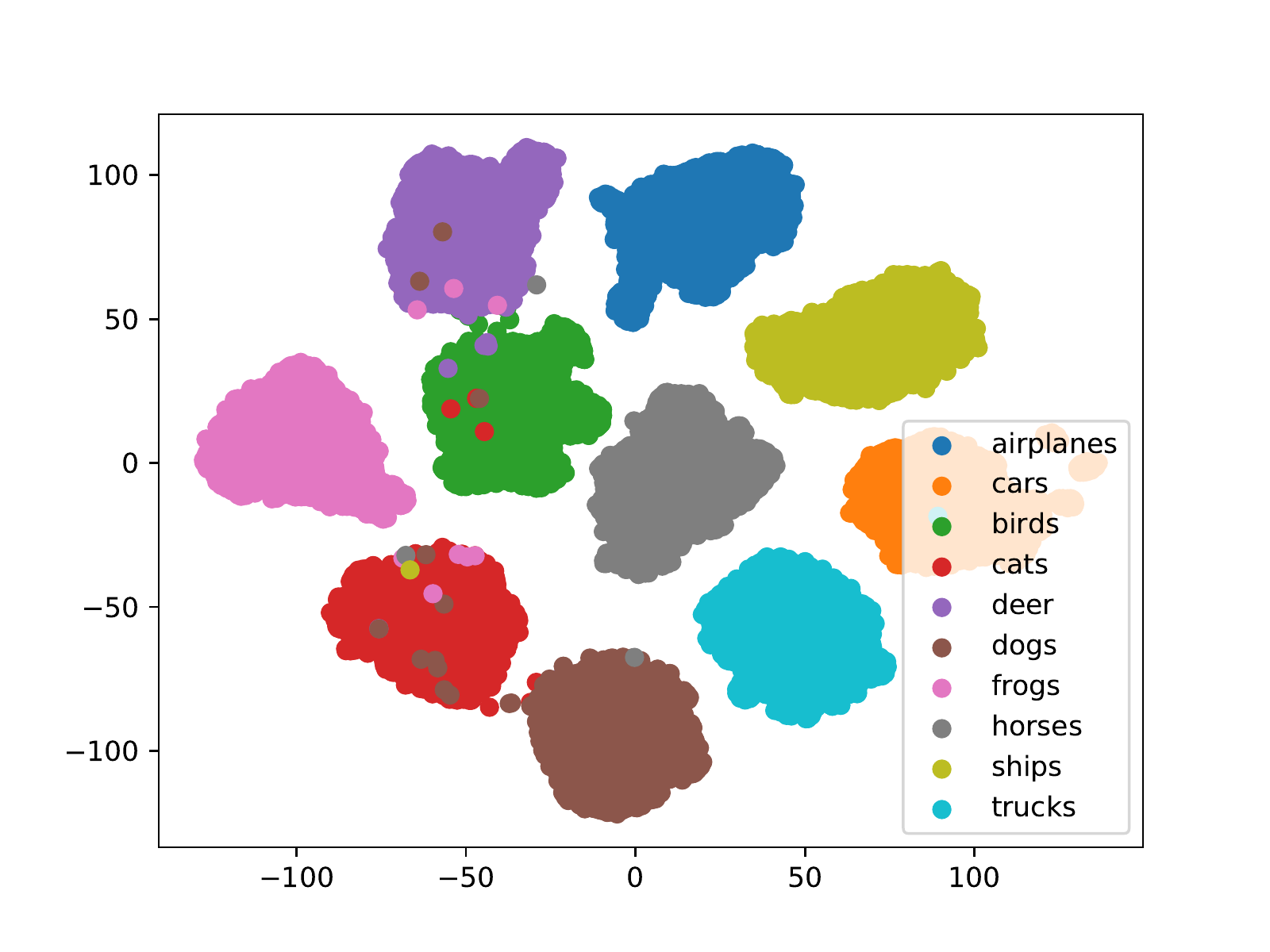}}
% \subfigure[DDIM + InfoNCE]{	\includegraphics[width=.45\textwidth]{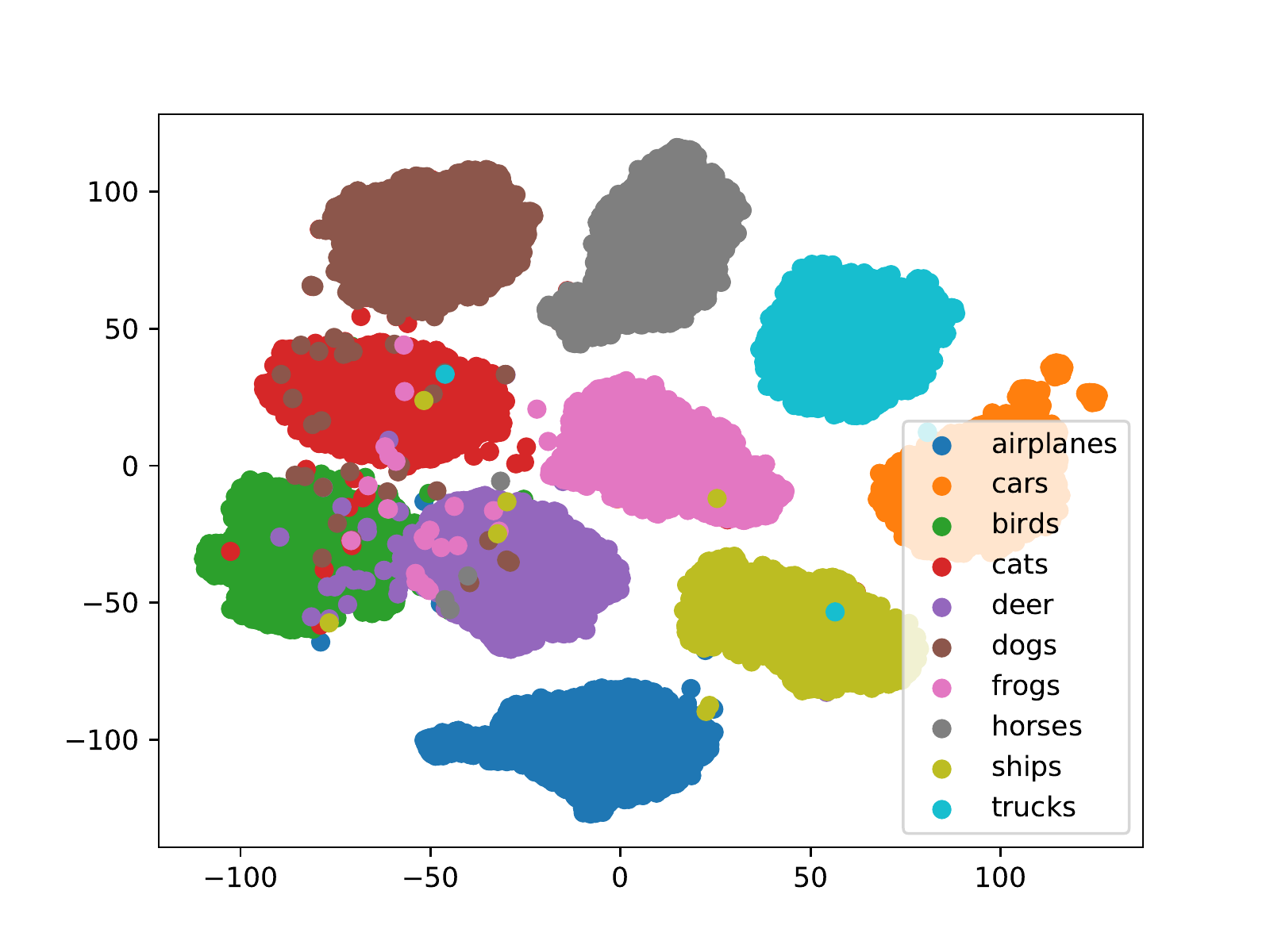}}
% \\
% \subfigure[conditional (Our model)]{	\includegraphics[width=.45\textwidth]{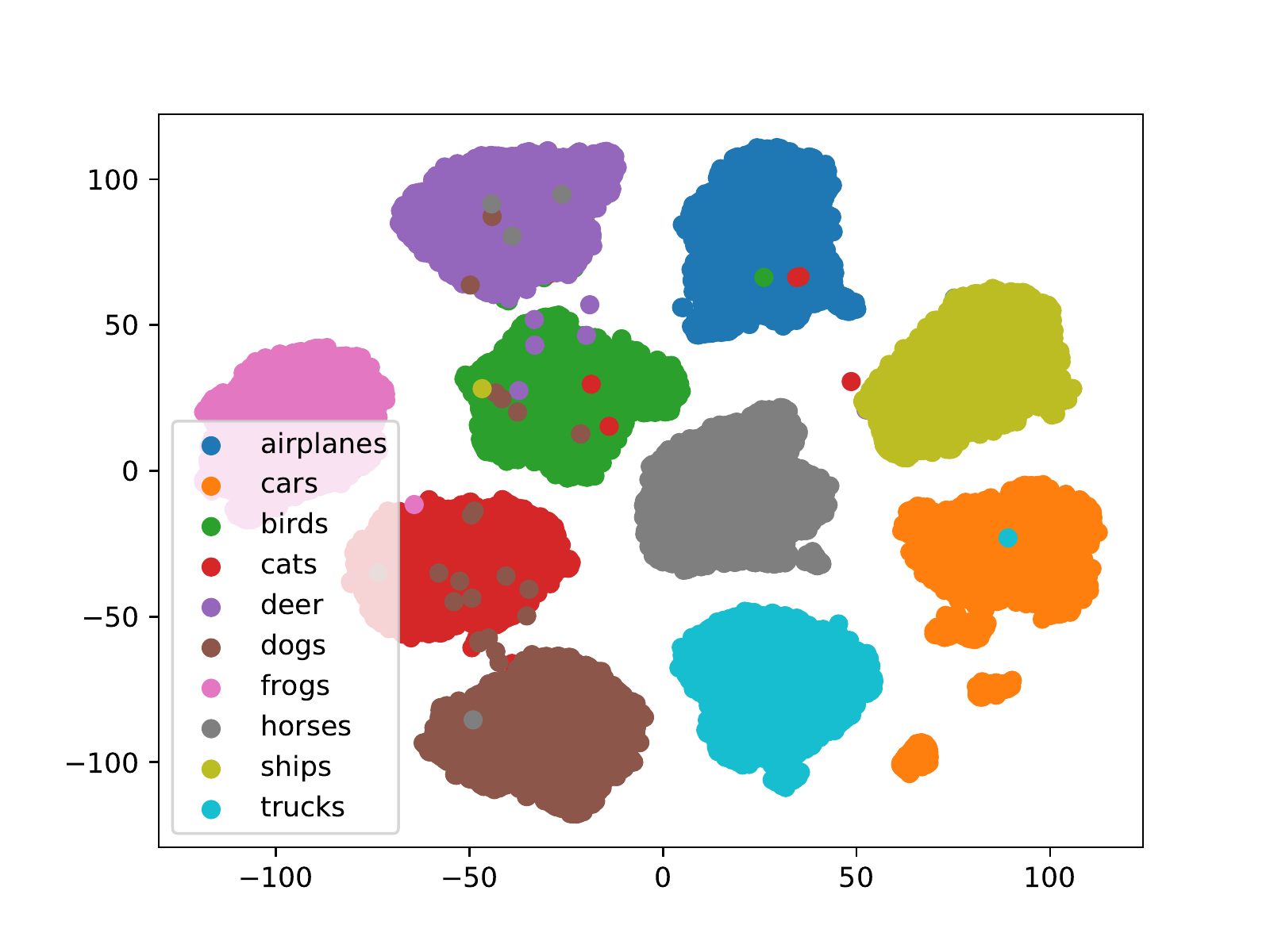}}
% \subfigure[conditional]{	\includegraphics[width=.45\textwidth]{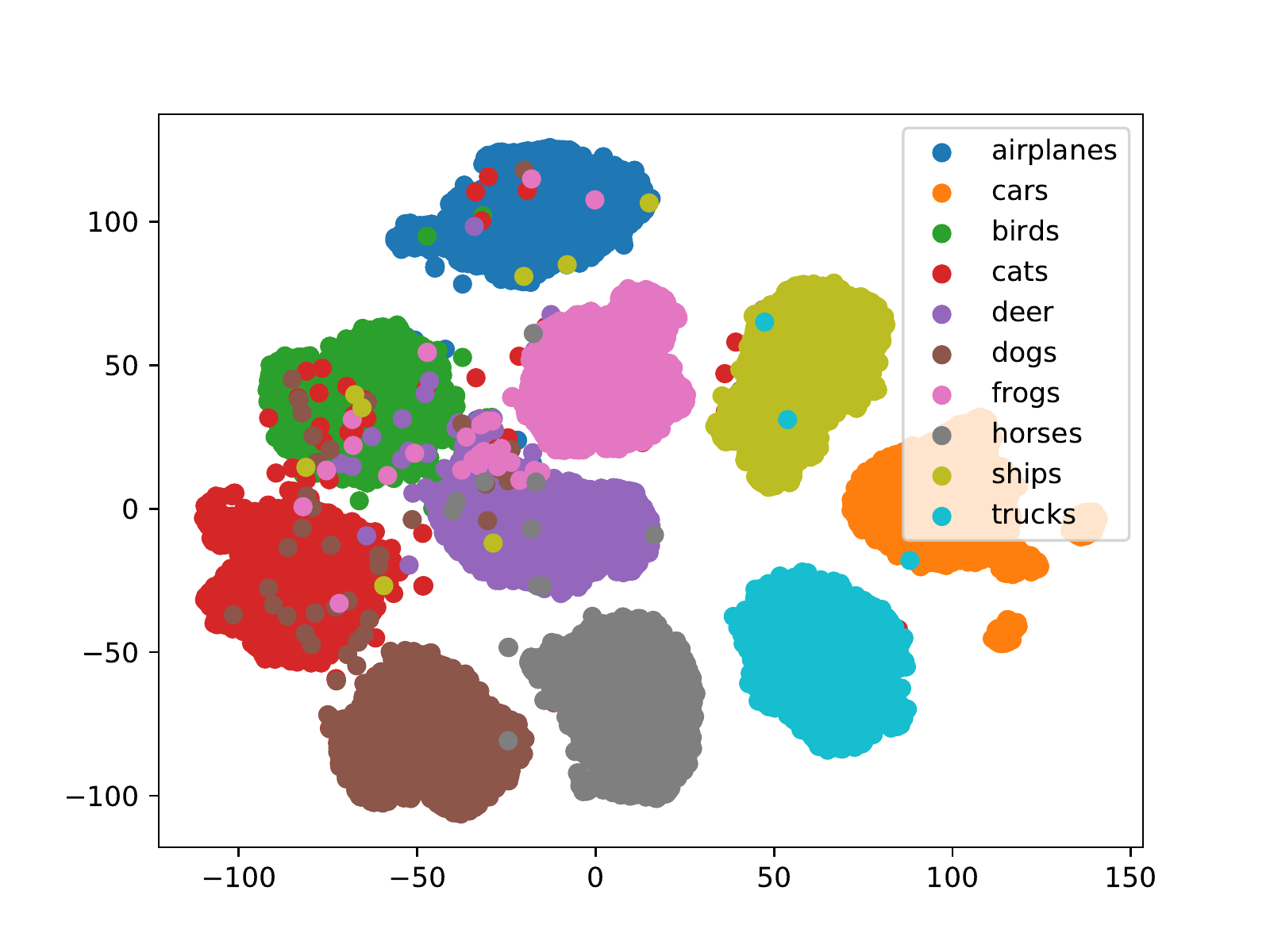}}
\subfigure[DDIM]{	\includegraphics[width=.23\textwidth]{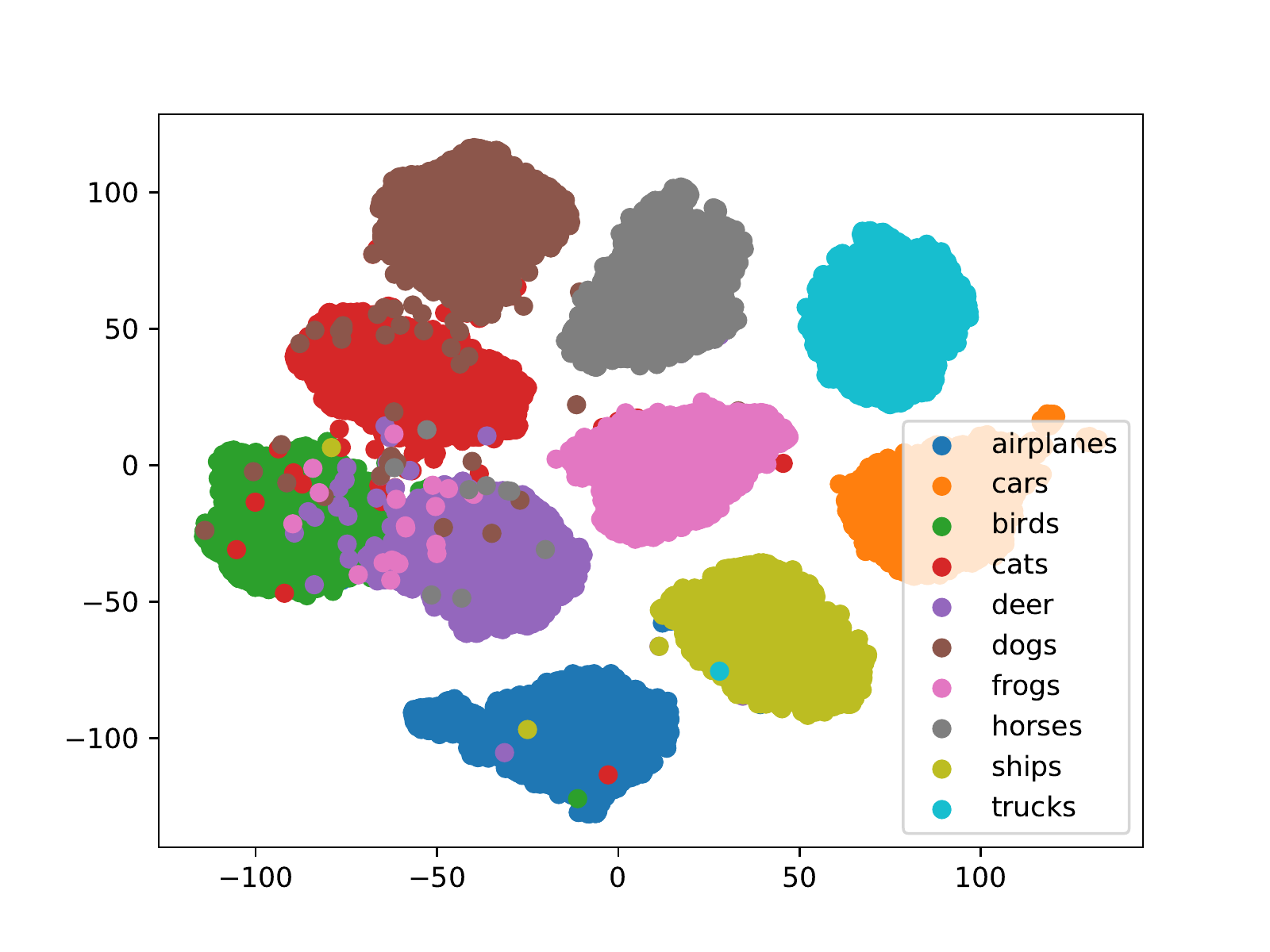}}
\subfigure[Contrastive-DP]{	\includegraphics[width=.23\textwidth]{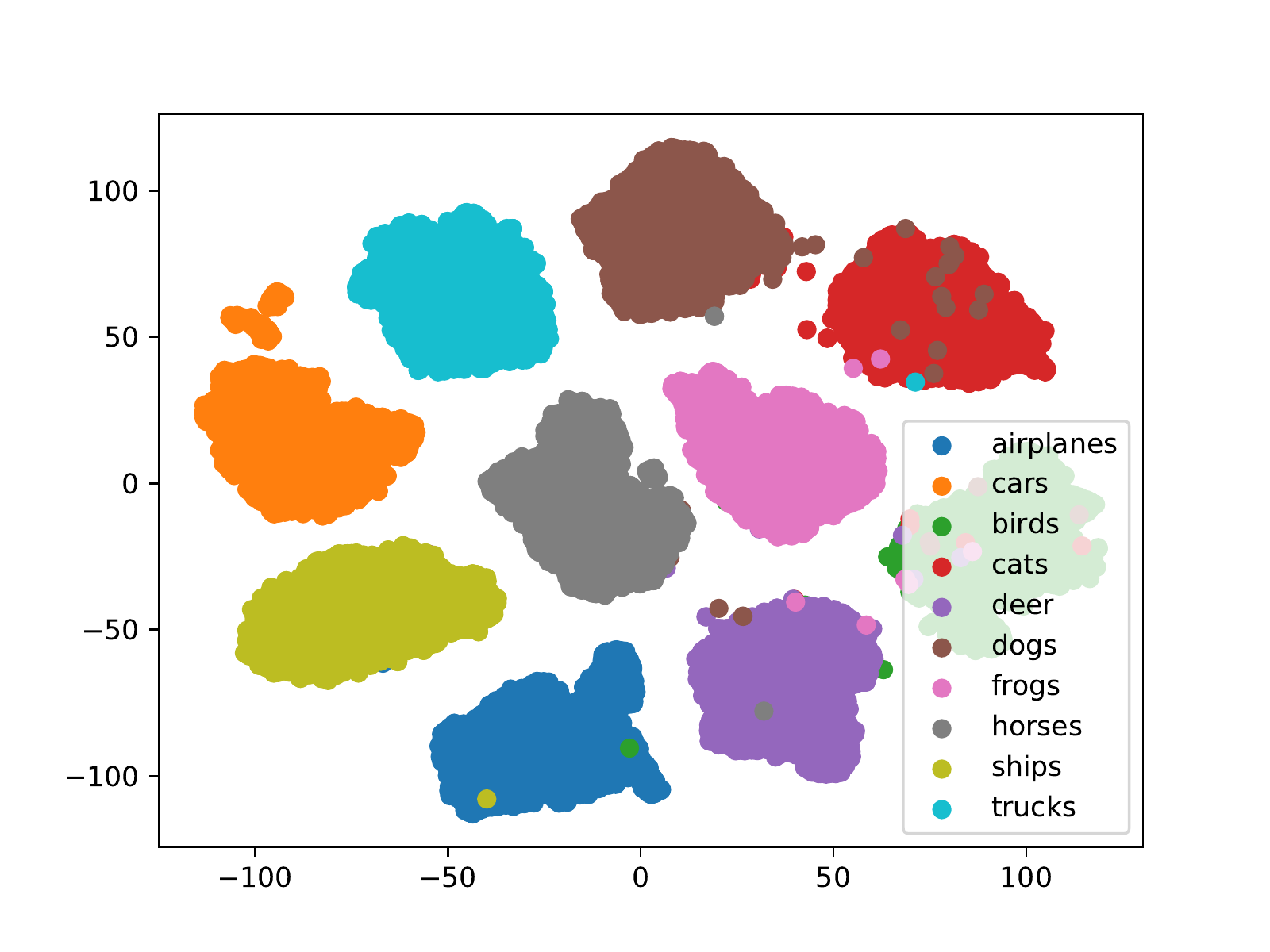}}

	\caption{A comparison of the T-SNE of the final classifier learned on different synthetic data on the CIFAR-10 dataset.}
	\label{contrastive-tsne}
\end{figure} 

\subsection{Ablation Studies} \label{sec:ablation}

In this subsection, we investigate the effectiveness of seven kinds of contrastive loss, the effect of strength of the contrastive loss, and four proposed selection criterion for choosing more informative data from synthetic data. Due to the space limit, we refer to Appendix \ref{ab-contrastive}, \ref{ab-strength}, and \ref{data-selection} for the detailed numerical results, respectively.

% \begin{table}[htbp]
%  \centering
%  \caption{The experimental results on DDPM model.}
%  \vskip 0.15in
%   \begin{tabular}{lrr}
% \toprule     & \multicolumn{1}{l}{clean acc} & \multicolumn{1}{l}{adv acc} \\
% \midrule
%   DDPM 50k 0.7 deepmind & 81.02\% & 53.62\% \\
%   DDPM 200K deepmind & 84.93\% & 57.61\% \\
%   DDPM 1M deepmind & 85.14\% & 58.67\% \\
%   DDPM 1M deepmind reported & 85.98\% & 60.73\% \\
%   \midrule
%   DDPM 50k 0.3 regenerate & \textbf{84.28\%} & 55.08\% \\
%   DDPM 200K regenerate & 85.45\%(0.56\%) & \textbf{55.90\%} \\
%   DDPM 1M regenerate & \textbf{86.86\%(0.04\%)} & \textbf{56.94\%(0.16\%)} \\
%   DDPM+hard negative mining 50k 0.3 & 84.07\% & \textbf{55.54\%} \\
%   DDPM+hard negative mining 200K 0.7 & \textbf{86.58\%} & 55.44\% \\
%   DDPM+hard negative mining 1M 0.7 & 86.22\% & 56.93\% \\
%     \bottomrule
%   \end{tabular}%
%  \label{tab:ablation-ddpm}%
% \end{table}%
 
\section{Related Work} 

Using generative models to improve adversarial robustness has attracted increasing attention recently. \cite{gowal2021improving} uses 100M high-quality images generated by DDPM together with the original training set to achieve state-of-the-art performance on the CIFAR-10 dataset. They propose to use Complementary as an important metric for measuring the efficacy of the synthetic data. In \cite{sehwag2021improving}, it was claimed that the transferability of adversarial robustness between two data distributions is measured by conditional Wasserstein distance, which inspires us to use it as a criterion for selecting samples. Our work follows the same line, but we investigate how to generate the samples with high information rather than applying the selection to the data generated by the vanilla diffusion model. Below we also summarize some closely related work in different lines. 

% \vspace{-0.1in}
% \paragraph{Sample-efficient training.}
% Sample-efficient training algorithms aim to improve the performance of the model by using less training data. Data augmentation methods are used for generating new training data based on original data.
% Active learning methods \cite{Houlsby2011BayesianAL,Gal2017DeepBA,Sinha2019VariationalAA,Kim2021TaskAwareVA} use the acquisition function to evaluate the information of training samples and use this criterion to select training data. 
% \vspace{-0.1in}
\paragraph{Sample-Efficient Generation.} We can view the sample-efficient generation problem as a Bi-level optimization problem. We can regard how to synthesize data as the meta objective and the performance of the model trained on the synthetic data as the inner objective. % We can use the performance on the downstream tasks to guide how to generate data. 
For data-augmentation based methods, \cite{Ruiz2019LearningTS} adopt a reinforcement learning based method for optimizing the generator in order to maximize the training accuracy. 
For active learning based methods, \cite{Tran2019BayesianGA} use an Auto-Encoder to generate new samples based on the informative training data selected by the acquisition function. Besides, \cite{Kim2020LADALD} combines the active learning criterion with data augmentation methods. They use the gradient of acquisition function after one-step augmentation as guidance for training the augmentation policy network.

% Contrastive learning for GAN 
% However, as far as we know, there is no work showing why we should use the contrastive learning loss during the diffusion process and how to design the contrastive loss for the diffusion model.
% \vspace{-0.1in}
\paragraph{Theoretical Analysis of Adversarial Robustness.}
In \cite{Schmidt2018AdversariallyRG}, the sample complexity of adversarial robustness has been shown to be substantially larger than standard classification tasks in the Gaussian setting. \cite{Carmon2019UnlabeledDI} bridges this gap by using the self-training paradigm and corresponding unlabeled data. \cite{Deng2021ImprovingAR} further extends the aforementioned conclusion by leveraging out-of-domain unlabeled data. However, there still lacks analysis on the optimal distribution for synthetic data and the corresponding generation algorithm. %\yd{Rightness}
% \vspace{-0.1in}
\paragraph{Contrastive Learning.}
Contrastive learning algorithms have been widely used for representation learning \citep{Chen2020ASF,He2020MomentumCF,Grill2020BootstrapYO}. The vanilla contrastive learning loss, InfoNCE \citep{Oord2018RepresentationLW}, aims to draw the distance between positive pairs and push the negative pairs away. To mitigate the problem that not all negative pairs may be true negatives, the negative hard mining criterion was proposed in \citep{Chuang2020DebiasedCL, Robinson2021ContrastiveLW}.

\section{Conclusion and Discussion}

We delve into which kind of synthetic distribution is optimal for the downstream task, especially for achieving adversarial robustness in image data classification. We derive the optimality condition under the Gaussian setting and propose the Contrastive-guided Diffusion Process (Contrastive-DP), a plug-in algorithm suitable for various types of diffusion models. We verify our theoretical results on the simulated Gaussian example and demonstrate the superiority of the Contrastive-DP algorithm on real image datasets. 

It would also be interesting to study the theoretical guarantee of the contrastive-guided diffusion process from the perspective of optimal control. We believe that the proposed plug-in type algorithm can also be generalized to loss functions other than contrastive loss, such as the acquisition function in active learning, for other downstream tasks. 

% \section{Acknowledgement}
% Yidong Ouyang and Liyan Xie are partially supported by UDF01002142 through The Chinese University of Hong Kong, Shenzhen, and Grant J00220220004 through Shenzhen Research Institute of Big Data.

\section{Acknowledgement}
Yidong Ouyang and Liyan Xie are partially supported by UDF01002142 through The Chinese University of Hong Kong, Shenzhen, and Grant J00220220004 through Shenzhen Research Institute of Big Data. Guang Cheng is partially supported by Office of Naval Research, ONR (N00014-22-1-2680), NSF – SCALE MoDL (2134209), and Meta gift fund.

%\clearpage
% In the unusual situation where you want a paper to appear in the
% references without citing it in the main text, use \nocite
% \nocite{langley00}
\bibliographystyle{icml2023}
\bibliography{icml2023}

\newpage
\appendix
\onecolumn

\section{Theoretical Details for Section \ref{sec:theory}}

\subsection{Error probabilities in closed form.}

Here, we briefly recapitulate the closed-form formulation for the standard and robust error probabilities as detailed in \cite{Carmon2019UnlabeledDI,Deng2021ImprovingAR}.

The standard error probability can be written as
\begin{align}
\mathrm{err}_{\text {standard }}(f_{\boldsymbol{\theta}})=\mathbb{P}\left(y \cdot \boldsymbol{x}^{\top} \boldsymbol{\theta}<0\right)=\mathbb{P}\left(\mathcal{N}\left(\frac{\boldsymbol{\mu}^{\top} \boldsymbol{\theta}}{\sigma\|\boldsymbol{\theta}\|}, 1\right)<0\right)= Q\left(\frac{\boldsymbol{\mu}^{\top} \boldsymbol{\theta}}{\sigma\|\boldsymbol{\theta}\|}\right),
\label{eq-standard-error}
\end{align}
where
$$
Q(x)=\frac{1}{\sqrt{2 \pi}} \int_{x}^{\infty} e^{-t^{2} / 2} d t
$$
is the Gaussian error function and is non-increasing. Clearly the standard error probability is minimized when $\frac{\boldsymbol{\theta}}{\left\Vert \boldsymbol{\theta}\right\Vert}=\frac{\boldsymbol{\mu}}{\left\Vert \boldsymbol{\mu}\right\Vert}$, i.e., $\boldsymbol{\theta} = c\boldsymbol{\mu}$ for some scalar $c>0$. We may impost $\left\Vert \boldsymbol{\theta}\right\Vert_2 = 1$ to ensure the unique solution $\boldsymbol{\theta} = \boldsymbol{\mu}/\left\Vert \boldsymbol{\mu}\right\Vert$.

The robust error probability under the $\ell_\infty$ adversarial set $\Delta=\{\delta:\left\Vert \boldsymbol{\delta} \right\Vert_\infty\leq\epsilon\}$ is
\begin{align}\label{eq-rob-error}
\mathrm{err}_{\text {robust }}^{\infty, \varepsilon}(f_{\boldsymbol{\theta}}) &=\mathbb{P}\left(\inf _{\|\boldsymbol{\nu}\|_{\infty} \leq \varepsilon}\left\{y \cdot(\boldsymbol{x}+\boldsymbol{\nu})^{\top} \boldsymbol{\theta}\right\}<0\right) \notag\\
&=\mathbb{P}\left(y \cdot \boldsymbol{x}^{\top} \boldsymbol{\theta}-\varepsilon\|\boldsymbol{\theta}\|_{1}<0\right)=\mathbb{P}\left(\mathcal{N}\left(\boldsymbol{\mu}^{\top} \boldsymbol{\theta},\sigma^2\|\boldsymbol{\theta}\|^{2}\right)<\varepsilon\|\boldsymbol{\theta}\|_{1}\right) \notag\\
&=Q\left(\frac{\boldsymbol{\mu}^{\top} \boldsymbol{\theta}}{\sigma\|\boldsymbol{\theta}\|}-\frac{\varepsilon\|\boldsymbol{\theta}\|_{1}}{\sigma\|\boldsymbol{\theta}\|}\right).
\end{align}

In the following part, we use a simpler notation $\mathrm{err}_{\text {robust }}(f_{\boldsymbol{\theta}})$ for the robust error $\mathrm{err}_{\text {robust }}^{\infty, \varepsilon}(f_{\boldsymbol{\theta}})$ without ambiguity. The closed-form of the optimal $\boldsymbol{\theta}^*$ that minimizes the above robust error $\mathrm{err}_{\text {robust }}$ can be shown to be \citep{Deng2021ImprovingAR}:
$$
\boldsymbol{\theta}^*=\frac{T_{\varepsilon}(\boldsymbol{\mu})}{\left\|T_{\varepsilon}(\boldsymbol{\mu})\right\|},
$$
where $T_{\varepsilon}(\boldsymbol{\mu}):\mathbb{R}^d\to \mathbb{R}^d$ is the hard-thresholding operator with $\left(T_{\varepsilon}(\boldsymbol{\mu})\right)_{j}=\operatorname{sign}\left(\boldsymbol{\mu}_{j}\right) \cdot \max \left\{\left|\boldsymbol{\mu}_{j}\right|-\varepsilon, 0\right\}$.
Under the mild assumption $\boldsymbol{\mu}_{j}>\varepsilon, \forall j \in \{1,2, \ldots, d\}$, the optimal solution can be simplified as: 
$$\boldsymbol{\theta}^*=\frac{\boldsymbol{\mu}-\varepsilon \mathbf{1}_d}{\|\boldsymbol{\mu}-\varepsilon \mathbf{1}_d\|}.$$

\begin{remark} \label{trade-off}
Note that when $\boldsymbol{\mu}=c\mathbf{1}_d$ for some constant $c>\epsilon$, the optimal solution $\boldsymbol{\theta}^*=\frac{\boldsymbol{\mu}-\varepsilon \mathbf{1}_d}{\|\boldsymbol{\mu}-\varepsilon \mathbf{1}_d\|}$ for minimizing the robust error is the same as the optimal solution $\frac{\boldsymbol{\mu}}{\left\Vert \boldsymbol{\mu}\right\Vert}$ for minimizing the standard error. Otherwise, these two solutions are different, representing a trade-off between robustness and accuracy.
\end{remark}

\subsection{Details for the theoretical analysis in Section \ref{sec:theory}}

%\begin{proof}[Proof of Lemma \ref{theorem1}]\label{Lemma1}

Overall, we would like to design an appropriate synthetic distribution $\widetilde{\mathcal{D}}$ that can help optimize the adversarial classification accuracy in the downstream task. 
First note that by Bayes rule, the optimal decision boundary for the true distribution $\boldsymbol{x}|y\sim \mathcal{N}(y\boldsymbol{\mu},\sigma^2 \mathbb{I})$ is given by $\boldsymbol{\mu}^\top\boldsymbol{x}=0$, i.e., the optimal classifier is parameterized by $\boldsymbol{\theta}=c \boldsymbol{\mu}$ for any $c>0$. Therefore, we restrict our attention to synthetic data distributions that satisfy the following two conditions:
\begin{enumerate}[leftmargin=*]
\item The marginal probability density $p(\tilde{y})$ of the synthetic distribution matches $p(y)$ of the real data distribution well.
\item The conditional probability densities $p(\boldsymbol{\tilde{x}}|\tilde{y}=1)$ and $p(\boldsymbol{\tilde{x}}|\tilde{y}=-1)$ of the synthetic data distribution are symmetric around the true optimal decision boundary $\boldsymbol{\mu}^\top\boldsymbol{x}=0$.
\end{enumerate}
More specifically, we consider a special case of the synthetic data distribution $\widetilde{\mathcal{D}}_\mathcal{X}=0.5\mathcal{N}(\tilde{\boldsymbol{\mu}},\sigma^2 \mathbb{I})+0.5 \mathcal{N}(-\tilde{\boldsymbol{\mu}},\sigma^2 \mathbb{I})$.

\begin{proof}[Proof of Proposition \ref{theorem3}]\label{proof-lemma}

We follow the proof strategy in \cite{Carmon2019UnlabeledDI}.
Let $b_{i}$ be the indicator that the $i$-th pseudo-label $\tilde{y}_{i}$ assigned to $\tilde{\boldsymbol{x}}_{i}$ is incorrect, so that we have $\tilde{\boldsymbol{x}}_{i} \sim \mathcal{N}\left(\left(1-2 b_{i}\right) \tilde{y}_{i} \tilde{\boldsymbol{\mu}}, \sigma^{2} \mathbb{I}\right)$. 
Let
$
\gamma:=\frac{1}{\tilde{n}} \sum_{i=1}^{\bar{n}}\left(1-2 b_{i}\right) \in[-1,1]
$ 
and 
$\alpha:=\frac{\tilde{n}}{\tilde{n}+n}$.
Note that the true data samples $\boldsymbol{x}_{i} \sim \mathcal{N}\left( y_{i} \boldsymbol{\mu}, \sigma^{2} \mathbb{I}\right)$, thus we may write the final estimator as
\[
\begin{aligned}
\hat{\boldsymbol{\theta}}_{\text {final }}
&=\frac{1}{n+\tilde{n}}( \sum_{j=1}^{\tilde{n}} \tilde{y}_j\tilde{\boldsymbol{x}}_j + \sum_{i=1}^n y_i\boldsymbol{x}_i )\\
& = \frac{1}{n+\tilde{n}}( \sum_{j=1}^{\tilde{n}} [\left(1-2 b_{j}\right) \tilde{y}_{j} \tilde{\boldsymbol{\mu}} + \tilde{\boldsymbol{\epsilon_j}}]\cdot \tilde{y}_j + \sum_{i=1}^n [\boldsymbol{\mu} y_{i} + \boldsymbol{\epsilon_i}]\cdot y_i ) \\
&=\alpha \gamma \tilde{\boldsymbol{\mu}}+\frac{1}{n+\tilde{n}} \sum_{i=1}^{\tilde{n}} \tilde{y}_{i} \tilde{\boldsymbol{\varepsilon}}_{i}+(1-\alpha)\boldsymbol{\mu}+\frac{1}{n+\tilde{n}}\sum_{i=1}^n y_i\boldsymbol{\varepsilon}_i\\
&=\alpha \gamma \tilde{\boldsymbol{\mu}}+(1-\alpha)\boldsymbol{\mu}+\frac{1}{n+\tilde{n}}(\sum_{i=1}^{n} y_i\boldsymbol{\varepsilon}_i+\sum_{i=1}^{\tilde{n}} \tilde{y}_i\tilde{\boldsymbol{\varepsilon}}_i),
\end{aligned}
\]
where $\boldsymbol{\varepsilon}_{i},\tilde{\boldsymbol{\varepsilon}}_{i} \sim \mathcal{N}\left(\mathbf{0}, \sigma^{2} \mathbb{I}\right)$ independent of each other, and the marginal probability density $p(\tilde{y})$ matches $p(y)$ well. 
Defining $\tilde{\boldsymbol{\delta}}:=\hat{\boldsymbol{\theta}}_{\text {final }}-\alpha\gamma \tilde{\boldsymbol{\mu}}-(1-\alpha)\boldsymbol{\mu}$. 

By (\ref{eq-standard-error}), we have that the standard error of $f_{\hat{\boldsymbol{\theta}}_{\text {final }}}$ is a non-increasing function of $\frac{\boldsymbol{\mu}^{\top} \hat{\boldsymbol{\theta}}_{\text {final }}}{\sigma\|\hat{\boldsymbol{\theta}}_{\text {final }}\|}$. Note that when $\tilde{n}$ is large enough, we have $\alpha\to 1$ and the direction of $\hat{\boldsymbol{\theta}}_{\text {final }}$ approach the direction of $\tilde{\boldsymbol{\mu}}$. Therefore, the statement in Case 1 holds as a consequence, and similarly for the robust error according to (\ref{eq-rob-error}). 

The remaining proof on Case 2 and Case 3 is based on a detailed discussion for the squared inverse of the term $\frac{\boldsymbol{\mu}^{\top} \hat{\boldsymbol{\theta}}_{\text {final }}}{\sigma\|\hat{\boldsymbol{\theta}}_{\text {final }}\|}$:
\begin{equation}\label{eq:insideQ}
\begin{aligned}
\frac{\|\hat{\boldsymbol{\theta}}_{\text {final }}\|^{2}}{(\boldsymbol{\mu}^{\top} \hat{\boldsymbol{\theta}}_{\text {final }})^{2}} 
&=\frac{\|\tilde{\boldsymbol{\delta}}+\alpha \gamma \tilde{\boldsymbol{\mu}}+(1- \alpha)\boldsymbol{\mu}\|^{2}}{(\alpha\gamma  \langle\boldsymbol{\mu},\tilde{\boldsymbol{\mu}}\rangle+\boldsymbol{\mu}^\top \tilde{\boldsymbol{\delta}}+(1-\alpha)\|\boldsymbol{\mu}\|^2)^{2}}.
%&=\frac{2\alpha(1-\alpha)\gamma \langle\boldsymbol{\mu},\tilde{\boldsymbol{\mu}}\rangle +\|\tilde{\boldsymbol{\delta}}\|^2+\alpha^2 \gamma^2 \|\tilde{\boldsymbol{\mu}}\|^2+(1- \alpha)^2\|\boldsymbol{\mu}\|^{2}+2\alpha \gamma \tilde{\boldsymbol{\mu}}^\top\tilde{\boldsymbol{\delta}}+2(1-\alpha)\boldsymbol{\mu}^\top\tilde{\boldsymbol{\delta}}}{\alpha^2 \gamma^2 \langle\boldsymbol{\mu},\tilde{\boldsymbol{\mu}}\rangle^2+2 \alpha\gamma (\boldsymbol{\mu}^\top\tilde{\boldsymbol{\delta}}+(1- \alpha)\|\boldsymbol{\mu}\|^{2})\langle\boldsymbol{\mu},\tilde{\boldsymbol{\mu}}\rangle+ (\boldsymbol{\mu}^\top \tilde{\boldsymbol{\delta}})^2+(1-\alpha)^2\|\boldsymbol{\mu}\|^4+2(1- \alpha)\boldsymbol{\mu}^\top \tilde{\boldsymbol{\delta}}\|\boldsymbol{\mu}\|^2}
\end{aligned}
\end{equation}
Note that the larger the quantity in (\ref{eq:insideQ}) is, the larger the standard error of $f_{\hat{\boldsymbol{\theta}}_{\text {final }}}$.

{Case 2.} Assume $\tilde{\boldsymbol{\mu}}=c\boldsymbol{\mu}$.
%Since the derivative of (\ref{eq:insideQ}) is negative when $\langle\boldsymbol{\mu},\tilde{\boldsymbol{\mu}}\rangle>0$, $\frac{\|\hat{\boldsymbol{\theta}}_{\text {final }}\|^{2}}{(\boldsymbol{\mu}^{\top} \hat{\boldsymbol{\theta}}_{\text {final }})^{2}}$ achieves its minimum when $\tilde{\boldsymbol{\mu}}=c\boldsymbol{\mu}$. Thus, the standard error $\mathrm{err}_{\text {standard }}(f_{\hat{\boldsymbol{\theta}}_{\text {final }}})$ achieves its minimum, which proves the first part of Case 2.
Then we have (\ref{eq:insideQ}) reduces to:
\begin{align}
\frac{\|\hat{\boldsymbol{\theta}}_{\text {final }}\|^{2}}{(\boldsymbol{\mu}^{\top} \hat{\boldsymbol{\theta}}_{\text {final }})^{2}} &=\frac{\|\tilde{\boldsymbol{\delta}}+(1-\alpha+c\gamma\alpha)\boldsymbol{\mu}\|^{2}}{\left((1-\alpha+c\gamma\alpha)\|\boldsymbol{\mu}\|^{2}+\boldsymbol{\mu}^{\top} \tilde{\boldsymbol{\delta}}\right)^{2}}\\
&=\frac{1}{\|\boldsymbol{\mu}\|^{2}}+\frac{\|\tilde{\boldsymbol{\delta}}+(1-\alpha+c\gamma\alpha)\boldsymbol{\mu}\|^{2}-\frac{1}{\|\boldsymbol{\mu}\|^{2}}\left((1-\alpha+c\gamma\alpha)\|\boldsymbol{\mu}\|^{2}+\boldsymbol{\mu}^{\top} \tilde{\boldsymbol{\delta}}\right)^{2}}{\left((1-\alpha+c\gamma\alpha)\|\boldsymbol{\mu}\|^{2}+\boldsymbol{\mu}^{\top} \tilde{\boldsymbol{\delta}}\right)^{2}} \notag\\
&=\frac{1}{\|\boldsymbol{\mu}\|^{2}}+\frac{\|\tilde{\boldsymbol{\delta}}\|^{2}-\frac{1}{\|\boldsymbol{\mu}\|^{2}}(\boldsymbol{\mu}^{\top} \tilde{\boldsymbol{\delta}})^{2}}{\left((1-\alpha+c\gamma\alpha)\|\boldsymbol{\mu}\|^{2}+\boldsymbol{\mu}^{\top} \tilde{\boldsymbol{\delta}}\right)^2}, \label{eq-tilde-standard}
 \end{align}
which demonstrates that the bigger the $c$ is, the smaller the standard error $\mathrm{err}_{\text {standard }}(f_{\hat{\boldsymbol{\theta}}_{\text{final}}})$ is, which verifies the second part of Case 2.

{Case 3.} Assume $\tilde{\boldsymbol{\mu}}=c (\boldsymbol{\mu} - \varepsilon \mathbf{1}_d)$. Similar to Case 2, we rewrite the term inside the robust error function (\ref{eq-rob-error}) as:
%Similar to the Case 2, the derivative of $\frac{\|\hat{\boldsymbol{\theta}}_{\text {final }}\|^{2}}{(\boldsymbol{\mu}-\varepsilon \mathbf{1}_d)^{\top} \hat{\boldsymbol{\theta}}_{\text {final }})^{2}}$ is negative when $\langle\boldsymbol{\mu}-\varepsilon \mathbf{1}_d,\tilde{\boldsymbol{\mu}}\rangle>0$, achieves its minimum when $\tilde{\boldsymbol{\mu}}=c(\boldsymbol{\mu}-\varepsilon \mathbf{1}_d)$. Thus, the robust error $\mathrm{err}_{\text {robust }}(f_{\hat{\boldsymbol{\theta}}_{\text {final }}})$ achieves its minimum, which proves the first part of Case 3.

%\begin{scriptsize}
\begin{align}
\frac{\|\hat{\boldsymbol{\theta}}_{\text {final }}\|^{2}}{\left((\boldsymbol{\mu}-\varepsilon \mathbf{1}_d)^{\top} \hat{\boldsymbol{\theta}}_{\text {final }}\right)^{2}}
=& \frac{\|\tilde{\boldsymbol{\delta}}+(1-\alpha)\boldsymbol{\mu}+c\gamma\alpha(\boldsymbol{\mu}-\varepsilon \mathbf{1}_d)\|^{2}}{\left( c\gamma\alpha\|\boldsymbol{\mu}-\varepsilon \mathbf{1}_d\|^{2}+ (1-\alpha)\boldsymbol{\mu}^\top(\boldsymbol{\mu}-\varepsilon \mathbf{1}_d) + (\boldsymbol{\mu}-\varepsilon \mathbf{1}_d)^{\top} \tilde{\boldsymbol{\delta}}\right)^{2}} \notag\\
%&= \frac{1}{\|\boldsymbol{\mu}-\varepsilon \mathbf{1}_d\|^{2}}+\frac{\|\tilde{\boldsymbol{\delta}}+(1-\alpha+c\gamma\alpha)(\boldsymbol{\mu}-\varepsilon \mathbf{1}_d)\|^{2}-\frac{1}{\|\boldsymbol{\mu}-\varepsilon \mathbf{1}_d\|^{2}}\left((1-\alpha+c\gamma\alpha)\|\boldsymbol{\mu}-\varepsilon \mathbf{1}_d\|^{2}+(\boldsymbol{\mu}-\varepsilon \mathbf{1}_d)^{\top} \tilde{\boldsymbol{\delta}}\right)^{2}}{\left((1-\alpha+c\gamma\alpha)\|\boldsymbol{\mu}-\varepsilon \mathbf{1}_d\|^{2}+(\boldsymbol{\mu}-\varepsilon \mathbf{1}_d)^{\top} \tilde{\boldsymbol{\delta}}\right)^{2}} \notag\\
\approx & \frac{1}{\|\boldsymbol{\mu}-\varepsilon \mathbf{1}_d\|^{2}}
+
\frac{\|\tilde{\boldsymbol{\delta}}\|^{2}-\frac{1}{\|\boldsymbol{\mu}-\varepsilon \mathbf{1}_d\|^{2}}\left((\boldsymbol{\mu}-\varepsilon \mathbf{1}_d)^{\top} \tilde{\boldsymbol{\delta}}\right)^{2}}{\left(c\gamma\alpha\|\boldsymbol{\mu}-\varepsilon \mathbf{1}_d\|^{2}+(\boldsymbol{\mu}-\varepsilon \mathbf{1}_d)^{\top} \tilde{\boldsymbol{\delta}}\right)^2}, %\\
% =& \frac{1}{\|\boldsymbol{\mu}-\varepsilon \mathbf{1}_d\|^{2}}
% +
% \frac{\|\tilde{\boldsymbol{\delta}}\|^{2}-\frac{1}{\|\boldsymbol{\mu}-\varepsilon \mathbf{1}_d\|^{2}}\left((\boldsymbol{\mu}-\varepsilon \mathbf{1}_d)^{\top} \tilde{\boldsymbol{\delta}}\right)^{2}}{\left((1-\alpha+c\gamma\alpha)\|\boldsymbol{\mu}-\varepsilon \mathbf{1}_d\|^{2}+(\boldsymbol{\mu}-\varepsilon \mathbf{1}_d)^{\top} \tilde{\boldsymbol{\delta}}\right)^2}, \label{eq-tilde-rob}
 \end{align}
%\end{scriptsize}
where the last approximation is due to $\tilde n$ sufficiently large and thus $\alpha\approx 1$. The above equation demonstrates the larger the $c$ is, the smaller the robust error $\mathrm{err}_{\text {robust }}(f_{\hat{\boldsymbol{\theta}}_{\text{final}}})$ is, which proves the second part of Case 3.

\end{proof}

% $\tilde{\mu}=c\mu$ case

\begin{proof}[Proof of Theorem \ref{theorem2}] \label{Lemma2}
We follow the proof strategy in \cite{Carmon2019UnlabeledDI}, with the main difference being twofold: (i) in our final estimate $ \hat{\boldsymbol{\theta}}_{\text {final }}$ depends on both the real data and synthetic data; (ii) our synthetic data are generated from a different distribution from the true distribution.

Below we analyze the sample complexity to achieve desired robust accuracy. Recall that, under the mild assumption $\boldsymbol{\mu}_{j}>\varepsilon, \forall j \in \{1,2, \ldots, d\}$, the closed form of the robust error of $f_{\boldsymbol{\theta}}$ is $Q(\frac{(\boldsymbol{\mu}-\varepsilon \mathbf{1}_d) ^{\top} \boldsymbol{\theta}}{\sigma\|\boldsymbol{\theta}\|})$.
Since $\hat{\boldsymbol{\theta}}_{\text {final }} = (1-\alpha + c\gamma\alpha)\boldsymbol{\mu} + \tilde{\boldsymbol{\delta}}$, we have the term inside $Q(\cdot)$ function is:
\begin{equation}\label{eq:tmp1}
 \frac{(\boldsymbol{\mu}-\varepsilon \mathbf{1}_d) ^{\top} \hat{\boldsymbol{\theta}}_{\text {final }} }{\sigma\|\hat{\boldsymbol{\theta}}_{\text {final }}\|}  = \frac{(1-\alpha + c\gamma\alpha)\|\boldsymbol{\mu}\|^2 + \boldsymbol{\mu}^\top\tilde{\boldsymbol{\delta}} }{\sigma\|\boldsymbol{\theta}\|} - \frac{(\varepsilon \mathbf{1}_d) ^{\top} \hat{\boldsymbol{\theta}}_{\text {final }} }{\sigma\|\hat{\boldsymbol{\theta}}_{\text {final }}\|}. 
\end{equation}

We consider the parameter setting:
\begin{equation}\label{eq:para}
  \epsilon < \frac12, \ \sigma = (nd)^{1/4}, \ \|\boldsymbol{\mu}\|^2 = d.
\end{equation}
Under such a setting and under the regime that $d/n \gg 1$, we have the classifier $f_{\boldsymbol{\mu}}$ achieves almost optimal performance in both robust and standard accuracy. Thus in the following, we mainly focus on the problem of finding the minimum number of synthetic samples $\tilde n$ needed in order to ensure the estimate $\hat{\boldsymbol{\theta}}_{\text {final }}$ is close (in direction) to $\boldsymbol{\mu}$.

For the squared inverse of the first term in \eqref{eq:tmp1}, we have

\begin{align}       \frac{\|\hat{\boldsymbol{\theta}}_{\text {final }}\|^{2}}{\left(\boldsymbol{\mu}^{\top} \hat{\boldsymbol{\theta}}_{\text {final }}\right)^{2}}
&= 
\frac{ \| \tilde{\boldsymbol{\delta}} + (1-\alpha + c\gamma\alpha)\boldsymbol{\mu} \|^2 }{ \left((1-\alpha+c\gamma\alpha) \|\boldsymbol{\mu}\|^2 + \boldsymbol{\mu}^\top \tilde{\boldsymbol{\delta}} \right)} \nonumber\\
&=
  \frac{1}{\|\boldsymbol{\mu}\|^{2}}+\frac{\|\tilde{\boldsymbol{\delta}}\|^{2}-\frac{1}{\|\boldsymbol{\mu}\|^{2}}\left( \boldsymbol{\mu}^{\top} \tilde{\boldsymbol{\delta}}\right)^{2}}{\left((1-\alpha+c\gamma\alpha)\|\boldsymbol{\mu}\|^{2}+\boldsymbol{\mu}^{\top} \tilde{\boldsymbol{\delta}}\right)^2}\nonumber\\
 &\leq \frac{1}{\|\boldsymbol{\mu}\|^{2}}+\frac{\|\tilde{\boldsymbol{\delta}}\|^{2}}{\left((1-\alpha+c\gamma\alpha)\|\boldsymbol{\mu}\|^{2}+\boldsymbol{\mu}^{\top} \tilde{\boldsymbol{\delta}}\right)^2} \nonumber\\
 & = \frac{1}{\|\boldsymbol{\mu}\|^{2}}+ 
 \frac{1}{\|\boldsymbol{\mu}\|^{4}}
 \frac{\|\tilde{\boldsymbol{\delta}}\|^{2}}{\left((1-\alpha+c\gamma\alpha)+ \frac{1}{\|\boldsymbol{\mu}\|^{2}}\boldsymbol{\mu}^{\top} \tilde{\boldsymbol{\delta}}\right)^2} 
 \label{eq:theorem1rob}
 \end{align}

Note that due to the dependence between $\tilde{y}_i$ and $\tilde{\boldsymbol{\varepsilon}}_i$, the random variable $\tilde{\boldsymbol{\delta}}$ is non-Gaussian. To obtain the concentration bounds for $\|\tilde{\boldsymbol{\delta}}\|^{2}$ and $\boldsymbol{\mu}^{\top} \tilde{\boldsymbol{\delta}}$, we follow the approach used in \cite{Carmon2019UnlabeledDI} as follows. Recall $\hat{\boldsymbol{\theta}}_{\text{inter}}=\frac{1}{n}\sum_{i=1}^{n} y_i\boldsymbol{x}_i$, $\tilde{y}_i=\operatorname{sign}(\hat{\boldsymbol{\theta}}_{\text{inter}}^\top\tilde{\boldsymbol{x}}_i)$, and $\tilde{\boldsymbol{\delta}}= \frac{1}{n+\tilde{n}}(\sum_{i=1}^{n} y_i\boldsymbol{\varepsilon}_i+\sum_{i=1}^{\tilde{n}} \tilde{y}_i\tilde{\boldsymbol{\varepsilon}}_i)$.
Find a coordinate system such that the first coordinate is in the direction of $\hat{\boldsymbol{\theta}}_{\text {inter }}$, and let $v^{(i)}$ denote the $i$ th entry of vector $v$ in this coordinate system. Then
$$
\tilde{y}_i=\operatorname{sign}\left(\tilde{\boldsymbol{x}}_i^{(1)}\right)=\operatorname{sign}\left(\boldsymbol{\mu}^{(1)}+\tilde{\boldsymbol{\varepsilon}}_i^{(1)}\right) .
$$
Consequently, $\tilde{\boldsymbol{\varepsilon}}_i^{(j)}$ is independent of $\tilde{y}_i$ for all $i$ and $j \geq 2$, so that $\tilde{y}_i \tilde{\boldsymbol{\varepsilon}}_i^{(j)} \sim \mathcal{N}\left(0, \sigma^2\right)$ and $\frac{1}{n+\tilde{n}} (\sum_{i=1}^{n} y_i \boldsymbol{\varepsilon}_i^{(j)}+\sum_{i=1}^{\tilde{n}} \tilde{y}_i \tilde{\boldsymbol{\varepsilon}}_i^{(j)}) \sim \mathcal{N}\left(0, \sigma^2 / (n+\tilde{n})\right)$ and
$$
\sum_{j=2}^d\left(\frac{1}{n+\tilde{n}} (\sum_{i=1}^{n} y_i \boldsymbol{\varepsilon}_i^{(j)}+\sum_{i=1}^{\tilde{n}} \tilde{y}_i \tilde{\boldsymbol{\varepsilon}}_i^{(j)})\right)^2 \sim \frac{\sigma^2}{n+\tilde{n}} \chi_{d-1}^2 .
$$

By the Cauchy-Schwarz inequality, we have:
$$
\begin{aligned}
  &\left(\frac{1}{n+\tilde{n}} (\sum_{i=1}^{n} y_i \boldsymbol{\varepsilon}_i^{(1)}+\sum_{i=1}^{\tilde{n}} \tilde{y}_i \tilde{\boldsymbol{\varepsilon}}_i^{(1)})\right)^2 \\
  & \leq \frac{1}{(n+\tilde{n})^2}\left\{ \left(\sum_{i=1}^{n} y_i^2 + \sum_{i=1}^{\tilde{n}} \tilde{y}_i^2\right) \left(\sum_{i=1}^{n}\left[\boldsymbol{\varepsilon}_i^{(1)}\right]^2 + \sum_{i=1}^{\tilde{n}}\left[\tilde{\boldsymbol{\varepsilon}}_i^{(1)}\right]^2
  \right)\right\}\\
  & = \frac{1}{n+\tilde{n}} (\sum_{i=1}^{n}\left[\boldsymbol{\varepsilon}_i^{(1)}\right]^2+\sum_{i=1}^{\tilde{n}}\left[\tilde{\boldsymbol{\varepsilon}}_i^{(1)}\right]^2) \sim \frac{\sigma^2}{n+\tilde{n}} \chi_{n+\tilde{n}}^2 .
\end{aligned}
$$
Since $\|\tilde{\boldsymbol{\delta}}\|^2=\sum_{j=1}^d\left(\frac{1}{n+\tilde{n}} (\sum_{i=1}^{n} y_i \boldsymbol{\varepsilon}_i^{(j)}+\sum_{i=1}^{\tilde{n}} \tilde{y}_i \tilde{\boldsymbol{\varepsilon}}_i^{(j)})\right)^2$, we have by the union bound
$$
\begin{aligned}
\mathbb{P}\left(\|\tilde{\boldsymbol{\delta}}\|^2 \geq 2 \frac{\sigma^2}{n+\tilde{n}}(d-1+n+\tilde{n})\right) & \leq \mathbb{P}\left(\chi_{n+\tilde{n}}^2 \geq 2 (n+\tilde{n})\right)+\mathbb{P}\left(\chi_{d-1}^2 \geq 2(d-1)\right) 
\\
& \leq e^{-(\tilde{n} + n) / 8}+e^{-(d-1) / 8} .
\end{aligned}
$$
Similarly applying the Cauchy-Schwarz inequality to  
$\boldsymbol{\mu}^{\top} \tilde{\boldsymbol{\delta}}=\frac{1}{\tilde{n} + n} \big( \sum_{i=1}^{n} y_i \boldsymbol{\mu}^{\top} \boldsymbol{\varepsilon}_i + \sum_{i=1}^{\tilde{n}} \tilde{y}_i \boldsymbol{\mu}^{\top} \tilde{\boldsymbol{\varepsilon}}_i \big)$, we have
$$
\begin{aligned}
&\left(\boldsymbol{\mu}^{\top} \tilde{\boldsymbol{\delta}}\right)^2 \leq \frac{1}{(n+\tilde{n})^2}\left\{\left( \sum_{i=1}^{n} y_i^2 + \sum_{i=1}^{\tilde{n}} \tilde{y}_i^2 \right)
\left(\sum_{i=1}^{n}\left(\boldsymbol{\mu}^{\top} \boldsymbol{\varepsilon}_i\right)^2 + 
\sum_{i=1}^{\tilde{n}}\left(\boldsymbol{\mu}^{\top} \tilde{\boldsymbol{\varepsilon}}_i\right)^2
\right)\right\}\\
&=\frac{1}{n+\tilde{n}} (\sum_{i=1}^{n}\left(\boldsymbol{\mu}^{\top} \boldsymbol{\varepsilon}_i\right)^2+\sum_{i=1}^{\tilde{n}}\left(\boldsymbol{\mu}^{\top} \tilde{\boldsymbol{\varepsilon}}_i\right)^2) \sim \frac{\sigma^2\|\boldsymbol{\mu}\|^2}{n+\tilde{n}} \chi_{n+\tilde{n}}^2.
\end{aligned}
$$
Therefore we have 
$$
\mathbb{P}\left(|\boldsymbol{\mu}^{\top} \tilde{\boldsymbol{\delta}} | \geq \sqrt{2} \sigma\|\boldsymbol{\mu}\|\right)=
\mathbb{P}\left((\boldsymbol{\mu}^{\top} \tilde{\boldsymbol{\delta}})^2 \geq 2 \sigma^2\|\boldsymbol{\mu}\|^2\right) \leq e^{-(\tilde{n} +n) / 8} .
$$

Finally, we look at the random variable $\gamma$. By definition $\gamma=\frac{1}{\tilde{n}} \sum_{i=1}^{\tilde{n}}\left(1-2 b_i\right)$, where $b_i$ is the indicator that $\tilde{y}_i$ is incorrect for the feature $\tilde{\boldsymbol{x}}_i$. Denote $\tilde{y}_i^\circ \in \{-1,1\}$ as the true label for $\tilde{\boldsymbol{x}}_i$, thus we have $ \tilde{\boldsymbol{x}}_i \sim N(c \tilde{y}_i^\circ \boldsymbol{\mu}, \sigma^2)$. 
Therefore
\[
\mathbb{E}[b_i] = \mathbb{P}[b_i=1] = \mathbb{P}\left(\tilde{y}_i^\circ \cdot \tilde{\boldsymbol{x}}_i^{\top} \hat{\boldsymbol{\theta}}_{\text {inter }}<0\right)=
\mathbb{P}\left(\mathcal{N}\left(\frac{c\boldsymbol{\mu}^{\top} \hat{\boldsymbol{\theta}}_{\text {inter }}}{\sigma\|\hat{\boldsymbol{\theta}}_{\text {inter }}\|}, 1\right)<0\right)= Q\left(\frac{c\boldsymbol{\mu}^{\top} \hat{\boldsymbol{\theta}}_{\text {inter }}}{\sigma\|\hat{\boldsymbol{\theta}}_{\text {inter }}\|}\right). 
\]
Moreover since $b_i$ are Bernoulli random variables, we have $\mathsf{Var}(b_i)=\mathbb{E}[b_i](1-\mathbb{E}[b_i])\leq \mathbb{E}[b_i]$.

By definition of $Q(\cdot)$ we clearly have $Q\left(\frac{c\boldsymbol{\mu}^{\top} \hat{\boldsymbol{\theta}}_{\text {inter }}}{\sigma\|\hat{\boldsymbol{\theta}}_{\text {inter }}\|}\right) \leq Q\left(\frac{\boldsymbol{\mu}^{\top} \hat{\boldsymbol{\theta}}_{\text {inter }}}{\sigma\|\hat{\boldsymbol{\theta}}_{\text {inter }}\|}\right)$ when $c\geq 1$ and $\boldsymbol{\mu}^{\top} \hat{\boldsymbol{\theta}}_{\text {inter }}>0$ (which happens with high probability as shown below).
Thus
$$
\mathbb{E}\left[\gamma \mid \hat{\boldsymbol{\theta}}_{\text {inter }}\right]=1- 2 Q\left(\frac{c\boldsymbol{\mu}^{\top} \hat{\boldsymbol{\theta}}_{\text {inter }}}{\sigma\|\hat{\boldsymbol{\theta}}_{\text {inter }}\|}\right) \geq 1-2 \operatorname{err}_{\text {standard }}\big(f_{\hat{\boldsymbol{\theta}}_{\text {inter }}}\big),
$$
where $\operatorname{err}_{\text {standard }}$ is given in \eqref{eq-standard-error}. 

Therefore, we expect $\gamma$ to be reasonably large as long as $\operatorname{err}_{\text {standard }}(f_{\hat{\boldsymbol{\theta}}_{\text {inter }}})<\frac{1}{2}$. Similar to \cite{Carmon2019UnlabeledDI}, we have
$$
\begin{aligned}
\mathbb{P}\left(\gamma<\frac{1}{6}\right) & =\mathbb{P}\left(\frac{1}{\tilde{n}} \sum_{i=1}^{\tilde{n}}\left(1-2 b_i\right)<\frac{1}{6}\right) \\
& = \mathbb{P}\left( \frac{1}{\tilde{n}} \sum_{i=1}^{\tilde{n}}\left(1-2 b_i\right)<\frac{1}{6} | \operatorname{err}_{\text {standard }}(f_{\hat{\boldsymbol{\theta}}_{\text {inter }}}) >\frac{1}{3}\right) \cdot \mathbb{P}\left(\text { err }_{\text {standard }}(f_{\hat{\boldsymbol{\theta}}_{\text {inter }}}) >\frac{1}{3} \right) \\
& \quad +\mathbb{P}\left( \frac{1}{\tilde{n}} \sum_{i=1}^{\tilde{n}}\left(1-2 b_i\right) < \frac16 \mid \operatorname{err}_{\text {standard }}(f_{\hat{\boldsymbol{\theta}}_{\text {inter }}}) \leq \frac{1}{3}\right) \cdot \mathbb{P}\left(\text { err }_{\text {standard }}(f_{\hat{\boldsymbol{\theta}}_{\text {inter }}}) \leq \frac{1}{3}\right) \\
& \leq \mathbb{P}\left(\text { err }_{\text {standard }}(f_{\hat{\boldsymbol{\theta}}_{\text {inter }}})>\frac{1}{3}\right)+\mathbb{P}\left(\frac{1}{\tilde{n}} \sum_{i=1}^{\tilde{n}} b_i > \frac{5}{12} \mid \text { err }_{\text {standard }}\left(f_{\hat{\boldsymbol{\theta}}_{\text {inter }}}\right) \leq \frac{1}{3}\right) .
\end{aligned}
$$
For the first probability, note that
$$
\frac{1}{3} \geq Q\left(\frac{1}{2}\right) \geq Q\left(\left[2\left(1+\sqrt{n / d}\right)\right]^{-1 / 2}\right)
$$
Therefore, by Lemma 1 in \cite{Carmon2019UnlabeledDI}, for sufficiently large $d / n$,
$$
\mathbb{P}\left(\operatorname{err}_{\text {standard }}\left(f_{\hat{\boldsymbol{\theta}}_{\text {inter }}}\right)>\frac{1}{3}\right) \leq e^{-c_1 \cdot \min \left\{\sqrt{d / n}, n\left(d / n\right)^{1 / 4}\right\}}
$$
for some constant $c_1$. 

For the second probability, note that $b_i$ are i.i.d. Bernoulli random variables with mean value $Q\left(\frac{c\boldsymbol{\mu}^{\top} \hat{\boldsymbol{\theta}}_{\text {inter }}}{\sigma\|\hat{\boldsymbol{\theta}}_{\text {inter }}\|}\right) \leq \operatorname{err}_{\text {standard }}(f_{\hat{\boldsymbol{\theta}}_{\text {inter }}} )$. Therefore, by Hoeffding's inequality we have
$$
\mathbb{P}\left(\frac{1}{\tilde{n}} \sum_{i=1}^{\tilde{n}} b_i > \frac{5}{12} \mid \operatorname{err}_{\text {standard }}\left(f_{\hat{\boldsymbol{\theta}}_{\text {inter }}}\right) \leq \frac{1}{3}\right) \leq e^{-2 \tilde{n}\left(\frac{5}{12}-\frac{1}{3}\right)^2}=e^{-\tilde{n} / 72} .
$$
Define the event,
$$
\mathcal{E}:=\left\{
\|\tilde{\boldsymbol{\delta}}\|^2 \leq 2 \frac{\sigma^2}{n+\tilde{n}}(d+n+\tilde{n}),\ 
\left|\boldsymbol{\mu}^{\top} \tilde{\boldsymbol{\delta}}\right| \leq \sqrt{2} \sigma\|\boldsymbol{\mu}\|, \text { and } \gamma \geq \frac{1}{6}\right\},
$$
thus by the previous concentration bounds, we have 
\[
\mathbb{P}[\mathcal{E}^C] \leq 
e^{-(\tilde{n} + n) / 8}+e^{-(d-1) / 8}+ e^{-c_1 \cdot \min \left\{\sqrt{d / n}, n\left(d / n\right)^{1 / 4}\right\}}
+ e^{-\tilde{n} / 72} \leq e^{-c_2 \min \left\{\tilde n, \ \sqrt{d / n}, \ n\left(d / n\right)^{1 / 4}\right\} }.
\]

Suppose the event $\mathcal{E}$ holds, then for the formula in \eqref{eq:theorem1rob} we have:
$$
\frac{\left\|\hat{\boldsymbol{\theta}}_{\text {final }}\right\|^2}{\left(\boldsymbol{\mu}^{\top} \hat{\boldsymbol{\theta}}_{\text {final }}\right)^2} \leq \frac{1}{\|\boldsymbol{\mu}\|^2}+\frac{2 \sigma^2(d+n+\tilde{n})}{(n+\tilde{n})\|\boldsymbol{\mu}\|^4\left( (1-\alpha+\frac16 c\alpha) -\frac{\sqrt{2} \sigma}{\|\boldsymbol{\mu}\|}\right)^2},
$$
which, after substituting the parameter setting \eqref{eq:para}, translates into:
$$
\begin{aligned}
\frac{\sigma^2\left\|\hat{\boldsymbol{\theta}}_{\text {final }}\right\|^2}{\left(\boldsymbol{\mu}^{\top} \hat{\boldsymbol{\theta}}_{\text {final }}\right)^2} & \leq \sqrt{\frac{n}{d}}+\frac{2 n d (d+n+\tilde{n})}{(n+\tilde{n}) d^2\left( (1-\alpha+\frac16 c\alpha) -\sqrt{2}\left(\frac{n}{d}\right)^{1 / 4}\right)^2} \\
& \leq \sqrt{\frac{n}{d}}+\frac{2 n d (d+n+\tilde{n})}{(n+\tilde{n}) d^2\left( \frac16 c\alpha -\sqrt{2}\left(\frac{n}{d}\right)^{1 / 4}\right)^2} \\
& \leq \sqrt{\frac{n}{d}}+\frac{72 n}{c\tilde{n}}\left(1+\tilde{c}_1\left(\frac{n}{d}\right)^{1 / 4}\right),
\end{aligned}
$$
where $\tilde{c}_1$ is some positive constant, and above we also implicitly assumed that $d/n$ is sufficiently large. 

Combining the above results together, following the analysis in \cite{Carmon2019UnlabeledDI}, we conclude that there exists a universal constant $\tilde C$ such that for $\epsilon^2 \sqrt{d/n} \geq \tilde C$, where $n$ is the number of labeled real data used to construct the intermediate classifier, and additional $\tilde n$ synthetic feature generated with mean vector $\pm c\boldsymbol{\mu}$ and pseudo labels, we have if 
\[
\tilde n \geq \frac{288n}{c} \epsilon^2 \sqrt{\frac{d}{n}}, 
\]
we have
$$
\mathbb{E}_{\hat{\boldsymbol{\theta}}_{\text {final }}} \text { err }_{\text {robust }}^{\infty, \epsilon}\left(f_{\hat{\boldsymbol{\theta}}_{\text {final }}}\right) \leq Q\left([\sqrt{2}-1] \epsilon\left(d / n\right)^{1 / 4}\right)+e^{-\epsilon^2 c_2 \sqrt{d / n}} \leq 10^{-3}.
$$
for sufficiently large $\tilde C$.

\end{proof}

\section{More simulation results under Gaussian setting in Section \ref{sec:theory}}
\label{simu-gaussian}

In this section, we present more detailed simulation results under the Gaussian setting in Section \ref{sec:theory} to demonstrate different scenarios in Proposition \ref{theorem3}. 

\paragraph{Experimental setting}
\label{simu:setting}
The experimental pipeline is as follow: 1) learning a intermediate classifier $\hat{\boldsymbol{\theta}}_{\text{inter}}$ by $n$ label data $\mathcal{D}=\{(\boldsymbol{x}_1,y_1),\ldots, (\boldsymbol{x}_{n},y_{n})\}$, 2) generating $\tilde{n}$ synthetic data $\tilde{\boldsymbol{x}} \sim \widetilde{\mathcal{D}}_\mathcal{X}=0.5\mathcal{N}(\tilde{\boldsymbol{\mu}},\sigma^2 \mathbb{I})+0.5 \mathcal{N}(-\tilde{\boldsymbol{\mu}},\sigma^2 \mathbb{I})$, with $\tilde{\boldsymbol{\mu}}=c\boldsymbol{\mu}$, 3)assigning pseudo label for synthetic data using the intermediate classifier $\hat{\boldsymbol{\theta}}_{\text{inter}}$, 4) learning $\hat{\boldsymbol{\theta}}_{\text{final}}$ by adversarial training on $\tilde{n}$ synthetic data, 5) testing on 10k extra real data to obtain the clean accuracy and robust accuracy. For data dimension $d=100$, we set $\|\boldsymbol{\mu}\|^2=2$, $\varepsilon=0.5$, and for $d=100$, we set $\|\boldsymbol{\mu}\|^2=4$, $\varepsilon=0.1$.

Table \ref{t1} and Table \ref{t2} show the clean and robust accuracy learned on synthetic distribution $\tilde{\boldsymbol{\mu}}=c\boldsymbol{\mu}$ with different angles between $\boldsymbol{\mu}$ and $\mathbf{\epsilon}\mathbf{1}_d$. Table \ref{t4} shows the clean and robust accuracy learned on synthetic distribution $\tilde{\boldsymbol{\mu}}=c (\boldsymbol{\mu} - \varepsilon \mathbf{1}_d)$ with different angles between $\boldsymbol{\mu}$ and $\mathbf{\epsilon}\mathbf{1}_d$. Recall that $\boldsymbol{\mu}$ is (one of) the optimal linear classifier that maximizes the clean accuracy under the true distribution considered in Section \ref{sec:theory}, similarly $\boldsymbol{\mu} - \mathbf{\epsilon} \mathbf{1}_d$ is the optimal solution for robust accuracy. Therefore, different angles between $\boldsymbol{\mu}$ and $\mathbf{\epsilon}\mathbf{1}_d$ represent different trade-offs between the clean and robust accuracy. For example, when the angle between $\boldsymbol{\mu}$ and $\mathbf{\epsilon}\mathbf{1}_d$ is 0 degrees, i.e., $\boldsymbol{\mu} = c \mathbf{1}_d$, we have that the optimal solution for clean accuracy and robust accuracy are the same. In most cases, the classifier learned from the synthetic distribution that is most separable achieves better performance even than the iid samples, which verifies Proposition \ref{theorem3}.

\begin{table}[h]
\caption{The clean and robust accuracy learned on synthetic distribution $\tilde{\boldsymbol{\mu}}=c\boldsymbol{\mu}$ when $d=2$ and the angle between $\boldsymbol{\mu}$ and $\mathbf{\epsilon}$ is 0 degrees and 90 degrees. ``Real'' denotes the real data distribution, and $n$ denotes the number of data from the real distribution, while we use ``$c$'' to denote different synthetic distributions and use $\tilde{n}$ to denote the number of synthetic data. The results and the standard deviation in the bracket are obtained from 50 repetitions.}
\label{t1}
\begin{center}
\begin{small}

\begin{tabular}{l|lcc|cc}
\toprule & & \multicolumn{2}{c}{0 degree} & \multicolumn{2}{c}{90 degree} \\
&  & acc (std) & rob acc (std) & acc (std) & rob acc (std) \\
\midrule
   \multirow{4}[0]{*}{Real} & $n=10$ & 0.9201 (0.0012) & \textbf{0.7593} (0.0020) & 0.9171 (0.0046) & 0.7552 (0.0040) \\
  & $n=20$ & 0.9204 (0.0007) & \textbf{0.7598} (0.0016) & 0.9186 (0.0017) & 0.7563 (0.0012) \\
  & $n=50$ &0.9206 (0.0004) & \textbf{0.7605} (0.0007) & 0.9196 (0.0009) & 0.7566 (0.0006) \\
  & $n=100$ & 0.9205 (0.0004) & \textbf{0.7608} (0.0006) & 0.9199 (0.0006) & 0.7565 (0.0007) \\
  \midrule
  \multirow{4}[0]{*}{$c=0.5$} & $\tilde{n}=10$ & 0.9159 (0.0099 ) & 0.7541 (0.0096) & 0.9104 (0.0121) & 0.7492 (0.0122) \\
  & $\tilde{n}=20$ & 0.9179 (0.0047) & 0.7562 (0.0050) & 0.9161 (0.0052) & 0.7546 (0.0054) \\
  & $\tilde{n}=50$ & 0.9200 (0.0023) & 0.7586 (0.0024) &  0.9183 (0.0022) & 0.7570 (0.0022) \\
  & $\tilde{n}=100$ & 0.9213 (0.0011) & 0.7601 (0.0009) & 0.9193 (0.0012) & 0.7576 (0.0010) \\
   \midrule
  \multirow{4}[0]{*}{$c=1$} & $\tilde{n}=10$ &  0.9133 (0.0066) & 0.7502 (0.0061) & 0.9161 (0.0048) & \textbf{0.7598} (0.0048) \\
  & $\tilde{n}=20$ &0.9155 (0.0020) & 0.7516 (0.0019) & 0.9180 (0.0017) & \textbf{0.7612} (0.0020) \\
  & $\tilde{n}=50$ &0.9161 (0.0009) & 0.7525 (0.0006) & 0.9186 (0.0010) & \textbf{0.7620} (0.0006) \\
  & $\tilde{n}=100$ &0.9165 (0.0005) & 0.7528 (0.0006) & 0.9189 (0.0005) & \textbf{0.7622} (0.0003) \\
  \midrule
  \multirow{4}[0]{*}{$c=1.5$}& $\tilde{n}=10$ &  \textbf{0.9209} (0.0038) & 0.7523 (0.0025) & \textbf{0.9221} (0.0017) & 0.7583 (0.0015) \\
  & $\tilde{n}=20$ & \textbf{0.9228} (0.0010) & 0.7536 (0.0006 ) & \textbf{0.9226} (0.0013) & 0.7588 (0.0013) \\
  & $\tilde{n}=50$ & \textbf{0.9229} (0.0008) & 0.7538 (0.0005) & \textbf{0.9232} (0.0005) & 0.7594 (0.0006) \\
  & $\tilde{n}=100$ & \textbf{0.9232} (0.0003) & 0.7538 (0.0005) & \textbf{0.9233} (0.0005) & 0.7595 (0.0005) \\

\bottomrule
\end{tabular}

\end{small}
\end{center}
% \vskip -0.1in
\end{table}

\begin{table}[h]
\caption{The clean and robust accuracy learned on synthetic distribution $\tilde{\boldsymbol{\mu}}=c\boldsymbol{\mu}$ when $d=2$ and the angle between $\boldsymbol{\mu}$ and $\mathbf{\epsilon}$ is 30 degrees and 60 degrees. ``Real'' denotes the real data distribution, and $n$ denotes the number of data from the real distribution, while we use ``$c$'' to denote different synthetic distributions and use $\tilde{n}$ to denote the number of synthetic data. The results and the standard deviation in the bracket are obtained from 50 repetitions.}
\label{t2}
% \vskip 0.15in
\begin{center}
\begin{small}

\begin{tabular}{l|lcc|cc}
\toprule & & \multicolumn{2}{c}{30 degree} & \multicolumn{2}{c}{60 degree} \\
 &&  acc (std) & rob acc (std) & acc (std) & rob acc (std) \\
 \midrule
  \multirow{4}[0]{*}{Real} & $n=10$ &  0.8307 (0.0123) & 0.6343 (0.0283) & 0.8348 (0.0117) & 0.6378 (0.0293) \\
  & $n=20$ & 0.8353 (0.0055) & 0.6404 (0.0234) & 0.8391 (0.005) & 0.6433 (0.0222) \\
  & $n=50$ &0.8371 (0.0022) & 0.6450 (0.0168) & 0.8410 (0.0017) & 0.6494 (0.0134) \\
  & $n=100$ & 0.8385 (0.0010) & 0.6461 (0.0097) & 0.8413 (0.0013) & 0.6522 (0.0102) \\
 \midrule
  \multirow{4}[0]{*}{$c=0.5$}& $\tilde{n}=10$ &  0.8265 (0.0184 ) & 0.6282 (0.0418 ) & 0.8338 (0.0132 ) & 0.6303 (0.0335 ) \\
  & $\tilde{n}=20$ & 0.8299 (0.0129) & 0.6352 (0.0325) & 0.8365 (0.0132) & 0.6393 (0.0316) \\
  & $\tilde{n}=50$ & 0.8372 (0.0046) & 0.6483 (0.0215) & 0.8414 (0.0034) & 0.6489 (0.0199) \\
  & $\tilde{n}=100$ & 0.8402 (0.0015) & 0.6466 (0.0110) & 0.8431 (0.0012) & 0.6510 (0.0135) \\
 \midrule
  \multirow{4}[0]{*}{$c=1$} & $\tilde{n}=10$ &  0.8383 (0.0158) & 0.6439 (0.0319) & \textbf{0.8377} (0.0074) & 0.6396 (0.0267) \\
  & $\tilde{n}=20$ & 0.8425 (0.0060) & 0.6480 (0.0218) & \textbf{0.8416} (0.0034) & \textbf{0.6513} (0.0178) \\
  & $\tilde{n}=50$ & 0.8455 (0.0023) & 0.6553 (0.0128) & \textbf{0.8432} (0.0020) & \textbf{0.6503} (0.0122) \\
  & $\tilde{n}=100$ & 0.8457 (0.0021) & 0.6535 (0.0100) & \textbf{0.8435} (0.0014) & \textbf{0.6501}1 (0.0096) \\
 \midrule
  \multirow{4}[0]{*}{$c=1.5$} & $\tilde{n}=10$ &   \textbf{0.8431} (0.0045) & \textbf{0.6542} (0.0173) & 0.8368 (0.0073) & \textbf{0.6446} (0.0213) \\
  & $\tilde{n}=20$ & \textbf{0.8447} (0.0021) & \textbf{0.6542} (0.0142) & 0.8393 (0.0022) & 0.6479 (0.0150) \\
  & $\tilde{n}=50$ & \textbf{0.8455} (0.0006) & \textbf{0.6556} (0.0082) & 0.8404 (0.0005) & 0.6488 (0.0089) \\
  & $\tilde{n}=100$ & \textbf{0.8457} (0.0004) & \textbf{0.6547} (0.0057) & 0.8404 (0.0007) & 0.6486 (0.0082) \\

\bottomrule
\end{tabular}

\end{small}
\end{center}
% \vskip -0.1in
\end{table}

\begin{table}[h]
\caption{The clean and robust accuracy learned on synthetic distribution $\tilde{\boldsymbol{\mu}}=c\boldsymbol{\mu}$ when $d=100$ and the angle between $\boldsymbol{\mu}$ and $\mathbf{\epsilon}$ is 0 degrees. ``Real'' denotes the real data distribution, and $n$ denotes the number of data from the real distribution, while we use ``$c$'' to denote different synthetic distributions and use $\tilde{n}$ to denote the number of synthetic data. The results and the standard deviation in the bracket are obtained from 50 repetitions}
\label{t3}
% \vskip 0.15in
\begin{center}
\begin{small}

\begin{tabular}{l|lcc}
\toprule & &  acc (std) & rob acc (std) \\
\midrule
 \multirow{4}[0]{*}{Real} & $n=10$ & 0.9023 (0.0192) & 0.6843 (0.0359) \\
  & $n=20$ & 0.9341 (0.0128) & 0.7519 (0.0267) \\
  & $n=50$ & 0.9599 (0.0028) & 0.8078 (0.0061) \\
  & $n=100$ & 0.9682 (0.0014) & 0.8239 (0.0028) \\
 \midrule
  \multirow{4}[0]{*}{$c=0.5$} & $\tilde{n}=10$ & 0.7562 (0.0564) & 0.4611 (0.0694) \\
  & $\tilde{n}=20$ & 0.8566 (0.0307) & 0.6047 (0.0491) \\
  & $\tilde{n}=50$ & 0.9261 (0.0117) & 0.7328 (0.0227) \\
  & $\tilde{n}=100$ & 0.9505 (0.0047) & 0.7848 (0.0111) \\
 \midrule
  \multirow{4}[0]{*}{$c=1$} & $\tilde{n}=10$ & 0.8866 (0.0273) & 0.6557 (0.0487) \\
  & $\tilde{n}=20$ & 0.9371 (0.0091) & 0.7555 (0.0201) \\
  & $\tilde{n}=50$ & 0.9620 (0.0028) & 0.8085 (0.0060) \\
  & $\tilde{n}=100$ & 0.9695 (0.0012) & 0.8239 (0.0031) \\
 \midrule
  \multirow{4}[0]{*}{$c=1.5$} & $\tilde{n}=10$ & \textbf{0.9400} (0.0100) &\textbf{ 0.7603} (0.0233) \\
  & $\tilde{n}=20$ & \textbf{0.9591} (0.0037) & \textbf{0.8031} (0.0080) \\
  & $\tilde{n}=50$ & \textbf{0.9710} (0.0013) & \textbf{0.8280} (0.0028) \\
  & $\tilde{n}=100$ & \textbf{0.9743} (0.0008) & \textbf{0.8343} (0.0011) \\
\bottomrule
\end{tabular}
\end{small}
\end{center}
% \vskip -0.1in
\end{table}

\begin{table}[h]
\caption{The clean and robust accuracy learned on synthetic distribution $\tilde{\boldsymbol{\mu}}=c (\boldsymbol{\mu} - \varepsilon \mathbf{1}_d)$ when $d=2$ and the angle between $\boldsymbol{\mu}$ and $\mathbf{\epsilon}$ is 30 degrees and 60 degrees. ``Real'' denotes the real data distribution, and $n$ denotes the number of data from the real distribution, while we use ``$c$'' to denote different synthetic distributions and use $\tilde{n}$ to denote the number of synthetic data. The results and the standard deviation in the bracket are obtained from 50 repetitions.}
\label{t4}
% \vskip 0.15in
\begin{center}
\begin{small}

\begin{tabular}{l|lcc|cc}
\toprule & & \multicolumn{2}{c}{30 degree} & \multicolumn{2}{c}{60 degree} \\
 &&  acc (std) & rob acc (std) & acc (std) & rob acc (std) \\
 \midrule
  \multirow{4}[0]{*}{Real} & $n=10$ &  0.9152 (0.0049) & 0.7633 (0.0111) &\textbf{0.9211} (0.0034) & 0.7702 (0.0094)\\
  & $n=20$ & 0.9170 (0.0030) & 0.7642 (0.0075) &\textbf{0.9225} (0.0020) & 0.7714 (0.0068) \\
  & $n=50$ &0.9183 (0.0009) & 0.7653 (0.0040) & \textbf{0.9232} (0.0011) & 0.7711 (0.0050)\\
  & $n=100$ & 0.9185 (0.0006) & 0.7658 (0.0027) & \textbf{0.9235} (0.0009) & 0.7724 (0.0027)\\
 \midrule
  \multirow{4}[0]{*}{$c=0.5$}& $\tilde{n}=10$ &  0.9089 (0.0183) & 0.7563 (0.0310) & 0.9111 (0.0114) & 0.7638 (0.0172)\\
  & $\tilde{n}=20$ & 0.9144 (0.0068) & 0.7659 (0.0107) &0.9138 (0.0068) & 0.7694 (0.0066) \\
  & $\tilde{n}=50$ & 0.9174 (0.0029) & 0.7680 (0.0068) & 0.9161 (0.0038) & 0.7714 (0.0033) \\
  & $\tilde{n}=100$ & 0.9183 (0.0016) & 0.7681 (0.0053) & 0.9165 (0.0031) & 0.7727 (0.0014)\\
 \midrule
  \multirow{4}[0]{*}{$c=1$} & $\tilde{n}=10$ &  0.9135 (0.0116) & 0.7642 (0.0194) & 0.9069 (0.0111) & 0.7677 (0.0109)\\
  & $\tilde{n}=20$ & 0.9178 (0.0046) & 0.7710 (0.0073) & 0.9042 (0.0098) & 0.7676 (0.0072)\\
  & $\tilde{n}=50$ & 0.9183 (0.0042) & 0.7728 (0.0042) & 0.9073 (0.0047) & 0.7702 (0.0017)\\
  & $\tilde{n}=100$ & 0.9196 (0.0017) & 0.7733 (0.0036) & 0.9059 (0.0039) & 0.7698 (0.0016)\\
 \midrule
  \multirow{4}[0]{*}{$c=1.5$} & $\tilde{n}=10$ &   \textbf{0.9181} (0.0079) & \textbf{0.7747} (0.0104) & 0.9034 (0.0079) & \textbf{0.7704} (0.0053)\\
  & $\tilde{n}=20$ & \textbf{0.9209} (0.0053) & \textbf{0.7770} (0.0052) & 0.9077 (0.0059) & \textbf{0.7716} (0.0056)\\
  & $\tilde{n}=50$ & \textbf{0.9218} (0.0029) & \textbf{0.7788} (0.0028) & 0.9073 (0.0030) & \textbf{0.7722} (0.0014)\\
  & $\tilde{n}=100$ & \textbf{0.9222} (0.0017) & \textbf{0.7793} (0.0023) & 0.9077 (0.0024) & \textbf{0.7729} (0.0011)\\
\bottomrule
\end{tabular}

\end{small}
\end{center}
% \vskip -0.1in
\end{table}

% \begin{table}[h]
% \caption{The clean and robust accuracy learned on synthetic distribution $\tilde{\boldsymbol{\mu}}=c (\boldsymbol{\mu} - \varepsilon \mathbf{1}_d)$ when $d=2$ and the angle between $\boldsymbol{\mu}$ and $\mathbf{\epsilon}$ is 90 degrees. ``Real'' denotes the real data distribution, and $n$ denotes the number of data from the real distribution, while we use ``$c$'' to denote different synthetic distributions and use $\tilde{n}$ to denote the number of synthetic data. The results and the standard deviation in the bracket are the results of 50 independent trials.}
% \label{t90}
% \vskip 0.15in
% \begin{center}
% \begin{small}

% \begin{tabular}{l|lcc}
% \toprule & &  acc (std) & rob acc (std) \\
% \midrule

% \bottomrule
% \end{tabular}
% \end{small}
% \end{center}
% \vskip -0.1in
% \end{table}

\clearpage
\section{The detailed construction of the contrastive loss}\label{app:contra}

In this section, we first give a detailed description of several possible ways to design contrastive loss, especially in constructing positive and negative pairs. Then, we give a visualization of the synthetic data distributions generated under different contrastive losses.

\subsection{Positive and negative pair selection strategy.} 
\label{positive-pair}
In this subsection, we give several possible ways to construct positive and negative pairs.
\begin{enumerate}[leftmargin=*]
\item Vanilla version: Using all the samples in the minibatch is the common strategy for contrastive learning. In the diffusion process, since for each time step $t$, we want to distinguish each image from other images in the minibatch at the same time step, a straight-forward strategy is to use all the samples in the minibatch other than $\boldsymbol{x}^i_t$ at time step $t$ to be the negative pairs. For the positive pairs, we can simply adopt $\boldsymbol{x}_{t+1}^i$ to be the positive pairs rather than augmentation of $\boldsymbol{x}_{t}^i$.
\item Real data as positive pairs: A possible improvement upon the vanilla version is considering we aim to generate images similar to real data. Therefore, we can directly adopt the real data as positive pairs.
\item Real data as negative pairs: Another improvement upon the vanilla version is considering the other images in time step $t$ in the minibatch is not as high quality as the real data. Therefore, we can directly adopt the real data as the negative pairs.
\item Class conditional version: When we use conditional diffusion, and the class label of $\boldsymbol{x}_t$ in the minibatch is available, a further improvement can be adopted is to use all the samples with different class label $y$ in the minibatch at time step $t$ to be the negative pairs.
\end{enumerate}

\subsection{Visualization of the synthetic data distribution generated by different designs of the contrastive loss}
\label{2d-simulation}

In this subsection, we demonstrate the synthetic distributions generated by different designs of the contrastive loss mentioned in Section \ref{positive-pair} on the Gaussian setting mentioned in Section \ref{sec:theo_setup}. Figure \ref{contrastive-simulation1} shows the synthetic distribution generated by using $\mathcal{N}(\boldsymbol{0}, \mathbb{I})$ as initialization, while Figure \ref{contrastive-simulation2} shows the synthetic distribution generated by using $\mathcal{N}(\boldsymbol{0}, 4 \mathbb{I})$ as initialization. In all figures, all of the contrastive loss except for conditional hard negative mining form a circle within each class, which means these algorithms cannot explicitly distinguish the data within the same class and thus maximize the distance within each class, while the guidance from conditional hard negative mining can generate samples that are more distinguishable.

\begin{figure}[h]
	\centering
	\subfigure[InfoNCE]{	\includegraphics[width=.35\textwidth]{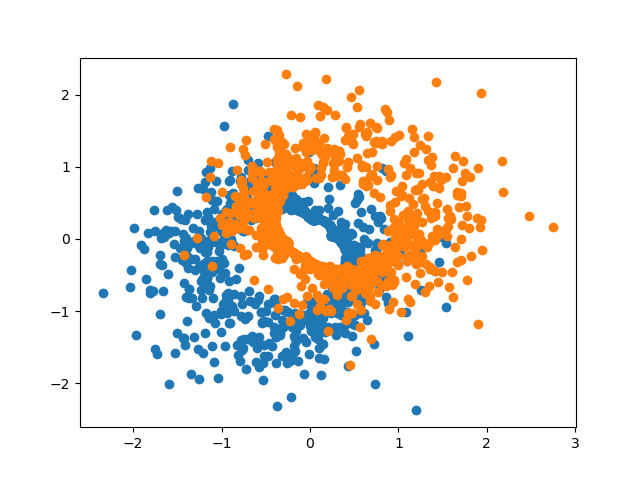}}
\subfigure[Hard negative mining]{\includegraphics[width=.35\textwidth]{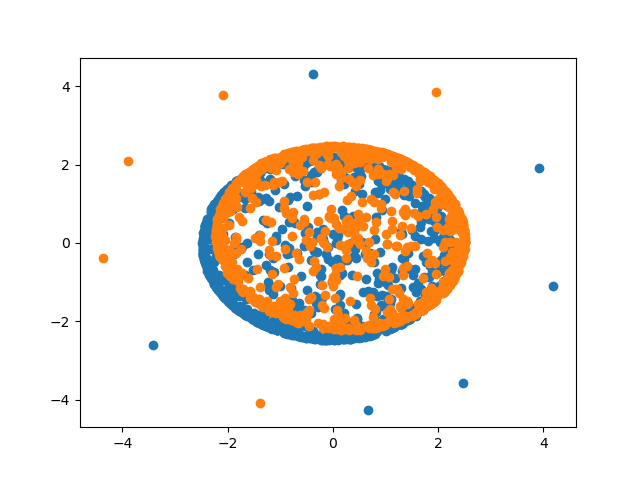}}
\\
\subfigure[Hard negative mining (real data as positive pair)]{\includegraphics[width=.35\textwidth]{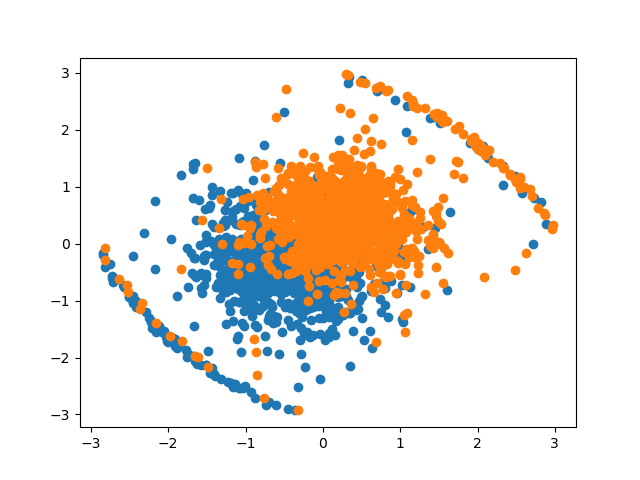}}
\subfigure[Hard negative mining (real data as negative pair)]{	\includegraphics[width=.35\textwidth]{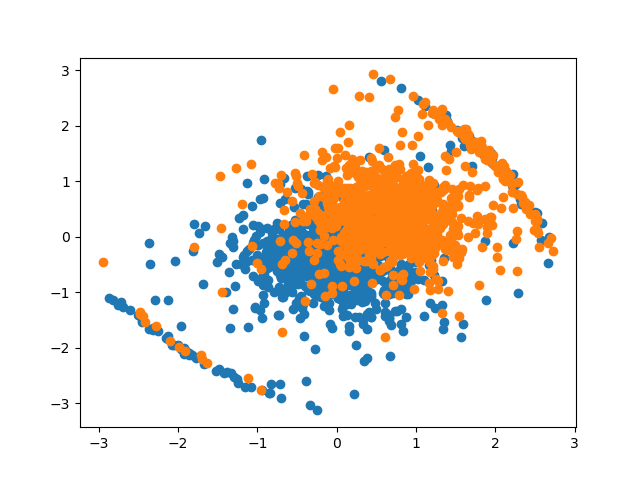}}
\\
\subfigure[Conditional inforNCE]{	\includegraphics[width=.35\textwidth]{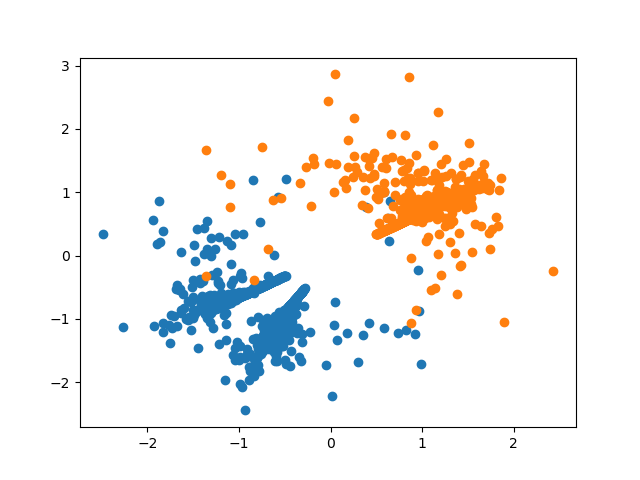}}
\subfigure[Conditional hard negative mining]{	\includegraphics[width=.35\textwidth]{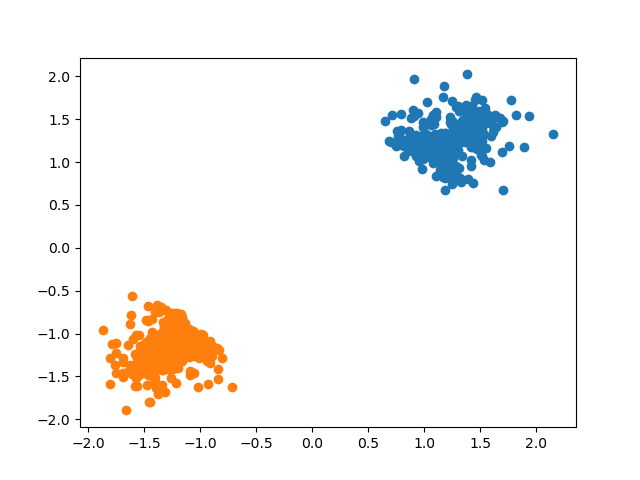}}

	\caption{A comparison of the synthetic distribution guided by different contrastive loss with initialization $\mathcal{N}(\boldsymbol{0},\mathbb{I})$. Real data as positive pair means using the mixture of oracle distribution $\mathcal{N}(\pm\mathbf{1}_d,\mathbb{I})$ and the data in the same batch as negative pair, while real data as negative pair means using the data in the same batch as positive pair and using the mixture of oracle distribution as negative pair.}
	\label{contrastive-simulation1}
\end{figure}

\begin{figure}[h]
	\centering
\subfigure[Diffusion]{	\includegraphics[width=.35\textwidth]{fig/diffusion_variance=4_0001.png}}
\subfigure[InfoNCE ]{	\includegraphics[width=.35\textwidth]{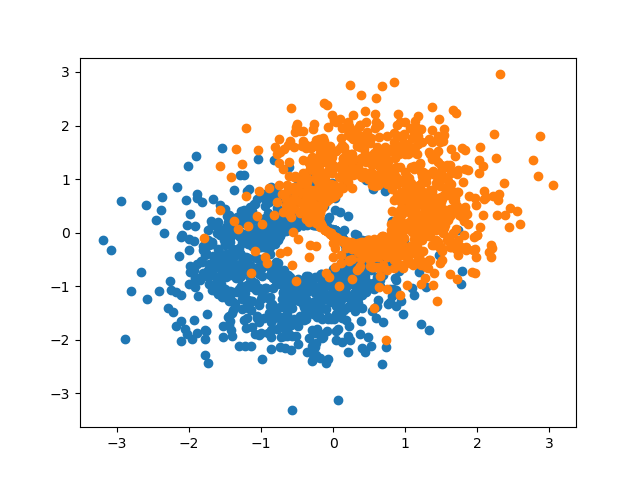}}
\\
\subfigure[Hard negative mining]{\includegraphics[width=.35\textwidth]{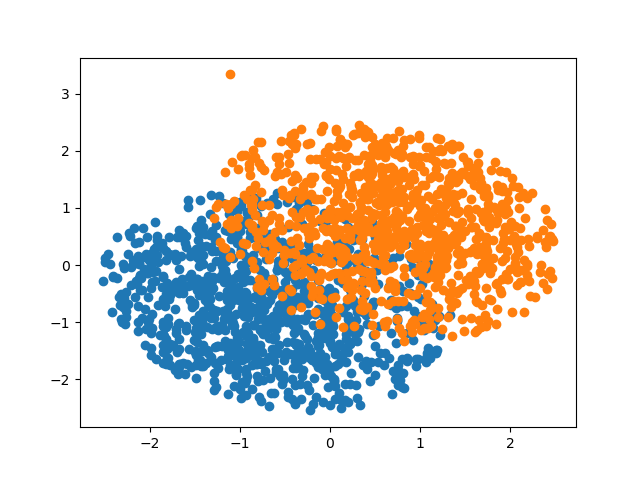}}
\subfigure[Hard negative mining (true data as positive pair)]{\includegraphics[width=.35\textwidth]{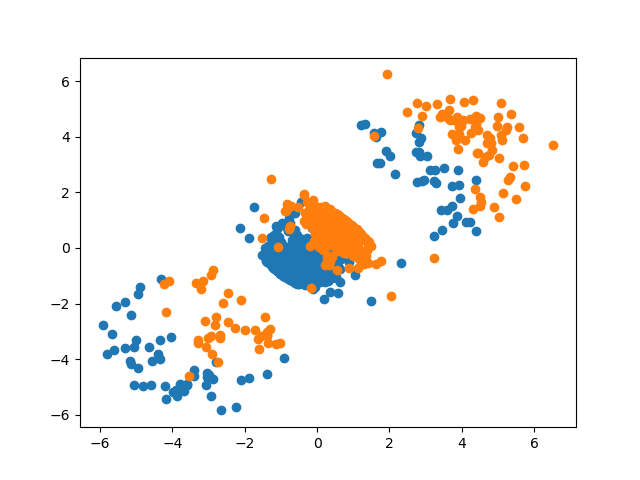}}
\\
\subfigure[Hard negative mining (true data as negative pair)]{	\includegraphics[width=.35\textwidth]{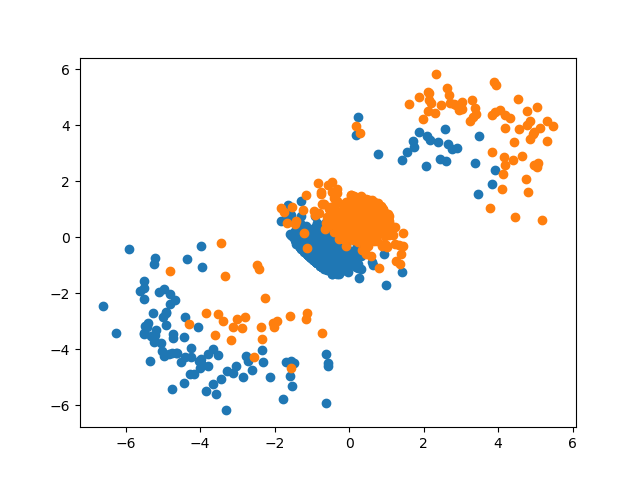}}
\subfigure[Conditional hard negative mining]{	\includegraphics[width=.35\textwidth]{fig/conditinoal_hardmining_minus_true_100_variance=4_0001.png}}

	\caption{A comparison of the synthetic distribution guided by different contrastive loss with initialization $\mathcal{N}(\boldsymbol{0}, 4 \mathbb{I})$.}
	\label{contrastive-simulation2}
\end{figure}

\clearpage
\section{The experimental results for the real datasets}
\label{experimental-setup}
\subsection{Experimental setup for MNIST dataset}
We describe the pipeline of synthetic data generation for adversarial robustness and a corresponding setting for the MNIST dataset in this subsection.

\paragraph{Dataset.} MNIST dataset \cite{LeCun1998GradientbasedLA} contains 60k 28$\cdot$28 pixel grayscale handwritten digits between 0 to 9 for training and 10k digits for testing.

\paragraph{Synthetic data generation by the diffusion model.} To utilize the pre-trained diffusion model \footnote{\href{https://github.com/VSehwag/minimal-diffusion\#how-useful-is-synthetic-data-from-diffusion-models-}{https://github.com/VSehwag/minimal-diffusion}}, we use a conditional DDPM for generating samples for MNIST dataset. We adopt the hard negative mining loss with $\tau=10$, the strength of guidance of the contrastive loss $\lambda=5k$, the probability of the same class in the minibatch $\tau^+=0.1$ and the hardness of negative mining $\beta=1$. We also use the pre-trained four layers Convolutional Neural Network model (removing the last fully connected layer) to get the representation for applying the contrastive loss and use a 2-layer feed-forward neural network to encode the representation after the pre-trained model.

\paragraph{Adversarial Training.} 
Since grayscale handwritten digits can be easily classified, we adopt four layers Convolutional Neural Network as the classifier instead of using the Wide ResNet-28-10 model. We adopt stochastic weight averaging \cite{Izmailov2018AveragingWL} with the decay rate 0.995 and use TRADES \cite{Zhang2019TheoreticallyPT} with 10 Projected Gradient Descent steps and $\varepsilon_\infty=0.3$ for 150 epochs with batch size 1024.

\subsection{Experimental setup for CIFAR-10 dataset}
We describe the pipeline of synthetic data generation for adversarial robustness and a corresponding setting for the CIFAR-10 dataset in this subsection.

\paragraph{Dataset.} CIFAR-10 dataset \cite{Krizhevsky2009LearningML} contains 50K 32$\cdot$32 color training images in 10 classes and 10K images for testing.

\paragraph{Overall training pipeline}
We follow the same training pipeline as \cite{gowal2021improving}, i.e., synthesizing data by using the diffusion model, assigning pseudo-label for synthetic data and aggregating the original data and the synthetic data for adversarial training. We give a careful explanation of these three components as follow.

\paragraph{Synthetic data generation by the diffusion model.} Considering the advantage of DDIM on generation speed, we base on the official implementation of the DDIM model \citep{Song2021DenoisingDI} and add the guidance of the contrastive loss. We generate images with 200 steps with batchsize=512, and use the quadratic version of sub-sequence selection \footnote{We refer to Appendix D.2 for a detailed explanation of the quadratic version.}. For the guidance of the contrastive loss, we try different designs of the contrastive loss mentioned in Section~\ref{contr-loss}. We set the temperature $\tau=0.1$ and the strength of guidance of the contrastive loss $\lambda=20k$ in the InfoNCE loss, while $\tau=10$, the strength of guidance of the contrastive loss $\lambda=100k$, the probability of the same class in the minibatch $\tau^+=0.1$ and the hardness of negative mining $\beta=1$ in hard negative mining loss. These corresponding hyperparameters are chosen based on some preliminary experiments on image generation. The detailed ablation studies can be found in Section~\ref{sec:ablation}. Moreover, we also delve into the representation used by contrastive loss. The default setting is to use the pre-trained Wide ResNet-28-10 model \cite{gowal2021improving} to get the representation for applying the contrastive loss, which is named as (without embedding) in Section~\ref{sec:ablation}. A further improvement is to apply a 2-layer feed-forward neural network to encode the representation after the pre-trained model, which is named as (with embedding). The advantage of the latter design is we can adopt the contrastive loss to optimize the encoding network rather than a fixed encoder.

\paragraph{LaNet for assigning pseudo-label.} Since the DDIM is an unconditional generator, we need to assign the pseudo-label to the generated sample. We follow the same choice adopted by \cite{sehwag2021improving}, i.e., using state-of-the-art LaNet \cite{Wang2019SampleEfficientNA} network for assigning the pseudo-label for the synthetic data.  

\paragraph{Adversarial Training.} 
We follow the same setting as \cite{gowal2021improving}, i.e., we use Wide ResNet-28-10 \cite{Zagoruyko2016WideRN} with Swish activation function \cite{Hendrycks2016GaussianEL}, adopt stochastic weight averaging \cite{Izmailov2018AveragingWL} with decay rate 0.995 and use TRADES \cite{Zhang2019TheoreticallyPT} with 10 Projected Gradient Descent steps and $\varepsilon_\infty=8/255$ for 400 epochs with batch size 1024\footnote{For Table \ref{tab:ablation-contrastive} in the ablation studies subsection, we use batch size with 256.}.

\paragraph{Evaluation setup}
For each trained model, we adopt AUTOATTACK \cite{Croce2020ReliableEO} with $\epsilon_\infty=8/255$.

\subsection{Experimental setup for Traffic Signs dataset}\label{experimental-setup}
We describe the pipeline of synthetic data generation for adversarial robustness and a corresponding setting for the Traffic Signs dataset in this subsection.

\paragraph{Dataset.} Traffic Signs dataset \cite{Houben-IJCNN-2013} contains 39252 training images in 43 classes and 12629 images for testing, and the image sizes vary between 15x15 to 250x250 pixels. 

\paragraph{Synthetic data generation by the diffusion model.} To utilize the pre-trained diffusion model \footnote{\href{https://github.com/VSehwag/minimal-diffusion\#how-useful-is-synthetic-data-from-diffusion-models-}{https://github.com/VSehwag/minimal-diffusion}}, we use a conditional DDPM for generating samples for Traffic Signs dataset. We adopt the hard negative mining loss with $\tau=10$, the strength of guidance of the contrastive loss $\lambda=5k$, the probability of the same class in the minibatch $\tau^+=0.1$ and the hardness of negative mining $\beta=1$. We also use the pre-trained Wide ResNet-28-10 model to get the representation for applying the contrastive loss and use a 2-layer feed-forward neural network to encode the representation after the pre-trained model.

\paragraph{Adversarial Training.} 
We follow the same setting as the CIFAR-10 dataset, except the training epochs are reduced to 50. We also extend the training epochs to 400 but do not find significant improvement.

\section{Ablation study\protect\footnote{In this section, the robust accuracy is reported by the worst accuracy obtained by either AUTOATTACK \cite{Croce2020ReliableEO} or AA+MT \cite{Gowal2020UncoveringTL}}.}
\subsection{The effectiveness of different contrastive losses.}\label{ab-contrastive}
% \vspace{-0.1in}

Table \ref{tab:ablation-contrastive} demonstrates the performance of different designs of the contrastive loss. We find out that applying the hard negative mining together with the embedding network achieves better clean and robust accuracy when the additional data is small (50K and 200K setting), while the infoNCE loss achieves better clean and robust accuracy when the additional data is large (1M setting). This result shows that we can improve the sample efficiency of the generative model by carefully designing the contrastive loss.

% \vspace{-0.18in}
\begin{table*}[ht!]
 \centering
 \caption{The performance of Contrastive-DP under different contrastive loss: infoNCE and HNM losses, and w/wo embedding denote with/without an embedding network.} %50K, 200k, and 1M denote the number of synthetic used for adversarial training.
% \vskip 0.15in
  \resizebox{0.8\textwidth}{!}{\begin{tabular}{lrrrrrr}
\toprule          & \multicolumn{2}{c}{50K} & \multicolumn{2}{c}{200K} & \multicolumn{2}{c}{1M} \\
\midrule
     & \multicolumn{1}{l}{clean acc} & \multicolumn{1}{l}{rob acc} & \multicolumn{1}{l}{clean acc} & \multicolumn{1}{l}{rob acc} & \multicolumn{1}{l}{clean acc} & \multicolumn{1}{l}{rob acc} \\
%\midrule
  DDIM+infoNCE & 83.40\% & 52.74\% & 84.18\% & 54.75\% & \textbf{85.64\%} & \textbf{56.28\%} \\
  DDIM+HNM(w embedding) & \textbf{84.20\%} & \textbf{53.19\%} & \textbf{85.71\%} & \textbf{54.92\%} & 85.29\% & 56.12\% \\
  DDIM+HNM(wo embedding) & 83.97\% & 52.89\% & 85.65\% & 54.83\% & 85.38\% & 55.95\%\\
    \bottomrule
  \end{tabular}}%
 \label{tab:ablation-contrastive}%
\end{table*}%

\subsection{Sensitivity of the strength of the contrastive loss}\label{ab-strength}
% \vspace{-0.04in}
Table \ref{tab:ablation-lambda} shows the influence of the strength of the contrastive loss. $\lambda=100k$ gives consistently better results than a smaller $\lambda=50k$ or a larger $\lambda=200k$ on robust accuracy on all settings. Moreover, we find the larger the $\lambda$ is, the better performance we get on clean accuracy when the additional data is small (50K case), while the smaller the $\lambda$ is, the better performance we get on clean accuracy when the additional data is large (1M case). %This reflects adding the contrastive loss to the diffusion process can significantly improve the sample efficiency.

% \vspace{-0.1in}
\begin{table}[ht!]
 \centering
 \caption{The performance of Contrastive-DP under different $\lambda$ values.}
%  \vskip 0.15in
\resizebox{0.6\textwidth}{!}{
  \begin{tabular}{lrrrrrr}
\toprule          & \multicolumn{2}{c}{50K} & \multicolumn{2}{c}{200K} & \multicolumn{2}{c}{1M} \\
\midrule
     & \multicolumn{1}{l}{clean acc} & \multicolumn{1}{l}{rob acc} & \multicolumn{1}{l}{clean acc} & \multicolumn{1}{l}{rob acc} & \multicolumn{1}{l}{clean acc} & \multicolumn{1}{l}{rob acc} \\
%\midrule
  % Baseline (DDIM) 50K & \textbf{84.37\%(0.40\%)} & 53.10\%(0.90\%) \\
  % Baseline (DDIM) 200K & \textbf{85.21\%(0.93\%)} & 55.14\%(0.21\%) \\
  % Baseline (DDIM) 1M & 85.73\%(0.51\%) & 56.77\%(0.23\%) \\
   $\lambda=50k$ & 84.41\% & 53.78\% & 85.45\% & 55.24\% & \textbf{86.35\%} & 56.83\%\\
  $\lambda=100k$ & 83.66\% & \textbf{53.91\%} & \textbf{85.71\%} & \textbf{55.79\%} & 86.30\% & \textbf{56.84\%} \\
  $\lambda=200k$ & \textbf{84.51\%} & 53.55\% & \textbf{85.51}\% & 55.33\% & 85.98\% & 56.69\%\\
    \bottomrule
  \end{tabular}}%
 \label{tab:ablation-lambda}%
\end{table}%

\subsection{Data selection for synthetic data}\label{data-selection}
Data selection methods are worthy of study since, in practice, we would like to know whether we can achieve better performance by generating a large number of samples and applying some selection criteria to filter out some samples. Therefore, we propose several data selection criterion and evaluate corresponding effectiveness in Table \ref{tab:ablation-selection}. All of the selection methods on Contrastive-DP are higher than vanilla DDIM plus selection methods, which demonstrates the superiority of using the contrastive learning loss as the guidance rather than using selection methods on the images generated by the vanilla diffusion model.

% \vspace{-0.1in}
\begin{table*}[ht!]
 \centering
 % \vspace{-0.1in}
 \caption{Comparison of different data selection criteria. The detailed explanation of each selection method can be found in Append \ref{data-selection}.}.
%  \vskip 0.15in

 \resizebox{0.8\textwidth}{!}{
  \begin{tabular}{lrrrrrr}
\toprule          & \multicolumn{2}{c}{50K} & \multicolumn{2}{c}{200K} & \multicolumn{2}{c}{1M} \\
\midrule
     & \multicolumn{1}{l}{clean acc} & \multicolumn{1}{l}{rob acc} & \multicolumn{1}{l}{clean acc} & \multicolumn{1}{l}{rob acc} & \multicolumn{1}{l}{clean acc} & \multicolumn{1}{l}{rob acc} \\
%\midrule
  DDIM (Separability) & 79.93\% &	49.49\% & 85.09\% &	54.90\% & 84.87\% &	56.08\%\\
  Contrastive-DP (Gradient norm) & 80.41\% &	49.47\% & 84.64\% &	 55.17\% & \textbf{86.36}\%	& 57.11\%\\
  Contrastive-DP (Gradient norm-rob) & 83.91\% & \textbf{55.23\%} & 84.78\% & 55.42\% & 85.93\% & \textbf{57.18\%}\\
  Contrastive-DP (Entropy) & \textbf{83.66\%} & 53.91\% & \textbf{85.71\%} & \textbf{55.79\%} & 86.30\% & 56.84\%\\
    \bottomrule
  \end{tabular}}%
 \label{tab:ablation-selection}%
\end{table*}%

Below we summarize different data selection methods:
\begin{itemize}[leftmargin=*]
  \item DDIM (Separability): We adopt the separability of the data as a criterion to make the selection of the data generated by vanilla DDIM. For each data, we use a pre-trained WRN-28-10 model to encode them into the embedding space. Then, we compute the L2 distance between each sample and the centroid of all classes (which is easily computed as the mean of all samples in this class) and add them together. To select a subset of samples that are most distinguishable, we choose the top K samples that have the smallest distance in each class.
  \item Contrastive-DP (Gradient norm): We use the gradient norm with respect to a pre-trained WRN-28-10 model as a criterion to make the selection on the data generated by Contrastive-DP. The larger the gradient norm is, the more informative the sample is for learning a downstream model. Therefore, we select the top $K$ samples that have the largest gradient norm in each class.
  \item Contrastive-DP (Gradient norm-rob): Similar to Contrastive-DP (Gradient norm), we use the gradient norm of the robust loss rather than standard classification loss as a criterion to make the selection on the data generated by Contrastive-DP. Therefore, we select the top $K$ samples that have the largest gradient norm in each class.
  \item Contrastive-DP (Entropy): We use the entropy of each sample with respect to LaNet as a criterion to make the selection on the data generated by Contrastive-DP. The smaller the entropy is, the higher likelihood this image has good quality. Therefore, we select the top $K$ samples that have the smallest entropy in each class.

\end{itemize}

\section{Additional experiments}
% \subsection{Experiments on Cifar-100 dataset}

\subsection{Changing the base adversarial training algorithm}
We mainly adopt the TRADES \cite{Zhang2019TheoreticallyPT} for adversarial training on synthetic data together with real training data. A question is whether Contrastive-DP algorithm can also have good performance using vanilla adversarial training algorithm \citep{Madry2017TowardsDL}. Table \ref{tab:expAT} demonstrates Contrastive-DP also shows advantages against vanilla DDPM and DDIM by different base adversarial training algorithms.

\begin{table}[htbp]
 \centering
 \caption{The clean and adversarial accuracy on CIFAR-10 dataset. The robust accuracy is reported by AUTOATTACK \citep{Croce2020ReliableEO} with $\epsilon_\infty=8/255$ and WRN-28-10. 50k, 200k, and 1M denote the number of synthetic used for adversarial training.}
% \vskip 0.15in
  \begin{tabular}{l|rr|rr|rr}
\toprule 
      & \multicolumn{2}{c|}{50K} & \multicolumn{2}{c|}{200K} & \multicolumn{2}{c}{1M}  \\
      & \multicolumn{1}{l}{clean acc} & \multicolumn{1}{l|}{rob acc} & \multicolumn{1}{l}{clean acc} & \multicolumn{1}{l|}{rob acc} & \multicolumn{1}{l}{clean acc} & \multicolumn{1}{l}{rob acc}\\
\midrule
  DDIM  & 87.84\% & \textbf{54.97\%} & \textbf{89.19\%} & 53.79\% & 88.91\% & 55.10\%\\
  Contrastive-DP & \textbf{88.50\%} & 54.74\% & 88.26\% & \textbf{54.20\%} & \textbf{89.43\%} & \textbf{55.31\%} \\
  \midrule
  DDPM  & 88.19\% & 53.32\% & 89.21\% & 54.16\% & \textbf{89.98\%} & 54.17\% \\
  Contrastive-DP &\textbf{88.99\%} & \textbf{53.67\%} & \textbf{89.55\%} & \textbf{54.85\%} & 89.97\% & \textbf{55.82\%}  \\
   
    \bottomrule
  \end{tabular}
 \label{tab:expAT}%
\end{table}%

\subsection{Comparison with adversarial self-supervised learning}
In the main paper, we only give the comparison of Contrastive-DP with the state-of-the-art method of adversarial robustness \citep{gowal2021improving} by using the diffusion model to generate synthetic data. Since contrastive learning is also used in adversarial self-supervised learning literature \citep{Kim2020AdversarialSC,Fan2021WhenDC,Zhang2022DecoupledAC}, we give a detailed comparison with these methods in Table \ref{tab:advsef}, which also demonstrates the effectiveness of Contrastive-DP.

\begin{table}[htbp]
 \centering
 \caption{The clean and adversarial accuracy on CIFAR-10 dataset. The robust accuracy is reported by AUTOATTACK \citep{Croce2020ReliableEO} with $\epsilon_\infty=8/255$.}
% \vskip 0.15in
  \begin{tabular}{l|r}
\toprule 
& rob acc\\
\midrule
RoCL \citep{Kim2020AdversarialSC}& 47.88\% \\
AdvCL \citep{Fan2021WhenDC}& 49.77\% \\
DeACL \citep{Zhang2022DecoupledAC}& 50.39\% \\
Contrastive-DP& 59.99\% \\
    \bottomrule
  \end{tabular}
 \label{tab:advsef}%
\end{table}%

\end{document}